\documentclass{article}

\PassOptionsToPackage{numbers,compress}{natbib}

\usepackage[preprint]{neurips_2025}

\usepackage[utf8]{inputenc} %
\usepackage[T1]{fontenc}    %
\usepackage{hyperref}       %
\usepackage{url}            %
\usepackage{booktabs}       %
\usepackage{amsfonts}       %
\usepackage{nicefrac}       %
\usepackage{microtype}      %
\usepackage[dvipsnames]{xcolor}
\usepackage{graphicx}
\usepackage{caption}
\usepackage{subcaption}
\usepackage{float}
\usepackage{wrapfig}
\usepackage{soul}
\usepackage[symbol]{footmisc}

\hypersetup{
    colorlinks,
    linkcolor={red!50!black},
    citecolor={blue!50!black},
    urlcolor={blue!80!black}
}

\usepackage{minitoc}

\doparttoc                   %
\setcounter{tocdepth}{3}     %
\setcounter{secnumdepth}{3}  %
\mtcsetdepth{parttoc}{3}

\usepackage{algorithm}
\usepackage[noend]{algpseudocode}

\usepackage{amsmath}
\usepackage{amssymb}
\usepackage{mathtools}
\usepackage{amsthm}
\usepackage{multirow}

\usepackage[capitalize,noabbrev]{cleveref}
\crefname{appendix}{App.}{Apps.}
\crefname{section}{Sec.}{Secs.}
\Crefname{section}{Sec.}{Sections}
\Crefname{table}{Tab.}{Tabs.}
\crefname{table}{Tab.}{Tabs.}
\Crefname{figure}{Fig.}{Figs.}
\Crefname{equation}{Eq.}{Eqs.}

\usepackage{amsmath,amsfonts,bm}

\def\eqref#1{equation~\ref{#1}}

\def\1{\bm{1}}

\def\rvf{{\mathbf{f}}}

\def\rvv{{\mathbf{v}}}

\def\rvx{{\mathbf{x}}}
\def\rvy{{\mathbf{y}}}
\def\rvz{{\mathbf{z}}}

\def\rmD{{\mathbf{D}}}

\def\rmF{{\mathbf{F}}}

\def\rmI{{\mathbf{I}}}

\def\mI{{\bm{I}}}

\DeclareMathAlphabet{\mathsfit}{\encodingdefault}{\sfdefault}{m}{sl}
\SetMathAlphabet{\mathsfit}{bold}{\encodingdefault}{\sfdefault}{bx}{n}

\theoremstyle{plain}
\newtheorem{theorem}{Theorem}[section]

\newtheorem{corollary}[theorem]{Corollary}
\theoremstyle{definition}

\theoremstyle{remark}

\title{Align Your Flow:\\Scaling Continuous-Time Flow Map Distillation}

\author{%
  Amirmojtaba Sabour$^{1,2,3}$ \\
  \And
  Sanja Fidler$^{1,2,3}$ \\
  \And
  Karsten Kreis$^{1}$ \\
}

\begin{document}

\maketitle

\begin{center}
    \vspace{-8mm}
    \small 
    $^1$ NVIDIA \hspace{5mm}
    $^2$ University of Toronto \hspace{5mm}
    $^3$ Vector Institute
\end{center}

\begin{center}   
    \vspace{0mm}
    \small{
    \textit{Project Page: }
    \href{https://research.nvidia.com/labs/toronto-ai/AlignYourFlow/}{
    \textcolor{RubineRed}{https://research.nvidia.com/labs/toronto-ai/AlignYourFlow/}}
    }
\end{center}

\doparttoc %
\faketableofcontents %

\begin{figure*}[h]
    \centering
    \includegraphics[width=\linewidth]{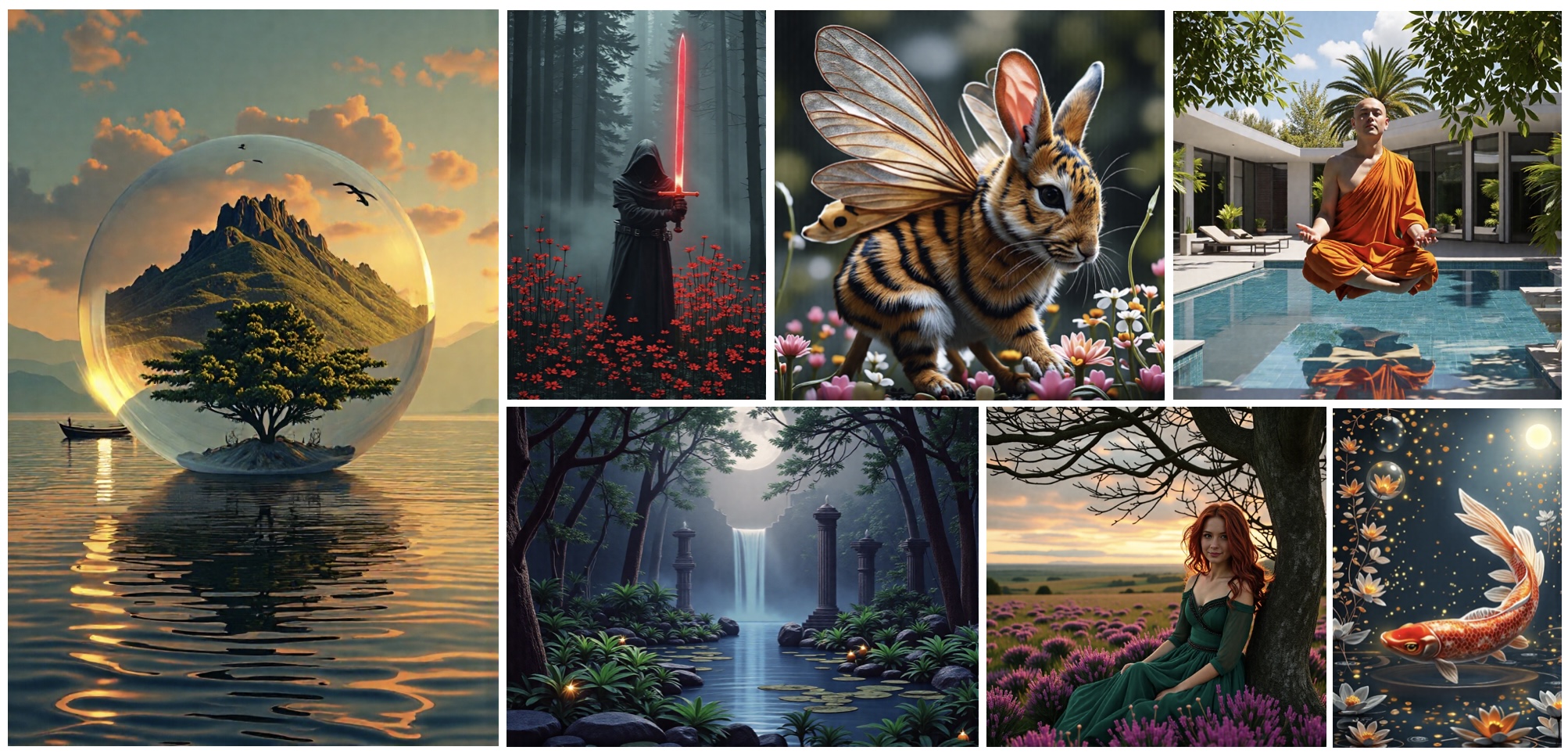} 
    \caption{Four-step samples by our distilled text-conditioned flow map model (prompts in Appendix).} %
    \label{fig:teaser}
\end{figure*}

\begin{abstract}

Diffusion- and flow-based models have emerged as state-of-the-art generative modeling approaches, but they require many sampling steps. 
Consistency models can distill these models into efficient one-step generators; however, unlike flow- and diffusion-based methods, their performance inevitably degrades when increasing the number of steps, which we show both analytically and empirically.
Flow maps generalize these approaches by connecting any two noise levels in a single step and remain effective across all step counts. 
In this paper, we introduce two new continuous-time objectives for training flow maps, along with additional novel training techniques, generalizing existing consistency and flow matching objectives. We further demonstrate that autoguidance can improve performance, using a low-quality model for guidance during distillation, and an additional boost can be achieved by adversarial finetuning, with minimal loss in sample diversity.
We extensively validate our flow map models, called \textit{Align Your Flow}, on challenging image generation benchmarks and achieve state-of-the-art few-step generation performance on both ImageNet 64x64 and 512x512, using small and efficient neural networks. 
Finally, we show text-to-image flow map models that outperform all existing non-adversarially trained few-step samplers in text-conditioned synthesis.

\end{abstract}

\section{Introduction}

\vspace{-3mm}
Diffusion and flow-based generative models have revolutionized generative modeling~\citep{ho2020ddpm,song2020score,lipman2022flow,Saharia2022PhotorealisticTD,esser2024sd3,brooks2024sora}, but they rely on slow iterative sampling. This has led to the development of approaches to accelerate generation. Advanced, higher-order samplers~\cite{ddim,lu2022dpm,lu2022dpm2,dockhorn2022genie,zhang2022fast,edm,sabour2024align} help, but cannot produce high quality outputs with ${<}10$ steps. Distillation techniques~\cite{salimans2022progressive,sauer2025adversarial,yin2024one,song2024multistudentdiffusiondistillationbetter}, in contrast, can successfully distill models into few-step generators.
In particular, consistency models~\cite{cm,ict,scm} and a variety of related techniques~\cite{ctm,wang2024stable,Wang2024PhasedCM,lee2024truncated,zheng2024trajectory,Frans2024OneSD,heek2024multistep} have gained much attention recently. Consistency models learn to transfer samples that lie on teacher-defined deterministic noise-to-data paths to the same, consistent clean outputs in a single prediction. 
These approaches excel in few step generation, but have been empirically shown to degrade in performance as the number of steps increases. 
\looseness=-1

In this work, we analytically show that consistency models are inherently incompatible with multi-step sampling. Specifically, we show that their objective of strictly predicting clean outputs inevitably leads to error accumulation over multiple denoising steps. 
Motivated by this limitation, we turn to the \textbf{flow map} formulation as a unifying and more robust alternative. The flow map framework - also known as Consistency Trajectory Models - was introduced in \cite{ctm,Boffi2024FlowMM} and encompasses diffusion and flow-based models~\cite{lipman2022flow}, consistency models~\cite{cm,ict,scm}, and other distillation variants~\cite{Wang2024PhasedCM,Frans2024OneSD,zheng2024trajectory,zhou2025inductive} within a single coherent formulation. Flow maps allow connecting any two noise levels in a single step, enabling efficient few-step sampling as well as flexible multi-step sampling.
As flow maps, figuratively speaking, learn a mapping that ``aligns the teacher flow'' into a few-step sampler, we call our approach \textit{Align Your Flow (AYF)}. 
We propose two new continuous-time training objectives, which can be interpreted as AYF's versions of the Eulerian and Lagrangian losses described by \citet{Boffi2024FlowMM}. 
The new objectives use a consistency condition at either the beginning or the end of a denoising interval. Notably, the first of our objectives generalizes both the continuous-time consistency loss~\cite{cm,scm} and the flow matching loss~\cite{lipman2022flow}.
While regular consistency models only perform well for single- or two-step generation and degrade for multi-step sampling, e.g. for 4 steps or more, flow map models such as AYF produce high-quality outputs in this multi-step setting, too.

To scale AYF to high performance, we leverage the recently proposed autoguidance~\cite{karras2024auto}, where a low-quality guidance model checkpoint is used together with the regular model to produce a model with enhanced quality. Specifically, we propose to distill an autoguided teacher model into an AYF student and introduce several practical techniques that stabilize flow map training and push performance further. 
Moreover, unlike prior distillation approaches that rely on adversarial training to boost quality at the expense of sample diversity~\cite{sauer2025adversarial,sauer2024fast,yin2024one,yin2024improved,ctm}, we show that a short finetuning of a pretrained AYF model with a combination of our proposed flow map objective and an adversarial loss is sufficient to yield significantly sharper images with minimal impact on diversity.\looseness=-1

\begin{figure}[t!]
    \centering
    \vspace{-5mm}
    \includegraphics[width=1.0\linewidth]{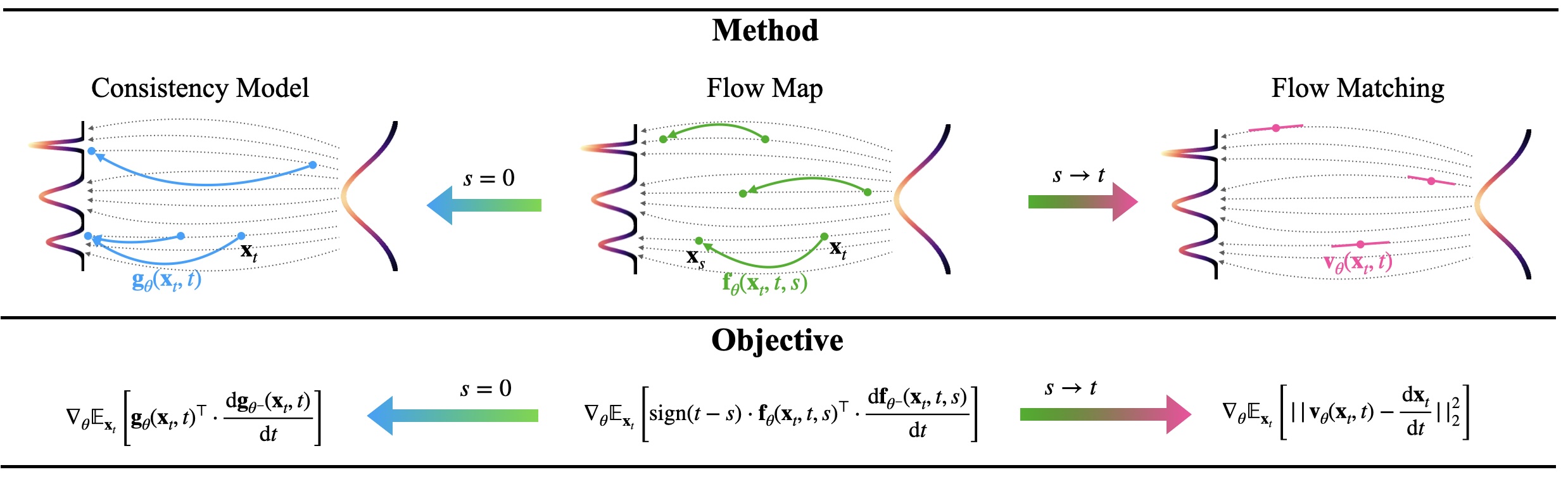} 
    \caption{\textbf{Overview of Flow Maps}. Flow maps generalize both consistency models and flow matching by connecting any two noise levels $(s, t)$ in a single step. When $s=0$, flow maps reduce to consistency models; when $s \to t$ they're equivalent to standard flow matching models. Our proposed EMD objective (see \Cref{thm:emd_loss}) similarly generalizes the continuous-time consistency and flow matching losses. For detailed derivations, please see the Appendix.
    }
    \label{fig:overview_fig}
    \vspace{-4mm}
\end{figure}

We validate AYF on popular image generation benchmarks and achieve state-of-the-art performance among few-step generators on both ImageNet 64x64 and 512x512, while using only small and efficient neural networks (\Cref{fig:img512_samples}). For instance, 4-step sampling of AYF's ImageNet models is as fast or faster than previous works' single step generation. Additionally, our adversarially finetuned AYF also achieves significantly higher diversity compared to other adversarial training approaches.
We further distill the popular FLUX.1 model~\cite{flux2023} and obtain text-to-image AYF flow map models that significantly outperform all existing non-adversarially trained few-step generators in text-conditioned synthesis (\Cref{fig:teaser}). For these experiments, we use an efficient LoRA~\cite{Hu2021LoRALA} framework, avoiding the overhead of many previous text-to-image distillation approaches.%

\textbf{Contributions.} 
\textit{(i)} We prove that consistency models inherently suffer from error accumulation in multi-step sampling.  
\textit{(ii)} We propose \textit{Align Your Flow}, a high-performance few-step flow map model with new theoretical insights.  
\textit{(iii)} We introduce two new training objectives and stabilization techniques for flow map learning.  
\textit{(iv)} We apply autoguidance for distillation for the first time and show that adversarial finetuning further boosts performance with minimal loss in diversity.  
\textit{(v)} We achieve state-of-the-art few-step generation performance on ImageNet, and we also show fast high-resolution text-to-image generation, outperforming all non-adversarial methods in this task.  

\begin{figure*}[t!]
    \centering
    \includegraphics[width=\linewidth]{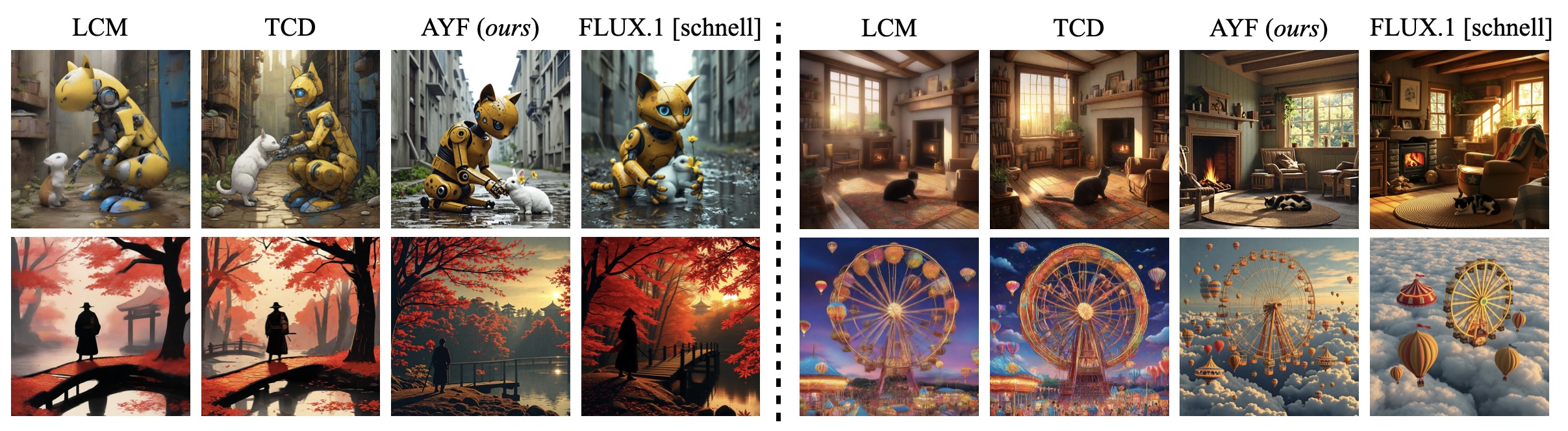} 
    \caption{Samples (4 steps): LCM~\cite{Luo2023LCMLoRAAU}, TCD~\cite{zheng2024trajectory}, FLUX.1 [schnell]~\cite{flux2023}, AYF (view zoomed in).}
    \label{fig:flux_qualitative}
    \vspace{-5mm}
\end{figure*}

\vspace{-3mm}
\section{Background}
\vspace{-3mm}
\textbf{Diffusion Models and Flow Matching.}
Diffusion models are probabilistic generative models that inject noise into the data with a forward diffusion process and generate samples by learning and simulating a time-reversed backward diffusion process, initialized by pure Gaussian noise.  
Flow matching~\cite{lipman2022flow,liu2023flow,albergo2023building,albergo2023stochastic,kingma2023understanding} is a generalization of these methods that eliminates the requirement of the noise being Gaussian and allows learning a continuous flow between any two distributions $p_0, p_1$ that converts samples from one to the other. 

Denote the data distribution by $\rvx_0 \sim p_{\textrm{data}}$ and the noise distribution by $\rvx_1 \sim p_{\textrm{noise}}$. Let $\rvx_t = (1 - t)\cdot \rvx_0 + t \cdot \rvx_1$ indicate the noisy samples of the data for time $t \in [0, 1]$, corresponding to the rectified flow~\cite{liu2023flow} or conditional optimal transport~\cite{lipman2022flow} formulation. The flow matching training objective is then given by $\mathbb{E}_{\rvx_0, \rvx_1, t}\left[w(t) ||\rvv_\theta(\rvx_t, t) - (\rvx_1 - \rvx_0)||_2^2\right]$; $w(t)$ is a weighting function and $\rvv_\theta$ is a neural network parametrized by $\theta$. \looseness=-1 
The standard sampling procedure starts at $t = 1$ by sampling $\rvx_1 \sim p_{\textrm{noise}}$. Then the probability flow ODE (PF-ODE), defined by $\frac{d\rvx_t}{dt} = \rvv_\theta(\rvx_t, t) dt$, is simulated from $t = 1$ to $t = 0$ to obtain the final outputs. We will assume to be in the flow matching framework from this point on of the paper.

\textbf{Consistency Models.}
Consistency models (CM)~\cite{cm} train a neural network $\mathbf{f}_\theta(\rvx_t, t)$ to map noisy inputs $\rvx_t$ directly to their corresponding clean samples $\rvx_0$, following the PF-ODE. Consequently, $\mathbf{f}_\theta(\rvx_t, t)$ must satisfy the \textit{boundary condition} $\mathbf{f}_\theta(\rvx, 0) = \rvx$, which is typically enforced by 
parameterizing $\mathbf{f}_\theta(\rvx_t, t) = c_{\textrm{skip}}(t) \rvx_t + c_{\textrm{out}}(t) \mathbf{F}_\theta(\rvx_t, t)$ with $c_{\textrm{skip}}(0) = 1, c_{\textrm{out}}(0) = 0$. CMs are trained to have consistent outputs between adjacent timesteps. They can be trained from scratch or distilled from given diffusion or flow models. In this work, we are focusing on distillation. Depending on how time is dealt with, CMs can be split into two categories:%

\textbf{Discrete-time CMs.}
The training objective is defined between adjacent timesteps as
\begin{equation} \label{eq:discrete_cm}
    \mathbb{E}_{\rvx_t, t}\left[ w(t) d(\mathbf{f}_\theta(\rvx_t, t), \mathbf{f}_{\theta^-}(\rvx_{t - \Delta t}, t - \Delta t)) \right],
\end{equation}
where $\theta^-$ denotes $\textit{stopgrad}(\theta)$, $w(t)$ is a weighting function, $\Delta t > 0$ is the distance between adjacent timesteps, and $d(., .)$ is a distance function. Common choices include $\ell_2$ loss $d(\rvx, \rvy) = ||\rvx - \rvy||_2^2$, Pseudo-Huber loss $d(\rvx, \rvy) = \sqrt{||\rvx - \rvy||_2^2 + c^2} - c$~\cite{ict}, and LPIPS loss~\cite{lpips}. 
Discrete-time CMs are sensitive to the choice of $\Delta t$, and require manually designed annealing schedules~\cite{cm, geng2024consistency}.
The noisy sample $\rvx_{t - \Delta t}$ at the preceding timestep $t - \Delta t$ is often obtained from $\rvx_t$ by numerically solving the PF-ODE, which can cause additional discretization errors.  

\textbf{Continuous-time CMs.}
When using $d(\rvx, \rvy) = ||\rvx - \rvy||_2^2$ and taking the limit $\Delta t \to 0$, \citet{cm} show that the gradient of \Cref{eq:discrete_cm} with respect to $\theta$ converges to 
\begin{equation}
    \nabla_\theta \mathbb{E}_{\rvx_t, t}\left[ w(t) \mathbf{f}_\theta^\top(\rvx_t, t) \frac{\mathrm{d} \mathbf{f}_{\theta^-}(\rvx_t, t)}{\mathrm{d}t} \right],
\end{equation}
where $\frac{\mathrm{d} \mathbf{f}_{\theta^-}(\rvx_t, t)}{\mathrm{d}t} = \nabla_{\rvx_t} \mathbf{f}_{\theta^-}(\rvx_t, t) \frac{\mathrm{d} \rvx_t}{\mathrm{d}t} + \partial_t \mathbf{f}_{\theta^-}(\rvx_t, t)$ is the tangent of $\mathbf{f}_{\theta^-}$ at $(\rvx_t, t)$ along the trajectory of the PF-ODE $\frac{\mathrm{d}\rvx_t}{\mathrm{d}t}$. 
This means continuous-time CMs do not need to rely on numerical ODE solvers which avoids discretization errors and offers better supervision signals during training. 
Recently, \citet{scm} successfully stabilized and scaled continuous-time CMs and achieved significantly better results compared to the discrete-time approach.

\begin{wrapfigure}{r}{0.5\textwidth}
    \centering
    \vspace{-5mm}
    \includegraphics[width=\linewidth]{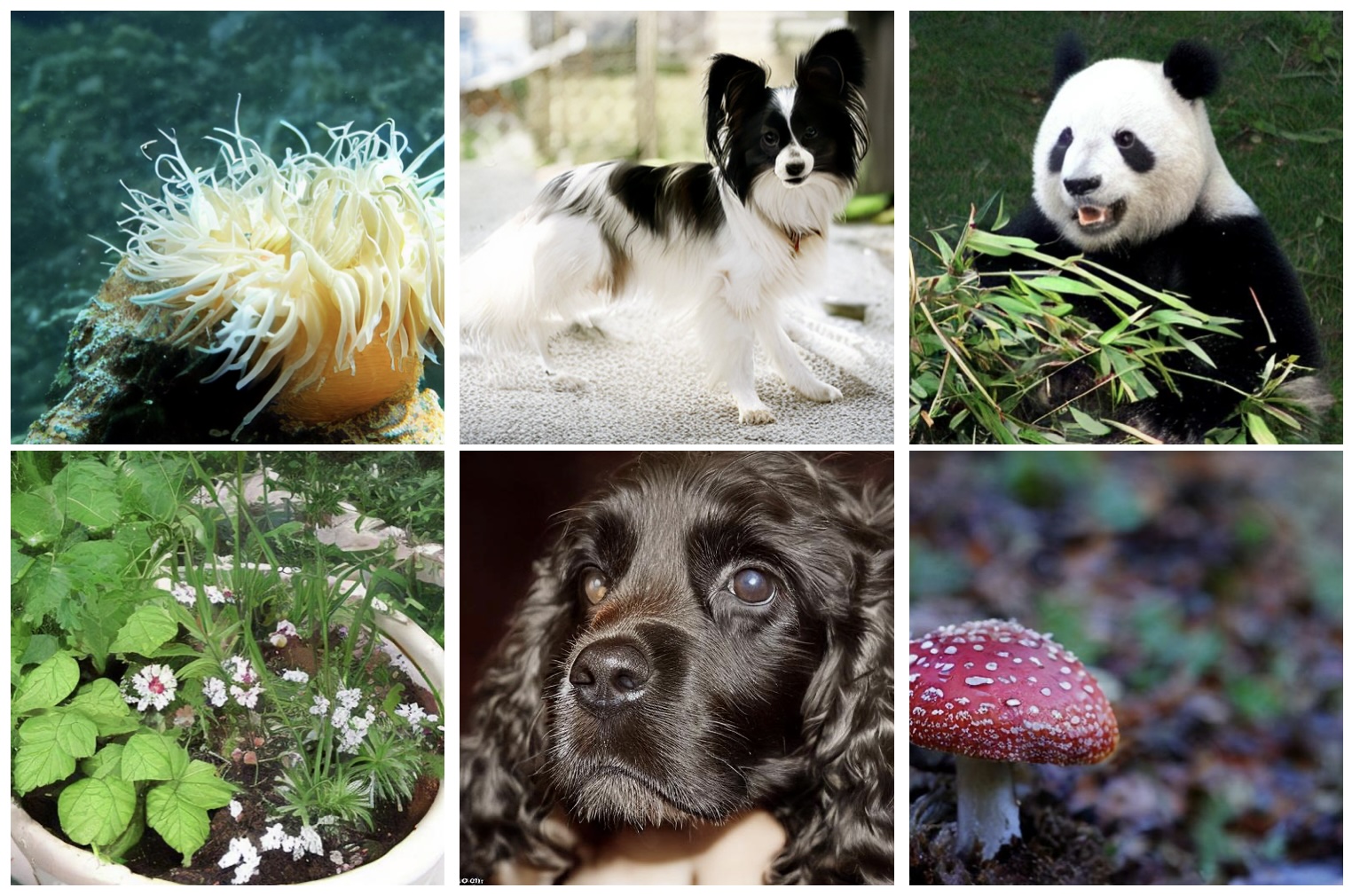}
    \caption{Two-step AYF samples on ImageNet512.}
    \vspace{-5mm}
    \label{fig:img512_samples}
\end{wrapfigure}

\vspace{-1mm}
\section{Continuous-Time Flow Map Distillation} \label{sec:method}
\vspace{-2mm}
Flow maps generalize diffusion, flow-based and consistency models within a single unified framework by training a neural network $\mathbf{f}_\theta(\rvx_t, t, s)$ to map noisy inputs $\rvx_t$ directly to any point $\rvx_s$ along the PF-ODE in a single step. Unlike consistency models, which only perform well for single- or two-step generation but degrade in multi-step sampling, flow maps remain effective at all step counts.%

In \Cref{sec:cm_multistep_bad}, we first show that standard consistency models are incompatible with multi-step sampling, leading to inevitable performance degradation beyond a certain step count.
Next, in \Cref{sec:fmm_losses}, we introduce two novel continuous-time objectives for distilling flow maps from a pretrained flow model.
Finally, in \Cref{sec:autoguidance}, we explain how we leverage autoguidance to sharpen the flow map.
\Cref{sec:training_tricks} addresses implementation details.
The detailed training algorithm for AYF is provided in the Appendix.\looseness=-1
\vspace{-2mm}

\subsection{Consistency Models are Flawed Multi-Step Generators} \label{sec:cm_multistep_bad}
\vspace{-2mm}
CMs are a powerful approach to turn flow-based models into one-step generators. To allow CMs to trade compute for sample quality, a multi-step sampling procedure was introduced by \citet{cm}. This process sequentially denoises noisy $\rvx_t$ by first removing all noise to estimate the clean data and then reintroducing smaller amounts of noise. However, in practice, this sampling procedure performs poorly as the number of steps increases and most prior works only demonstrate 1- or 2-step results.\looseness=-1

To understand this behavior, we analyze a simple case where the initial distribution is an isotropic Gaussian with standard deviation $c$, i.e. $p_{\textrm{data}}(\rvx) = \mathcal{N}(\mathbf{0}, c^2\mI)$. The following theorem shows that regardless of how accurate a (non-optimal) CM is, increasing the number of sampling steps beyond a certain point will lead to worse performance due to error accumulation in that setting.%
\begin{theorem}[Proof in Appendix] %
\label{thm:cm_multistep_bad}
Let $ p_{\textrm{data}}(\rvx) = \mathcal{N}(\mathbf{0}, c^2\mI) $ be the data distribution, and let $\mathbf{f}^*(\rvx_t, t)$ denote the optimal consistency model. 
For any $\delta > 0$, there exists a suboptimal consistency model $\mathbf{f}(\rvx_t, t)$ such that
\begin{equation*}
    \mathbb{E}_{\rvx_t \sim p(\rvx,t)}\!\bigl[\| \mathbf{f}(\rvx_t, t) - \mathbf{f}^*(\rvx_t, t) \|_2^2\bigr] < \delta
    \quad
    \text{for all } t \in [0,1],
\end{equation*}
and there is some integer $N$ for which \textbf{increasing} the number of sampling steps beyond $N$ \textbf{increases} the Wasserstein-2 distance of the generated samples to the ground truth distribution (i.e. a worse approximation of the ground truth).

\end{theorem}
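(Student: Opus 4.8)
The plan is to exploit that, for Gaussian data, the entire sampling pipeline is linear, so generation collapses to tracking a single scalar variance; I then choose a suboptimal $\mathbf{f}$ that still satisfies the boundary condition but lets this variance drift under repeated denoise–renoise steps. First I would identify the optimal consistency model: with $\rvx_1\sim\mathcal{N}(\vzero,\mI)$ and $\rvx_t=(1-t)\rvx_0+t\rvx_1$, the triple $(\rvx_0,\rvx_1,\rvx_t)$ is jointly Gaussian, the optimal velocity is $\rvv^*(\rvx,t)=\frac{\dot\sigma(t)}{\sigma(t)}\rvx$ with $\sigma(t)^2:=(1-t)^2c^2+t^2$, the PF-ODE trajectories are $\rvx_t=\frac{\sigma(t)}{\sigma(s)}\rvx_s$, and therefore $\mathbf{f}^*(\rvx_t,t)=\frac{c}{\sigma(t)}\rvx_t$ (which satisfies $\mathbf{f}^*(\rvx,0)=\rvx$). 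I would record that the multi-step renoise–denoise sampler driven by $\mathbf{f}^*$ is \emph{exact} for every step count: if the iterate at a node $\tau$ has coordinate variance $\sigma(\tau)^2$, then $\mathbf{f}^*$ maps it to variance $c^2$ and renoising to the next node $\tau'$ restores $(1-\tau')^2c^2+\tau'^2=\sigma(\tau')^2$, so the output always has variance $c^2$.

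Next I would take the suboptimal model $\mathbf{f}(\rvx_t,t)=\frac{c}{\sigma(t)}\big(1+\eta\,t(1-t)\big)\rvx_t$ with a small $\eta>0$; it satisfies the boundary condition (and happens to be exact at $t=1$ as well), and since $\rvx_t\sim\mathcal{N}(\vzero,\sigma(t)^2\mI)$ a one-line computation gives $\mathbb{E}_{\rvx_t}\|\mathbf{f}(\rvx_t,t)-\mathbf{f}^*(\rvx_t,t)\|_2^2=d\,c^2\eta^2 t^2(1-t)^2\le d\,c^2\eta^2/16$ for all $t$ (writing $d$ for the dimension), so any $\eta$ with $\eta^2<16\delta/(dc^2)$ meets the required uniform $\delta$-bound. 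For $N$-step sampling on nodes $0=\tau_0<\dots<\tau_N=1$ I would track the per-coordinate variance $V_i$ at $\tau_i$: $V_N=1$, $V_{i-1}=(1-\tau_{i-1})^2\lambda(\tau_i)^2V_i+\tau_{i-1}^2$ with $\lambda(t)=\frac{c}{\sigma(t)}(1+\eta t(1-t))$, and the output is $\mathcal{N}(\vzero,V_{\mathrm{out},N}\mI)$ with $V_{\mathrm{out},N}=\lambda(\tau_1)^2V_1$. Substituting $e_i:=V_i/\sigma(\tau_i)^2-1$ converts this into the \emph{exact} linear recursion $e_{i-1}=\alpha_{i-1}\big((1+\delta_i)e_i+\delta_i\big)$, where $\alpha_{i-1}:=(1-\tau_{i-1})^2c^2/\sigma(\tau_{i-1})^2\in(0,1]$, $\delta_i:=(1+\eta\tau_i(1-\tau_i))^2-1\ge0$, and $e_N=0$.

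Because every factor on the right is nonnegative, every $e_i\ge0$, hence $\rho_N:=V_{\mathrm{out},N}/c^2-1=(1+\delta_1)(1+e_1)-1\ge\delta_1$. At $N=1$ we have $\tau_1=1$, $\delta_1=0$, $e_1=0$, so $\rho_1=0$; for every $N\ge2$ we have $\tau_1=1/N\in(0,1)$, so $\delta_1>0$ and thus $\rho_N>0$. Finally, for centered isotropic Gaussians $W_2\big(\mathcal{N}(\vzero,V\mI),\mathcal{N}(\vzero,c^2\mI)\big)=\sqrt{d}\,|\sqrt{V}-c|=\sqrt{d}\,c\,(\sqrt{1+\rho_N}-1)$, which is strictly increasing in $\rho_N$; it equals $0$ at one step and is strictly positive at every larger step count, which proves the theorem with $N=1$.

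The one genuinely delicate part arises only if one wants the sharper conclusion that the degradation is \emph{unbounded}, i.e. $\rho_N\to\infty$. Then I would analyze the explicit solution $e_1=\sum_{i=2}^N\big(\prod_{k=1}^{i-1}\alpha_k\big)\big(\prod_{k=2}^{i-1}(1+\delta_k)\big)\delta_i$ by a Laplace-type estimate with the uniform schedule $\tau_i=i/N$: writing the $i$-th summand as $e^{N G(i/N)}$ up to a polynomial factor, where $G(\beta)=\int_0^\beta\big(\log\alpha(t)+2\log(1+\eta t(1-t))\big)\,dt$, and using $\alpha(t)=1-t^2/c^2+O(t^3)$, one finds $G(0)=G'(0)=0$ but $G''(0)=2\eta>0$, so $G$ attains an interior maximum $G(\beta^*)=\frac{4}{3}\eta^3c^4+O(\eta^4)$ near $\beta^*\approx 2\eta c^2$; hence $e_1$, and therefore $\rho_N$, grows like $e^{\Theta(\eta^3 N)}$. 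This asymptotic estimate is the only step that needs real care; the theorem exactly as stated follows from the elementary sign observation $\delta_i\ge0\Rightarrow e_i\ge0\Rightarrow\rho_N\ge\delta_1>0$ for $N\ge2$, in contrast with $\rho_1=0$.
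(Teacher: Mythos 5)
Your proposal is correct and works within the same framework as the paper's proof --- exploit the Gaussian structure to collapse the denoise--renoise pipeline into a scalar variance recursion --- but it makes a genuinely different choice of suboptimal consistency model, and this changes the downstream argument substantially. The paper perturbs the optimal $\mathbf{f}^*(\rvx_t,t)=\frac{c}{\sigma(t)}\rvx_t$ by a factor $(c+t\epsilon)/c$, which leaves a nonzero error already at $t=1$ (so the one-step output variance is $(c+\epsilon)^2\neq c^2$); consequently, to prove anything the paper must show the variance \emph{diverges} as the step count grows, which it does via a somewhat involved affine-fixed-point contraction argument with two cases. You instead perturb by $(1+\eta t(1-t))$, which vanishes at \emph{both} endpoints: the boundary condition holds at $t=0$, and the model is exact at $t=1$, so one-step sampling is exact ($\rho_1=0$). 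After the change of variable $e_i=V_i/\sigma(\tau_i)^2-1$ the recursion has only nonnegative inputs, so $e_1\ge 0$ and $\rho_N\ge\delta_1>0$ for all $N\ge 2$, giving the stated conclusion immediately with $N=1$. This is a cleaner and more elementary route to the theorem as written. Two small remarks: (1) you correctly keep the dimension factor $d$ in the uniform error bound $dc^2\eta^2 t^2(1-t)^2\le dc^2\eta^2/16$, whereas the paper's derivation drops the $d$ coming from $\mathbb{E}\|\rvx_t\|_2^2=d\,\sigma(t)^2$ (harmless, since $\eta$ can absorb it, but worth noting); (2) your argument proves the theorem as stated but not the sharper divergence $\rho_N\to\infty$ that the paper establishes --- you sketch a Laplace-type estimate for that but, as you note, it is not carried through rigorously, and the correct interpretation is that the elementary sign observation $\rho_N\ge\delta_1>0=\rho_1$ is what actually certifies the statement, while the asymptotic growth rate is an (optional) strengthening.
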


This suggests that CMs, by design, are not suited for multi-step generation. 
Interestingly, when $c = 0.5$---a common choice in diffusion model training, where the data is often normalized to this std. dev.~\cite{edm}---multi-step CM sampling with a non-optimal CM produces the best samples at two steps (\Cref{fig:multistep_cm_small}). This is in line with common observations in the literature~\cite{scm}.
This behavior is the opposite of standard diffusion models, which improve as the number of steps increases. Prior works have attempted to address this issue (see \Cref{related_work}), and they all ultimately reduce to special cases of flow maps.\looseness=-1

\begin{wrapfigure}{r}{0.4\textwidth}
    \centering
    \vspace{-8mm}
    \includegraphics[width=0.99\linewidth]{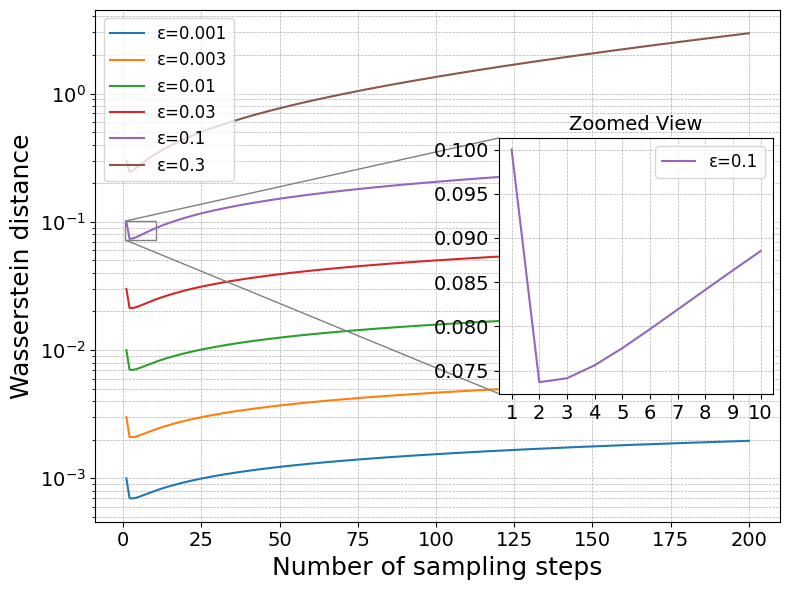}
    \caption{Wasserstein-2 distance between multi-step consistency samples and data distribution ($c{=}0.5$).}
    \label{fig:multistep_cm_small}
    \vspace{-5mm}
\end{wrapfigure}

\vspace{5mm}
\subsection{Learning Flow Maps}
\label{sec:fmm_losses}
\vspace{-2mm}
Flow maps are neural networks $\mathbf{f}_\theta(\rvx_t, t, s)$ that generalize CMs by mapping a noisy input $\rvx_t$ directly to any other point $\rvx_s$ by following the PF-ODE from time $t$ to $s$. When $s = 0$, they reduce to standard CMs. When performing many small steps, they become equivalent to regular flow or diffusion model sampling with the PF-ODE and Euler integration. A valid flow map $\mathbf{f}_\theta(\rvx_t, t, s)$ must satisfy the \textit{general boundary condition} $\mathbf{f}_\theta(\rvx_t, t, t) = \rvx_t$ for all $t$.
As is done in prior work, this is enforced in practice by parameterizing the model as $\mathbf{f}_\theta(\rvx_t, t, s) = c_{\textrm{skip}}(t, s) \rvx_t + c_{\textrm{out}}(t, s) \mathbf{F}_\theta(\rvx_t, t, s)$ where $c_{\textrm{skip}}(t, t) = 1$ and $c_{\textrm{out}}(t, t) = 0$ for all $t$. 
In this work, we set $c_{\textrm{skip}}(t, s) = 1$ and $c_{\textrm{out}}(t, s) = (s - t)$ for simplicity and to align it with an Euler ODE solver.  

Unlike CMs, which perform poorly in multi-step sampling, flow maps are designed to excel in this scenario. Additionally, their ability to fully traverse the PF-ODE enables them to accelerate tasks such as image inversion and editing by directly mapping images to noise~\cite{li2024bidirectional}.\looseness=-1 

As we are interested in distilling a diffusion or flow matching model, we assume access to a pretrained velocity model $\rvv_\phi(\rvx_t, t)$. The flow map model is trained by aligning its single-step predictions with the trajectories generated by the teacher's PF-ODE, i.e. $\frac{\mathrm{d}\rvx_t}{\mathrm{d}t} = \rvv_\phi(\rvx_t, t)$. We propose two primary methods for training flow maps.
The first training objective aims to ensure that for a fixed $s$, the output of the flow map remains constant as we move $(\rvx_t, t)$ along the PF-ODE. Let $\theta^- = \text{stopgrad}(\theta)$. The theorem below summarizes the approach. We call this loss \textit{AYF-Eulerian Map Distillation (AYF-EMD)}, as it can also be interpreted as a variant of the Eulerian loss of \citet{Boffi2024FlowMM}. The AYF-EMD loss naturally generalizes the loss used to train continuous-time consistency models~\cite{cm,scm}, as it reduces to the same objective when $s = 0$. Interestingly, it also generalizes the standard flow matching loss, to which it reduces in the limit as $s \to t$. See Appendix for details.
\begin{theorem}[Proof in Appendix]\label{thm:emd_loss} %
Let $\mathbf{f}_\theta(\rvx_t, t, s)$ be the flow map. Consider the loss function defined between two adjacent starting timesteps $t$ and $t' = t + \epsilon(s - t)$ for a small $\epsilon > 0$, 
\begin{equation*}
\mathbb{E}_{\rvx_t, t, s}\left[w(t, s) ||\mathbf{f}_\theta(\rvx_t, t, s) - \mathbf{f}_{\theta^-}(\rvx_{t'}, t', s)||_2^2\right],
\end{equation*}
where $\rvx_{t'}$ is obtained by applying a 1-step Euler solver to the PF-ODE from $t$ to $t'$.
In the limit as $\epsilon \to 0$, the gradient of this objective with respect to $\theta$ converges to:
\begin{equation}\nonumber
    \nabla_\theta \mathbb{E}_{\rvx_t, t, s} \left[ w'(t, s) \mathrm{sign}(t - s) \cdot \mathbf{f}_\theta^{\top}(\rvx_t, t, s) \cdot \frac{\mathrm{d} \mathbf{f}_{\theta^-}(\rvx_t, t, s)}{\mathrm{d}t} \right],
\end{equation}
where $w'(t, s) = w(t, s) \times |t - s|$.
\end{theorem}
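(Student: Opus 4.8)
The plan is to mirror the derivation of the continuous-time consistency loss recalled in the Background: Taylor-expand the discretized objective to first order in $\epsilon$ around the diagonal $t'=t$, now keeping the third argument $s$ fixed throughout.

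First I would differentiate the discretized loss $L_\epsilon(\theta)$ shown in the statement. Since the target $\mathbf{f}_{\theta^-}(\rvx_{t'},t',s)$ carries a stop-gradient and $\rvx_{t'}=\rvx_t+\epsilon(s-t)\,\rvv_\phi(\rvx_t,t)$ involves only the frozen teacher, the sole $\theta$-dependence is through $\mathbf{f}_\theta(\rvx_t,t,s)$, so
\[
\nabla_\theta L_\epsilon = 2\,\E_{\rvx_t,t,s}\!\left[w(t,s)\,\big(\partial_\theta \mathbf{f}_\theta(\rvx_t,t,s)\big)^{\!\top}\big(\mathbf{f}_\theta(\rvx_t,t,s) - \mathbf{f}_{\theta^-}(\rvx_{t'},t',s)\big)\right].
\]
Next I would Taylor-expand the target around $(\rvx_t,t)$. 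A stop-gradient alters gradients but not values, so $\mathbf{f}_{\theta^-}(\rvx_t,t,s)$ and $\mathbf{f}_\theta(\rvx_t,t,s)$ agree pointwise; and since $(\rvx_t,t)$ moves along the teacher PF-ODE with $t'-t=\epsilon(s-t)$ and Euler update $\rvx_{t'}-\rvx_t=\epsilon(s-t)\rvv_\phi(\rvx_t,t)$, one obtains
\[
\mathbf{f}_\theta(\rvx_t,t,s) - \mathbf{f}_{\theta^-}(\rvx_{t'},t',s) = -\,\epsilon(s-t)\,\frac{\mathrm d\mathbf{f}_{\theta^-}(\rvx_t,t,s)}{\mathrm dt} + O(\epsilon^2),
\]
where $\tfrac{\mathrm d}{\mathrm dt}\mathbf{f}_{\theta^-}=\nabla_{\rvx}\mathbf{f}_{\theta^-}\,\rvv_\phi+\partial_t\mathbf{f}_{\theta^-}$ is the total derivative along the teacher trajectory, exactly as in the continuous-time CM tangent.

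Substituting back shows $\nabla_\theta L_\epsilon = -2\epsilon\,\E_{\rvx_t,t,s}\!\big[w(t,s)(s-t)(\partial_\theta\mathbf{f}_\theta)^{\top}\tfrac{\mathrm d\mathbf{f}_{\theta^-}}{\mathrm dt}\big] + O(\epsilon^2)$, so the object with a nontrivial limit — as for the continuous-time CM loss — is the leading-order gradient $\tfrac1{2\epsilon}\nabla_\theta L_\epsilon$. I would then write $s-t=-|t-s|\,\sign(t-s)$, absorb $|t-s|$ into $w'(t,s)=w(t,s)\,|t-s|$, and use that the target is a stop-gradient to pull $\nabla_\theta$ back out, giving
\[
\lim_{\epsilon\to0}\frac1{2\epsilon}\nabla_\theta L_\epsilon = \nabla_\theta\,\E_{\rvx_t,t,s}\!\left[w'(t,s)\,\sign(t-s)\,\mathbf{f}_\theta^{\top}(\rvx_t,t,s)\,\frac{\mathrm d\mathbf{f}_{\theta^-}(\rvx_t,t,s)}{\mathrm dt}\right],
\]
which is the claimed expression. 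As sanity checks I would verify that $s=0$ collapses this to the Background's continuous-time CM gradient (with $\sign(t)=1$ on $(0,1]$) and that $s\to t$, under the parameterization $\mathbf{f}_\theta=\rvx_t+(s-t)\mathbf{F}_\theta$, reduces it to the flow-matching gradient — details deferred to the Appendix.

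The hard part is the analytic bookkeeping rather than the algebra. One must pin down the precise sense of the limit, since literally $\nabla_\theta L_\epsilon\to0$ and the meaningful statement concerns its $O(\epsilon)$ coefficient — exactly the normalization convention already in force for continuous-time consistency losses — and then justify interchanging $\nabla_\theta$, $\E$, and $\lim_{\epsilon\to0}$ while controlling the $O(\epsilon^2)$ remainder uniformly over the support of $(\rvx_t,t,s)$. This requires mild regularity ($\mathbf{f}_\theta$ and $\rvv_\phi$ bounded with bounded first and second derivatives, finite second moments) so dominated convergence applies; a little extra care is needed near $s=t$, where the Euler step degenerates, and, if $t,s$ range over the closed interval $[0,1]$, at the endpoints.
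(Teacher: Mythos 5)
Your proposal matches the paper's appendix proof essentially line for line: differentiate the discretized loss (only $\mathbf{f}_\theta(\rvx_t,t,s)$ carries $\theta$-dependence thanks to the stop-gradient), Taylor-expand $\mathbf{f}_{\theta^-}(\rvx_{t'},t',s)$ around $(\rvx_t,t)$ using $\rvx_{t'}-\rvx_t=\epsilon(s-t)\rvv_\phi$ and $t'-t=\epsilon(s-t)$, identify the tangent $\tfrac{\mathrm d}{\mathrm dt}\mathbf{f}_{\theta^-}$, divide out the vanishing $\epsilon$-factor, and rewrite $(t-s)=|t-s|\,\sign(t-s)$. The only cosmetic differences are that you retain the factor of $2$ from $\nabla\|\cdot\|_2^2$ (the paper silently absorbs it) and that you explicitly flag the regularity/interchange-of-limit conditions the paper leaves implicit.
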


The second approach ensures consistency at timestep $s$ instead. This method tries to ensure that for a fixed $(\rvx_t, t)$, the trajectory $\mathbf{f}_\theta(\rvx_t, t, \cdot)$ is aligned with that points' PF-ODE. We call this loss \textit{AYF-Lagrangian Map Distillation (AYF-LMD)}, as it is related to the Lagrangian loss of \citet{Boffi2024FlowMM}. The theorem below formalizes this approach.\looseness=-1 
\begin{theorem}[Proof in Appendix] %
\label{thm:lmd_loss}
Let $\mathbf{f}_\theta(\rvx_t, t, s)$ be the flow map. Consider the loss function defined between two adjacent ending timesteps $s$ and $s' = s + \epsilon(t - s)$ for a small $\epsilon > 0$, 
\begin{equation*}
\mathbb{E}_{\rvx_t, t, s}\left[w(t, s) ||\mathbf{f}_\theta(\rvx_t, t, s) - ODE_{s' \to s}[\mathbf{f}_{\theta^-}(\rvx_t, t, s')]||_2^2\right],
\end{equation*}
where $ODE_{t\to s}(\rvx)$ refers to running a 1-step Euler solver on the PF-ODE starting from $\rvx$ at timestep $t$ to timestep $s$.
In the limit as $\epsilon \to 0$, the gradient of this objective with respect to $\theta$ converges to:
\begin{align}
    \nabla_\theta \mathbb{E}_{\rvx_t, t, s} \left[ w'(t, s) \, \mathrm{sign}(s - t) \cdot \mathbf{f}_\theta^{\top}(\rvx_t, t, s) \notag \cdot \left( \frac{\mathrm{d} \mathbf{f}_{\theta^-}(\rvx_t, t, s)}{\mathrm{d}s} - \rvv_\phi(\mathbf{f}_{\theta^-}(\rvx_t, t, s), s) \right) \right], \notag
\end{align}
where $w'(t, s) = w(t, s) \times |t - s|$.
\end{theorem}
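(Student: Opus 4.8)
The plan is to follow the same strategy used for the continuous-time consistency loss recalled in the Background, but with the consistency condition imposed at the \emph{ending} time $s$ rather than the starting time. Write the one-step Euler target explicitly as
\[
\mathbf{g}_\epsilon := ODE_{s'\to s}\big[\mathbf{f}_{\theta^-}(\rvx_t, t, s')\big] = \mathbf{f}_{\theta^-}(\rvx_t, t, s') + (s - s')\,\rvv_\phi\!\big(\mathbf{f}_{\theta^-}(\rvx_t, t, s'),\, s'\big),
\]
and note that with $s' = s + \epsilon(t - s)$ we have $s - s' = \epsilon(s - t)$, which (unlike the fixed backward step $\Delta t$ in the CM derivation) scales with $t-s$ and can have either sign.

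First I would Taylor-expand $\mathbf{g}_\epsilon$ in $\epsilon$ about $\epsilon = 0$, holding $(\rvx_t, t)$ fixed — this is the point where the Lagrangian setting differs from the Eulerian/CM one, since moving $s$ does not move $\rvx_t$, so $\tfrac{\mathrm{d}}{\mathrm{d}s}$ here is an ordinary derivative in the third argument rather than a total derivative along the PF-ODE. Using $\mathbf{f}_{\theta^-}(\rvx_t, t, s') = \mathbf{f}_{\theta^-}(\rvx_t, t, s) + \epsilon(t-s)\,\tfrac{\mathrm{d}\mathbf{f}_{\theta^-}(\rvx_t,t,s)}{\mathrm{d}s} + O(\epsilon^2)$ and $\rvv_\phi(\mathbf{f}_{\theta^-}(\rvx_t,t,s'), s') = \rvv_\phi(\mathbf{f}_{\theta^-}(\rvx_t,t,s), s) + O(\epsilon)$ — the latter multiplied by $s-s' = O(\epsilon)$, so its correction is pushed to $O(\epsilon^2)$ — one collects
\[
\mathbf{g}_\epsilon = \mathbf{f}_{\theta^-}(\rvx_t, t, s) + \epsilon(t-s)\Big[\tfrac{\mathrm{d}\mathbf{f}_{\theta^-}(\rvx_t,t,s)}{\mathrm{d}s} - \rvv_\phi\big(\mathbf{f}_{\theta^-}(\rvx_t,t,s), s\big)\Big] + O(\epsilon^2).
\]

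Next I would differentiate $\Ls_\epsilon(\theta) := \mathbb{E}_{\rvx_t,t,s}\big[w(t,s)\|\mathbf{f}_\theta(\rvx_t,t,s) - \mathbf{g}_\epsilon\|_2^2\big]$. Since $\mathbf{g}_\epsilon$ depends on $\theta$ only through the stop-gradient argument $\theta^-$, $\nabla_\theta \Ls_\epsilon = \mathbb{E}\big[2 w(t,s)\,(\nabla_\theta\mathbf{f}_\theta(\rvx_t,t,s))^\top (\mathbf{f}_\theta(\rvx_t,t,s) - \mathbf{g}_\epsilon)\big]$. The crucial observation — exactly as in the continuous-time CM argument — is that $\mathbf{f}_\theta(\rvx_t,t,s)$ and $\mathbf{f}_{\theta^-}(\rvx_t,t,s)$ coincide in value (the stop-gradient only changes differentiation), so the $O(1)$ term of $\mathbf{f}_\theta - \mathbf{g}_\epsilon$ cancels and $\mathbf{f}_\theta(\rvx_t,t,s) - \mathbf{g}_\epsilon = -\epsilon(t-s)\big[\tfrac{\mathrm{d}\mathbf{f}_{\theta^-}(\rvx_t,t,s)}{\mathrm{d}s} - \rvv_\phi(\mathbf{f}_{\theta^-}(\rvx_t,t,s), s)\big] + O(\epsilon^2)$. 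Substituting, dividing by the positive scale $2\epsilon$ (which does not affect the gradient direction and matches the normalization convention implicit in the continuous-time CM limit stated in the Background), rewriting $-(t-s) = \mathrm{sign}(s-t)\,|t-s|$, setting $w'(t,s) = w(t,s)\,|t-s|$, and finally pulling $\nabla_\theta$ out of the expectation via the identity $(\nabla_\theta\mathbf{f}_\theta)^\top\mathbf{u} = \nabla_\theta(\mathbf{f}_\theta^\top\mathbf{u})$ valid for any $\mathbf{u}$ built only from the stop-gradient terms, gives the claimed limit.

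The main obstacle is the usual one for these continuous-time limits: making the informal ``$\theta = \theta^-$ in value but not in gradient'' manipulation precise, and justifying the interchange of $\nabla_\theta$, the expectation, and $\lim_{\epsilon\to 0}$. This requires mild smoothness of $\mathbf{f}_\theta$ (equivalently $\mathbf{F}_\theta$) and $\rvv_\phi$ in all arguments — in particular differentiability of $s\mapsto\mathbf{f}_{\theta^-}(\rvx_t,t,s)$ at $s=t$, which is where the $c_{\textrm{skip}}(t,t)=1$, $c_{\textrm{out}}(t,t)=0$ parameterization must be checked — together with an integrable dominating bound (e.g. compact $t,s$ ranges and bounded derivatives) so the $O(\epsilon^2)$ remainder vanishes uniformly after the $1/(2\epsilon)$ rescaling. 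A secondary, purely bookkeeping point is tracking the Euler-step direction: because $s-s'=\epsilon(s-t)$ flips sign according to whether $s$ lies above or below $t$, the $\mathrm{sign}(s-t)$ factor (absent in the one-directional CM case) is precisely what records this, and one should verify it survives the algebra rather than cancelling against the $|t-s|$ absorbed into $w'$.
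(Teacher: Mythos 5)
Your proposal matches the paper's argument step for step: expand the one-step Euler target, Taylor-expand $\mathbf{f}_{\theta^-}(\rvx_t,t,s')$ and $\rvv_\phi(\cdot,s')$ around $s$, use the value-equality $\mathbf{f}_\theta(\rvx_t,t,s)=\mathbf{f}_{\theta^-}(\rvx_t,t,s)$ to cancel the $O(1)$ term, divide by $\epsilon$, take the limit, and absorb $(s-t)=\mathrm{sign}(s-t)\,|t-s|$ into the weight. Your remark that $\tfrac{\mathrm{d}}{\mathrm{d}s}$ reduces to the partial derivative in the third slot (since $(\rvx_t,t)$ is held fixed) is exactly the distinction the paper relies on in writing $\partial_s\mathbf{f}_{\theta^-}$ in the intermediate steps, and your handling of the sign via $s-s'=\epsilon(s-t)$ is the same bookkeeping the paper performs.
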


In our 2D toy experiments, comparing the two objectives above, we found the AYF-LMD objective to be more stable. 
However, when applied to image datasets, it leads to overly smoothened samples that drastically reduce the output quality (see Appendix for detailed ablation studies).\looseness=-1 %

\vspace{-1mm}
\subsection{Sharpening the Distribution with Autoguidance} \label{sec:autoguidance}
\vspace{-1mm}
The training objective of diffusion- and flow-based models strongly encourages the model to cover the entire data distribution. Yet it lacks enough data to learn how to generate good samples in the tails of the distribution. 
The issue is even worse in distilled models which use fewer sampling steps.
As a result, many prior distillation methods rely on adversarial objectives to achieve peak performance, often sacrificing diversity and ignoring low-probability regions altogether.
The most commonly used technique to partially address this in conditional diffusion and flow-based models is classifier-free guidance (CFG)~\cite{ho2022classifier}. CFG trains a flow or diffusion model for both conditional and unconditional generation and steers samples away from the unconditional regions during sampling. Prior works~\cite{meng2023distillation,scm} have explored distilling CFG with great success. However, CFG struggles with overshooting the conditional distribution at large guidance scales, which leads to overly simplistic samples~\cite{kynkaanniemi2024applying}.\looseness=-1

Recently, \citet{karras2024auto} introduced autoguidance as a better alternative for CFG. Unlike CFG, this technique works for unconditional generation as well.
Autoguidance uses a smaller, less trained version of the main model for guidance, essentially steering samples away from low-quality sample regions in the probability distribution, where the weaker guidance model performs particularly poorly. We found that distilling autoguided teacher models can significantly improve performance compared to standard CFG. To the best of our knowledge, we are the first to demonstrate the distillation of autoguided teachers.
Specifically, during flow map distillation we define the guidance scale $\lambda$ and use the autoguided teacher velocity
\begin{equation}
    \rvv^{\textrm{guided}}_\phi(\rvx_t, t) = \lambda \rvv_\phi(\rvx_t, t) + (1-\lambda)\rvv^{\textrm{weak}}_\phi(\rvx_t, t),
\end{equation}
where $\rvv^{\textrm{weak}}_\phi$ represents the weaker guidance model.
In summary, we use autoguidance in the teacher as a mechanism to ``sharpen'' the distilled flow map model. %
See Appendix for a visual comparison between autoguidance and CFG on a 2D toy distribution.

\vspace{-2mm}

\subsection{Training Tricks} \label{sec:training_tricks}
\vspace{-2mm}
Training continuous-time CMs has historically been unstable~\citep{ict,geng2024consistency}. Recently, sCM~\citep{scm} addressed this issue by introducing techniques focused on parameterization, network architectures, and modifications to the training objective. Following their approach, we stabilize time embeddings and apply tangent normalization, while also introducing a few additional techniques to further improve stability.\looseness=-1

Our image models are trained with the AYF-EMD objective objective in \Cref{thm:emd_loss}, which relies on the tangent function $\frac{\mathrm{d}\mathbf{f}_{\theta^-}(\rvx_t, t, s)}{\mathrm{d}t}$. Under our parametrization, this tangent function is computed by\looseness=-1
\begin{equation} \label{eq:emd_detailed}
    \frac{\mathrm{d} \mathbf{f}_{\theta^-}(\rvx_t, t, s)}{\mathrm{d}t} = \left(\frac{\mathrm{d}\rvx_t}{\mathrm{d}t} - \mathbf{F}_{\theta^-}(\rvx_t, t, s)\right)  + (s - t) \times \frac{\mathrm{d} \mathbf{F}_{\theta^-}(\rvx_t, t, s)}{\mathrm{d}t},
\end{equation}
where $\frac{\mathrm{d}\rvx_t}{\mathrm{d}t} = \rvv_\phi(\rvx_t, t)$ represents the direction given by the pretrained diffusion or flow model along the PF-ODE. We find that most terms in this formulation are relatively stable, except for $\frac{\mathrm{d} \mathbf{F}_\theta(\rvx_t, t, s)}{\mathrm{d}t} = \nabla_{\rvx_t} \mathbf{F}_\theta(\rvx_t, t, s) \frac{\mathrm{d}\rvx_t}{\mathrm{d}t} + \partial_t \mathbf{F}_\theta(\rvx_t, t, s)$. Among these, the instability originates mainly from $\partial_t \mathbf{F}_\theta(\rvx_t, t, s)$, which can be decomposed into 
\begin{equation}\nonumber
    \partial_t \mathbf{F}_\theta(\rvx_t, t, s) = \frac{\partial c_{\textrm{noise}}(t)}{\partial t} \cdot \frac{\partial emb(c_{\textrm{noise}})}{\partial c_{\textrm{noise}}} \cdot \frac{\partial \mathbf{F}_{\theta}}{\partial emb},
\end{equation}
where $emb(\cdot)$ refers to the time embeddings, most commonly in the form of positional embeddings~\citep{ho2020ddpm,vaswani2017attention} or Fourier embeddings~\citep{song2020score,tancik2020fourier}. sCM~\cite{scm} proposes several techniques to stabilize this term including tangent normalization, adaptive weighting, and tangent warmup.\looseness=-1

We use tangent normalization~\cite{scm}, i.e. $\frac{\mathrm{d}\mathbf{f}_{\theta^-}}{\mathrm{d}t}\rightarrow \frac{\mathrm{d}\mathbf{f}_{\theta^-}}{\mathrm{d}t}/(||\frac{\mathrm{d}\mathbf{f}_{\theta^-}}{\mathrm{d}t}||+c)$ with $c=0.1$, as we find it to be critical for stable training. However, in our experiments, adaptive weighting had no meaningful impact and can be removed. 
We make a few tweaks to the time embeddings and tangent warmup to ensure compatibility with flow matching and better training dynamics which we describe below.  

\vspace{-3mm}
\paragraph{Stabilizing the Time Embeddings}  
The time embedding layers are one of the causes for the instability of $\partial_t \mathbf{F}_\theta(\rvx_t, t, s)$. As noted in \citep{scm}, the $c_{\textrm{noise}}$ parameterization used in most CMs is based on the EDM~\citep{edm} framework, where the noise level is defined as $c_{\textrm{noise}}(\sigma) = \log(\sigma)$. In the flow matching framework, which we use, the noise level for a timestep $t$ is given by $\sigma_t = \frac{t}{1-t}$, which can lead to instabilities when passing through a $\log$ operation as $t \to 0$ or $t \to 1$.
To address this, we modify the time parameterization by setting \( c_{\textrm{noise}}(t) = t \), ensuring stable partial derivatives. To utilize pretrained teacher model checkpoints trained with different time parameterizations, we first finetune the student's time embedding module to align with the outputs of the original checkpoints. For example, if we want to adapt EDM2 checkpoints, which use \( \sigma_t = \frac{t}{1-t} \), we minimize the following objective:\looseness=-1
\begin{equation}\nonumber
    \mathbb{E}_{t \sim p(t)} \left[ \left\lVert emb_{\textrm{new}}(t) - emb_{\textrm{original}} \left(\log 
    \left(\sigma_t\right)\right) \right\rVert_2^2 \right].
\end{equation}
This approach enables us to re-purpose nearly any checkpoint, making it compatible with our flow matching framework with minimal finetuning, rather than training new models from scratch.\looseness=-1

\vspace{-3mm}
\paragraph{Regularized Tangent Warmup}
We initialize the student model with pretrained flow matching or diffusion model weights, following prior work to speed up training~\cite{scm,cm}. \citet{scm} proposed a gradual warmup procedure for the second term in \Cref{eq:emd_detailed}, i.e., $\frac{\mathrm{d}\mathbf{f}_{\theta^-}(\rvx_t, t, s)}{\mathrm{d}t}$. Specifically, they introduced a coefficient $r$ that linearly increases from 0 to 1 over the first 10k training iterations, gradually incorporating the term. 
This warmup has a clear intuitive motivation. When considering only the first term in \Cref{eq:emd_detailed} (i.e., the $r = 0$ case), the objective simplifies to a regularization term that encourages flow maps to remain close to straight lines (please see the Appendix for the derivation): %
\begin{equation}\label{eq:funny_equation}
\begin{split}
    & \nabla_{\theta} \left[ \operatorname{sign}(t - s) \mathbf{f}_\theta^\top(\rvx_t, t, s) \times \left(\frac{\mathrm{d}\rvx_t}{\mathrm{d}t} - \mathbf{F}_{\theta^-}(\rvx_t, t, s)\right)\right] \propto \nabla_\theta [|| \mathbf{F}_\theta(\rvx_t, t, s) - \rvv_\phi(\rvx_t, t) ||_2^2].
\end{split}
\end{equation}
Therefore, for $r < 1$, the warmed-up loss with coefficient $r$ is equivalent to a weighted sum of the actual loss and this regularization term:
\begin{gather*}
    \nabla_{\theta} \left[ \text{sign}(t - s) \mathbf{f}_\theta^\top(\rvx_t, t, s) \left(\frac{\mathrm{d}\rvx_t}{\mathrm{d}t} - \mathbf{F}_{\theta^-}(\rvx_t, t, s) + r (s - t) \frac{\mathrm{d}\mathbf{F}_{\theta^-}(\rvx_t, t, s)}{\mathrm{d}t} \right) \right] \\
    = r \nabla_\theta \left[ \text{sign}(t - s) \mathbf{f}_\theta^\top(\rvx_t, t, s) \frac{\mathrm{d} \mathbf{f}_{\theta^-}(\rvx_t, t, s)}{\mathrm{d}s} \right] + (1 - r) \nabla_\theta \left[ |t - s| \cdot ||\mathbf{F}_\theta(\rvx_t, t, s) - \rvv_\phi(\rvx_t, t)||_2^2 \right].
\end{gather*}

In our experiments, training these models for too long after the warmup phase can cause destabilization. A simple fix is to clamp $r$ to a value smaller than 1, ensuring some regularization remains. We found $r_{\max} = 0.99$ to be effective in all cases.

\vspace{-3mm}
\paragraph{Timestep scheduling}
As in standard diffusion, flow-based, and consistency models, selecting an effective sampling schedule for $(t, s)$ during training is crucial.  
Similar to standard consistency models, where information must propagate from $t=0$ to $t=1$ over training, flow map models propagate information from small intervals $|s - t| = 0$ to large ones $|s - t| = 1$.   
For details on our practical implementation of the schedules, as well as a complete training algorithms, please see the Appendix.\looseness=-1 %

\begin{table}[t]
\vspace{-6mm}
    \centering
    \begin{minipage}[t]{0.409\linewidth}
        \centering
        \caption{Sample quality on class-conditional ImageNet 64x64. Recall metric is also included.}
        \label{table:img64}
        \resizebox{0.999\linewidth}{!}{
        \begin{tabular}{@{}lccc@{}}
        \toprule
        \textbf{Method} & \textbf{NFE ($\downarrow$)} & \textbf{FID ($\downarrow$)} & \textbf{Recall ($\uparrow$)}  \\ 
        \midrule
        {\textbf{Diffusion Models \& Fast Samplers}} & & & \\ 
        \midrule
        ADM~\cite{dhariwal2021diffusion} & 250 & 2.07 & 0.63 \\ 
        RIN~\cite{Jabri2022ScalableAC} & 1000 & 1.23 & - \\ 
        DisCo-Diff~\cite{xu2024discodiff} & 623 & 1.22 & - \\ 
        DPM-Solver~\cite{lu2022dpm} & 20 & 3.42 & - \\ 
        EDM (Heun)~\cite{edm} & 79 & 2.44 & 0.68 \\ 
        EDM2 (Heun)~\cite{edm2} & 63 & 1.33 & 0.68 \\ 
        EDM2 + Autoguidance~\cite{karras2024auto} & 63 & \textbf{1.01} & 0.69 \\ 
        \midrule
        {\textbf{Adversarial \& Joint Training}} & & & \\ 
        \midrule
        BigGAN-deep~\cite{brock2019largescalegantraining} & 1 & 4.06 & 0.48 \\ 
        StyleGAN-XL~\cite{sauer2022stylegan} & 1 & 1.52 & - \\ 
        Diff-Instruct~\cite{luo2023diffinstruct} & 1 & 5.57 & - \\ 
        DMD~\cite{yin2024one} & 1 & 2.62 & - \\ 
        DMD2~\cite{yin2024improved} & 1 & 1.28 & - \\ 
        SiD~\cite{zhou2024score} & 1 & 1.52 & 0.63 \\ 
        CTM~\cite{ctm} & 1 & 1.92 & 0.57 \\ 
        & 2 & 1.73 & - \\ 
        Moment Matching~\cite{salimans2024multistep} & 1 & 3.00 & - \\ 
        & 2 & 3.86 & - \\ 
        GDD-I~\cite{zheng2024diffusion} & 1 & 1.16 & 0.60 \\
        SiDA~\cite{zhou2024adversarial} & 1 & 1.11 & 0.62 \\
        \textbf{AYF + adv. loss \textit{(ours)}} & 1 & 1.32 & 0.65 \\ 
        & 2 & 1.17 & 0.65 \\ 
        & 4 & 1.10 & 0.65 \\ 
        & 8 & \textbf{1.07} & 0.65 \\ 
        \midrule
        {\textbf{Diffusion Distillation without Adversarial Objectives}} & & &\\ 
        \midrule
        DFNO~\cite{zheng2023fast} & 1 & 7.83 & 0.61 \\ 
        PID~\cite{tee2024physicsinformeddistillationdiffusion} & 1 & 8.51 & - \\ 
        TRACT~\cite{berthelot2023tractdenoisingdiffusionmodels} & 1 & 7.43 & - \\ 
        BOOT~\cite{gu2023boot} & 1 & 16.3 & 0.36 \\
        PD~\cite{salimans2022progressive} & 1 & 10.70 & 0.65 \\ 
        (reimpl. from \citet{heek2024multistep}) & 2 & 4.70 & - \\ 
        EMD~\cite{xie2024distillation} & 1 & 2.20 & 0.59 \\
        CD~\cite{cm} & 1 & 6.20 & - \\ 
        MultiStep-CD~\cite{heek2024multistep} & 2 & 1.90 & - \\ 
        sCD~\cite{scm} & 1 & 2.44 & 0.66$^*$ \\ 
        & 2 & 1.66 & 0.66$^*$ \\ 
        \textbf{AYF \textit{(ours)}} & 1 & 2.98 & 0.65 \\ 
        & 2 & 1.25 & 0.66 \\ 
        & 4 & 1.15 & 0.66 \\ 
        & 8 & \textbf{1.12} & 0.66 \\
        \midrule
        {\textbf{Consistency Training}} & & \\ 
        \midrule
        iCT~\cite{ict} & 1 & 4.02 & 0.63 \\ 
        iCT-deep~\cite{ict} & 1 & 3.25 & 0.63 \\ 
        ECT~\cite{geng2024consistency} & 1 & 2.77 & - \\ 
        sCT~\cite{scm} & 1 & 2.04 & -  \\ 
        & 2 & 1.48  & - \\ 
        TCM-S~\cite{lee2024truncated} & 1 & 2.88 & - \\ 
        & 2 & 2.31 & - \\
        TCM-XL~\cite{lee2024truncated} & 1 & 2.20 & - \\ 
        & 2 & 1.62 & - \\
        \bottomrule
        \end{tabular}
        }
    \end{minipage}
    \hfill
    \begin{minipage}[t]{0.56\linewidth}
        \centering
        \caption{Sample quality on class-conditional ImageNet 512x512. For additional baselines, which AYS all outperforms, please see the Appendix. %
        }
        \label{table:img512}
        \resizebox{0.99\linewidth}{!}{
        \begin{tabular}{@{}lccccc@{}}
        \toprule
        \textbf{Method} & \textbf{NFE ($\downarrow$)} & \textbf{FID ($\downarrow$)} & \textbf{\#Params} & \textbf{Gflops} & \textbf{Time (s)} \\ 
        \midrule
        \textbf{Teacher Diffusion Model} & & & & & \\ 
        \midrule
        EDM2-S~\cite{edm2} & 63×2 & 2.23 & 280M & 12852 & 8.31 \\ 
        EDM2-XXL~\cite{edm2} & 63×2 & 1.81 & 1.5B & 69552 &  31.50  \\ 
        EDM2-S + Autoguid.~\cite{karras2024auto} & 63×2 & 1.34 & 280M & 12852 & 8.31 \\ 
        EDM2-XXL + Autoguid.~\cite{karras2024auto} & 63×2 & \textbf{1.25} & 1.5B & 69552 & 31.50 \\ 
        \midrule
        {\textbf{Adversarial \& Joint Training}} & & & \\ 
        \midrule
        SiDA-S~\cite{zhou2024adversarial} (best adv. baseline) & 1 & 1.69 & 280M & 102 & 0.06\\
        \textbf{AYF-S + adv. loss \textit{(ours)}}  & 1 & 1.92 & 280M & 102 & 0.06\\
        & 2 & 1.81 & 280M & 204 & 0.12\\
        & 4 & \textbf{1.64} & 280M & 408 & 0.24 \\
        \midrule
        {\textbf{Consistency Distillation}} & & & \\ 
        \midrule
        sCD-S~\cite{scm} & 1 & 3.07 & 280M & 102 & 0.06 \\ 
        & 2 & 2.50 & 280M & 204 & 0.13 \\ 
        sCD-M~\cite{scm} & 1 & 2.75 & 498M & 181 & 0.10 \\ 
        & 2 & 2.26 & 498M & 362 & 0.20 \\ 
        sCD-L~\cite{scm} & 1 & 2.55 & 778M & 282 & 0.14 \\ 
        & 2 & 2.04 & 778M & 564 & 0.28 \\ 
        sCD-XL~\cite{scm} & 1 & 2.40 & 1.1B & 406 & 0.19 \\ 
        & 2 & 1.93 & 1.1B & 812 & 0.38 \\ 
        sCD-XXL~\cite{scm} & 1 & 2.28 & 1.5B & 552 & 0.25 \\ 
        & 2 & 1.88 & 1.5B & 1104 & 0.50 \\ 
        \textbf{AYF-S \textit{(ours)}} & 1 & 3.32 & 280M & 102 & 0.06\\ 
        & 2 & 1.87 & 280M & 204 & 0.12 \\ 
        & 4 & \textbf{1.70} & 280M & 408 & 0.24 \\
        \midrule
        {\textbf{Consistency Training}} & & & \\ 
        \midrule
        sCT-S~\cite{scm}  & 1 & 10.13 & 280M & 102 & 0.06 \\ 
        & 2 & 9.86 & 280M & 204 & 0.13 \\ 
        sCT-M~\cite{scm}  & 1 & 5.84 & 498M & 181 & 0.10 \\ 
        & 2 & 5.53 & 498M & 362 & 0.20 \\ 
        sCT-L~\cite{scm}  & 1 & 5.15 & 778M & 282 & 0.14 \\ 
        & 2 & 4.65 & 778M & 564 & 0.28 \\ 
        sCT-XL~\cite{scm}  & 1 & 4.33 & 1.1B & 406 & 0.19 \\ 
        & 2 & 3.73 & 1.1B & 812 & 0.38 \\ 
        sCT-XXL~\cite{scm}  & 1 & 4.29 & 1.5B & 552 & 0.25 \\ 
        & 2 & 3.76 & 1.5B & 1104 & 0.50 \\ 
        \bottomrule
        \end{tabular}
        }
    \end{minipage}
    \vspace{-4mm}
\end{table}

\vspace{-3mm}
\section{Related Work} \label{related_work}
\vspace{-2mm}
\textbf{Consistency Models.} Flow Map Models generalize the seminal CMs, introduced by~\citet{cm}. Early CMs were challenging to train and several subsequent works improved their stability and performance, using new objectives~\cite{ict}, weighting functions~\cite{geng2024consistency} or variance reduction techniques~\cite{wang2024stable}, among other tricks. Truncated CMs~\cite{lee2024truncated} proposed a second training stage, focusing exclusively on the noisier time interval, and \citet{scm} successfully implemented continuous-time CMs for the first time.\looseness=-1

\textbf{Flow Map Models.} Consistency Trajectory Models (CTM)~\cite{ctm} can be considered the first flow map-like models. They combine the approach with adversarial training. Trajectory Consistency Distillation~\cite{zheng2024trajectory} extends CTMs to text-to-image generation, and Bidirectional CMs~\cite{li2024bidirectional} train additionally on timestep pairs with $t{<}s$, also accelerating inversion and tasks such as inpainting and blind image restoration. \citet{kim2024generalizedconsistencytrajectorymodels} trained CTMs connecting arbitrary distributions.
Multistep CMs~\citep{heek2024multistep} split the denoising interval into sub-intervals and train CMs within each one, enabling impressive generation quality using 2-8 steps. Phased CMs~\cite{Wang2024PhasedCM} use a similar interval-splitting strategy combined with an adversarial objective. These methods can be seen as learning flow maps by training on $(t, s)$ pairs, where $s$ is the start of the sub-interval containing $t$.
Flow Map Matching~\cite{Boffi2024FlowMM} provides a rigorous analysis of the continuous-time flow map formulation and proposes several continuous-time losses. %
Shortcut models~\cite{Frans2024OneSD} adopt a similar flow map framework, 
but these two works struggle to produce high-quality images---in contrast to our novel AYF, the first high-performance continuous-time flow map model.\looseness=-1

\textbf{Accelerating Diffusion Models.} Early diffusion distillation approaches are knowledge distillation~\cite{luhman2021knowledgedistillationiterativegenerative} and progressive distillation~\cite{salimans2022progressive,meng2023distillation}. Other methods include adversarial distillation~\cite{sauer2025adversarial,sauer2024fast}, variational score distillation (VSD)~\cite{yin2024one, yin2024improved}, operator learning~\cite{zheng2023fast} and further techniques~\cite{luo2023diffinstruct,gu2023boot,berthelot2023tractdenoisingdiffusionmodels,tee2024physicsinformeddistillationdiffusion,yang2024consistencyflowmatchingdefining,liu2023flow,xie2024distillation,liu2024scottacceleratingdiffusionmodels,zhou2024score,zhou2024long}, many of them relying on adversarial losses, too.
However, although popular, adversarial methods introduce training complexities due to their GAN-like objectives. VSD exhibits similar properties and does not work well at high guidance levels. Moreover, these methods can produce samples with limited diversity. For these reasons we avoid such objectives and instead rely on autoguidance to achieve crisp high-quality outputs. Finally, many training-free methods efficiently solve diffusion models' generative differential equations~\cite{lu2022dpm,lu2022dpm2,jolicoeur2021gotta,edm,dockhorn2022genie,zhang2022fast,sabour2024align}, but they are unable to perform well when using ${<}10$ generation steps.\looseness=-1

\vspace{-2mm}
\section{Experiments}\label{sec:main_exp}
\vspace{-2mm}
We train AYF flow maps on ImageNet~\cite{imagenet} at resolutions $64 \times 64$ and $512 \times 512$, measuring sample quality using Fr\'{e}chet Inception Distance (FID)~\cite{fid}, as previous works. We also use our AYF framework to distill FLUX.1 [dev]~\cite{flux2023}, the best text-to-image diffusion model, using an efficient LoRA~\cite{Hu2021LoRALA} framework and reduce sampling steps to just 4. Experiment details explained in the Appendix.\looseness=-1 %

\begin{wrapfigure}{r}{0.55\textwidth}
    \centering
    \vspace{-5mm}
    \begin{subfigure}{0.49\linewidth}
        \centering
        \includegraphics[width=\linewidth]{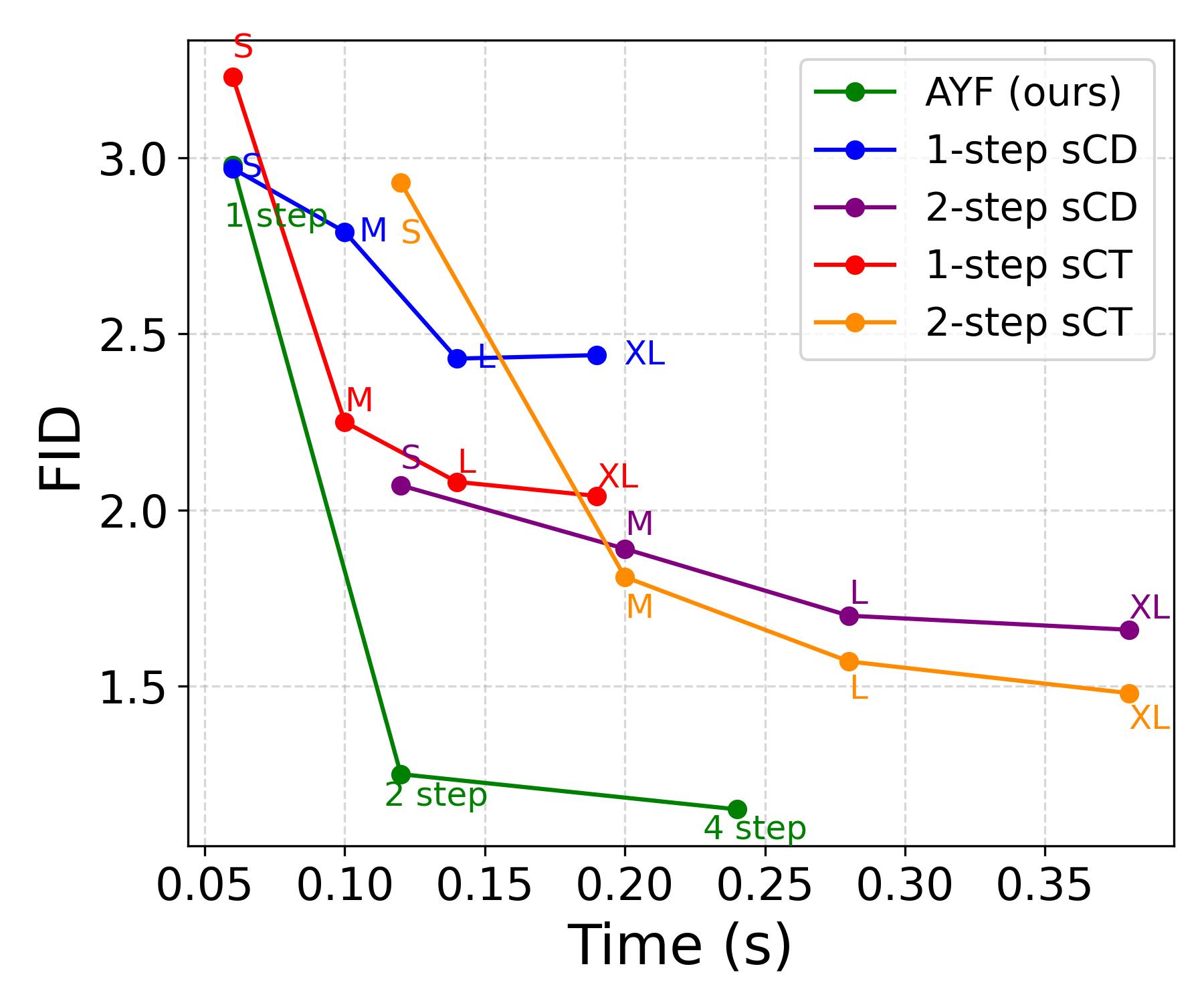}
        \caption{ImageNet-64}
    \end{subfigure}
    \hfill
    \begin{subfigure}{0.49\linewidth}
        \centering
        \includegraphics[width=\linewidth]{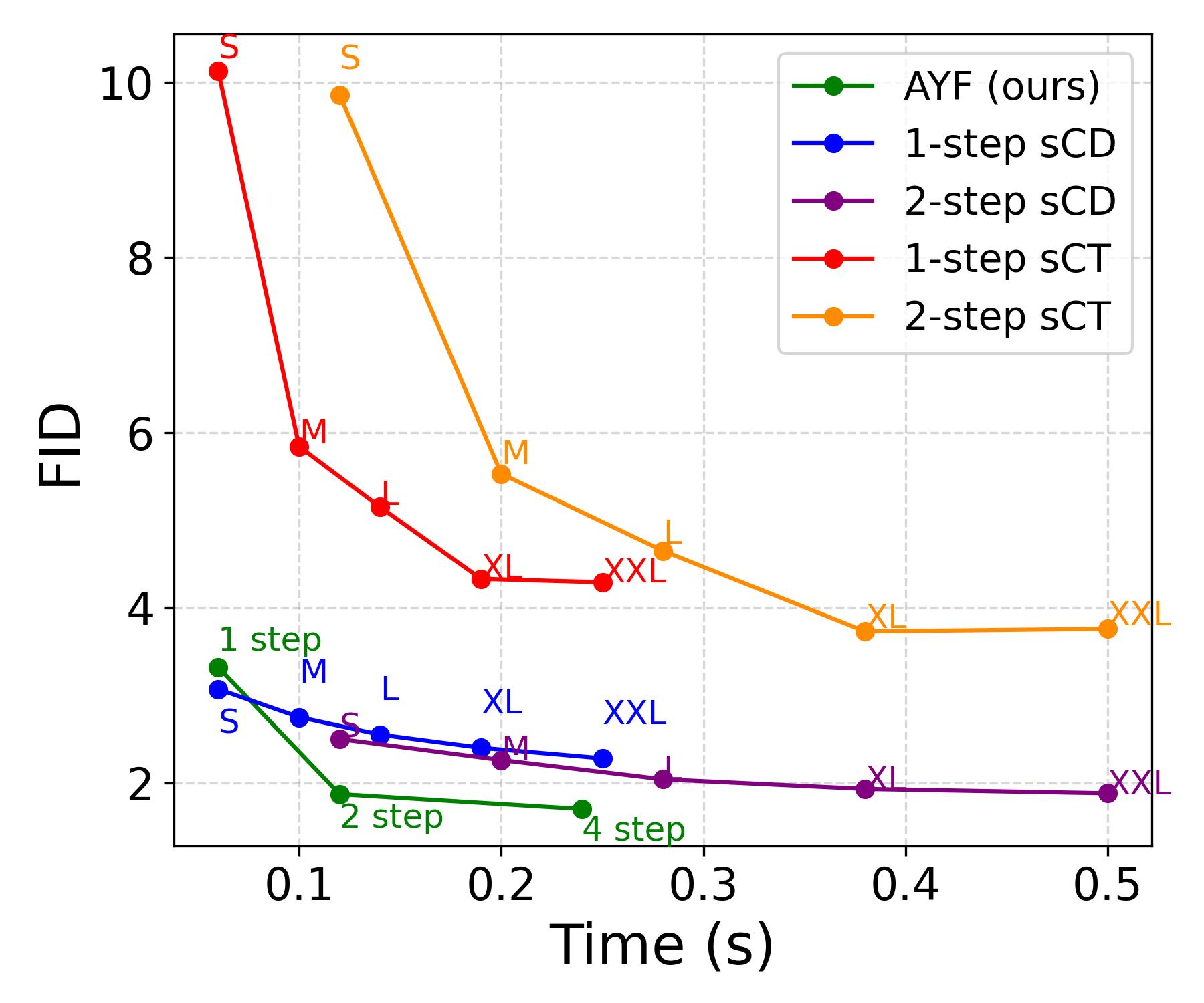}
        \caption{ImageNet-512}
    \end{subfigure}
    \caption{FID $\downarrow$ as function of wall clock time.}
    \label{fig:fid_vs_time}
    \vspace{-3mm}
\end{wrapfigure}

\textbf{ImageNet Flow Maps.} We adopt the EDM2~\cite{edm2} framework, using their small ``S'' models, and modify network parametrization and time embedding layer as detailed in \Cref{sec:method}. Pretrained checkpoints available online are used both as teacher network and as flow map initialization.
We incorporate autoguidance into the flow map model by introducing an additional input, $\lambda$, corresponding to the guidance scale~\cite{scm,meng2023distillation}.
During training, $\lambda$ is uniformly sampled from $[1, 3]$ and applied to the teacher model via autoguidance.
At inference, we leverage the $\gamma$-sampling algorithm from~\cite{ctm} for stochastic multistep sampling of flow map models. Results are reported using the optimal $\gamma$ and $\lambda$ values. For ImageNet $512 \times 512$, the teacher and distilled models are in latent space~\cite{rombach2021highresolution}.\looseness=-1

In \Cref{table:img64} we show ImageNet $64 \times 64$ results, reporting FID and recall scores along with number of neural function evaluations (NFE). Our flow maps achieve the best sample quality among all non-adversarial few-step methods, given only 2 sampling steps by sacrificing optimal 1-step quality. This is because learning a flow map is a more challenging task compared to only a consistency model. %
In \Cref{table:img512} we compare AYF against the state-of-the-art consistency model sCD/sCT~\cite{scm} on ImageNet $512 \times 512$, also reporting total wall sampling clock time, Gflops, and \#parameters. We show that although our small-sized model achieves slightly worse one step sample quality, it is on par with the best sCD model at only two steps while using only $18\%$ of the larger models' compute. Increasing the sampling steps to four improves the quality even further while still being over twice as fast as the large 1-step sCM model (wallclock time). We further analyze the performance vs. sampling speed trade-off in \Cref{fig:fid_vs_time}, showing that AYF is much more efficient than sCD/sCT (also see Appendix for additional comparison). %
Autoguidance allows AYF to use a small network and still achieve strong performance and the efficient network results in 2-step or 4-step synthesis still being lightning fast.
\looseness=-1

\textbf{Adversarial finetuning of AYF.}
Given a pretrained AYF flow map model, we found that a short finetuning stage using a combination of the EMD objective and an adversarial loss can significantly boost the performance across the board, especially for 1-step generation, with a minimal impact to sample diversity as measured by recall scores. Using this approach, we achieve state-of-the-art performance on few-step generation on ImageNet64 (see \Cref{table:img64}). For implementation details, please see Appendix.
Additional GAN and diffusion model baselines on ImageNet $512 \times 512$ can be found in the Appendix; AYF outperforms all of them. 

\begin{wrapfigure}{r}{0.3\textwidth}
    \centering
    \vspace{-5mm}
    \includegraphics[width=0.99\linewidth]{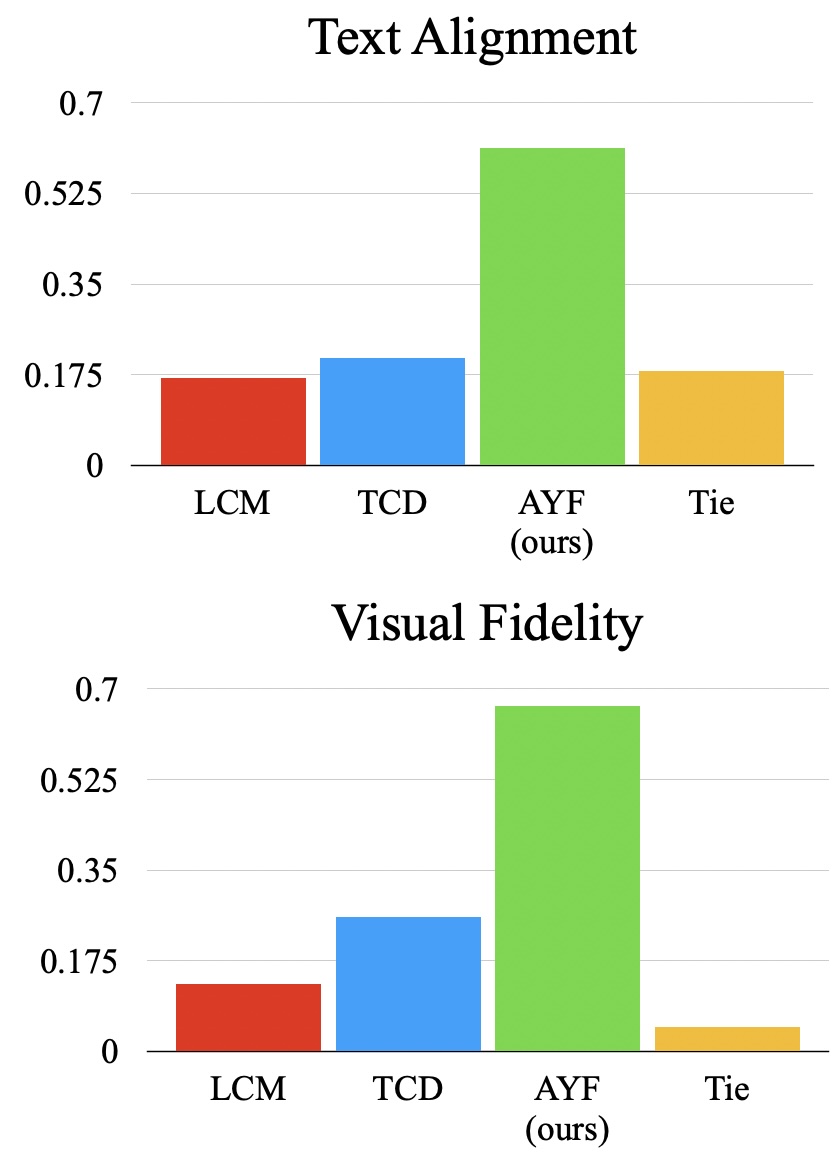}
    \caption{\small User preferences comparing LoRA-based consistency and flow map models (4-step samples). LCM and TCD use SDXL and AYF uses FLUX.1 [dev] as base model, respectively.}
    \label{fig:user_study}
    \vspace{-5mm}
\end{wrapfigure}

\textbf{Text-to-Image Flow Maps.}
We apply AYF to distill the open-source text-to-image model FLUX.1 [dev]~\cite{flux2023} into a few-step generator, finetuning a FLUX.1 base model into a flow map model using LoRA~\cite{Hu2021LoRALA} with the objective in \Cref{thm:emd_loss} for 10,000 steps. This distillation process took approx. four hours on 8 NVIDIA A100 GPUs, which is highly efficient, in contrast to several previous large-scale text-to-image distillation methods.\looseness=-1

Samples from the model are shown in \Cref{fig:teaser}.
We compare to LCM~\cite{Luo2023LatentCM,Luo2023LCMLoRAAU} and TCD~\cite{zheng2024trajectory}, two consistency-distilled LoRAs trained on top of SDXL~\cite{podell2023sdxl} without adversarial objectives. To evaluate quality we ran a user study. The results (\Cref{fig:user_study}) show a clear preference for our method. We also provide qualitative comparisons in \Cref{fig:flux_qualitative}. Compared to LCM and TCD, our images are more aesthetically pleasing with finer details. 
We also included FLUX.1 [schnell]~\cite{flux2023}, a commercially distilled model trained with Latent Adversarial Diffusion Distillation~\cite{sauer2024fast}. Our method achieves comparable image quality to the [schnell] model, while requiring only four sampling steps and 32 GPU hours without the use of adversarial losses. In conclusion, AYF achieves state-of-the-art few-step text-to-image generation performance among non-adversarial methods.
Detailed ablation studies on different components of AYF (EMD vs. LMD; autoguidance vs. CFG, AYF vs. Shortcut) are presented in the Appendix. %
Additional qualitative examples of images generated by AYF are shown in the Appendix, too.\looseness=-1

\vspace{-2mm}
\section{Conclusions}
\vspace{-2mm}
We have presented Align Your Flow (AYF), a novel continuous-time distillation method for training flow maps, which generalizes flow-matching and consistency-based models. Importantly, flow maps remain effective generators across all denoising step counts, unlike standard consistency models; a fact we prove analytically for the first time. 

In addition, we use autoguidance to enhance the quality of the teacher model, resulting in an improved distilled student, and an additional boost can be achieved by adversarial finetuning, with minimal impact in sample diversity.
We achieve state-of-the-art performance among non-adversarial distillation methods on both ImageNet64 and ImageNet512 generation. Since AYF requires only relatively small neural networks, which further reduces the computational burden and boosts sampling efficiency, even 2-step or 4-step sampling from AYF is as fast or faster than previous single-step generators. We also distill FLUX.1 using an efficient LoRA framework, resulting in state-of-the-art text-to-image generation performance among non-adversarial distillation approaches. Future work could explore applying AYF to video model distillation or in other domains, for instance in drug discovery for efficient molecule or protein modeling.\looseness=-1

\bibliography{refs}
\bibliographystyle{plainnat}

\newpage
\appendix
\renewcommand\ptctitle{}
\addcontentsline{toc}{section}{Appendix} %
\part{Appendix} %
\vspace{6mm}
\parttoc %
\newpage
\section{Broader Impact}
Generative models have recently seen major advances and one can now routinely synthesize realistic and highly aesthetic images, videos and other modalities. Align Your Flow represents a universal approach to significantly accelerate model sampling from diffusion and flow-based generative models through distillation. Real-time generation can unlock new interactive applications and help artistic and creative expression enabled by generative modeling tools. Moreover, accelerated generation also makes inference workloads more computationally efficient, thereby reducing generative models' energy footprint. Although we evaluated Align Your Flow on image generation benchmark, our proposed methodology is in principle broadly applicable. For instance, one could also imagine future applications in drug discovery, where fast generative models can propose novel molecules rapidly and enable efficient in-silico drug candidate screening. However, generative models like diffusion and flow models, and their distilled versions, can also be used for malicious purposes and, for instance, produce deceptive imagery. Hence, they generally need to be applied with an abundance of caution.

\section{Limitations}
As shown in our ablation studies (\Cref{app:ablations}; also see \Cref{fig:fid_vs_nfe}), our AYF models stabilize performance across multi-step sampling. However, this comes at the cost of slightly degraded one-step performance compared to methods focused solely on one-step generation (e.g. sCD~\cite{scm} or SiDA~\cite{zhou2024adversarial}). Note that adversarial finetuning can mitigate this and improve performance across the board, as shown \Cref{fig:fid_vs_nfe} and with minimal loss in diversity. 
We believe that it would be possible to further improve upon that with a more carefully tuned post-training stage, possibly leveraging recent variational score distillation techniques~\cite{yin2024one,wang2023prolificdreamer}.
Additionally, a small gap remains between the AYF model and its multi-step teacher flow-based model, regardless of the number of sampling steps used. This is expected, given that both models have roughly the same number of parameters, yet AYF is trained on a more challenging task and preserves the same noise-to-data mapping. Scaling up model capacity may help bridge this performance gap.
Finally, our work focuses on distillation and assumes access to a pretrained flow-based teacher. Prior works~\cite{scm} suggest that direct consistency training can outperform distillation in certain scenarios. Since our AYF-EMD loss is directly compatible with this paradigm, exploring flow map training without distillation is a promising direction for future work. 

\begin{figure}[h!]
    \centering
    \includegraphics[width=0.7\linewidth]{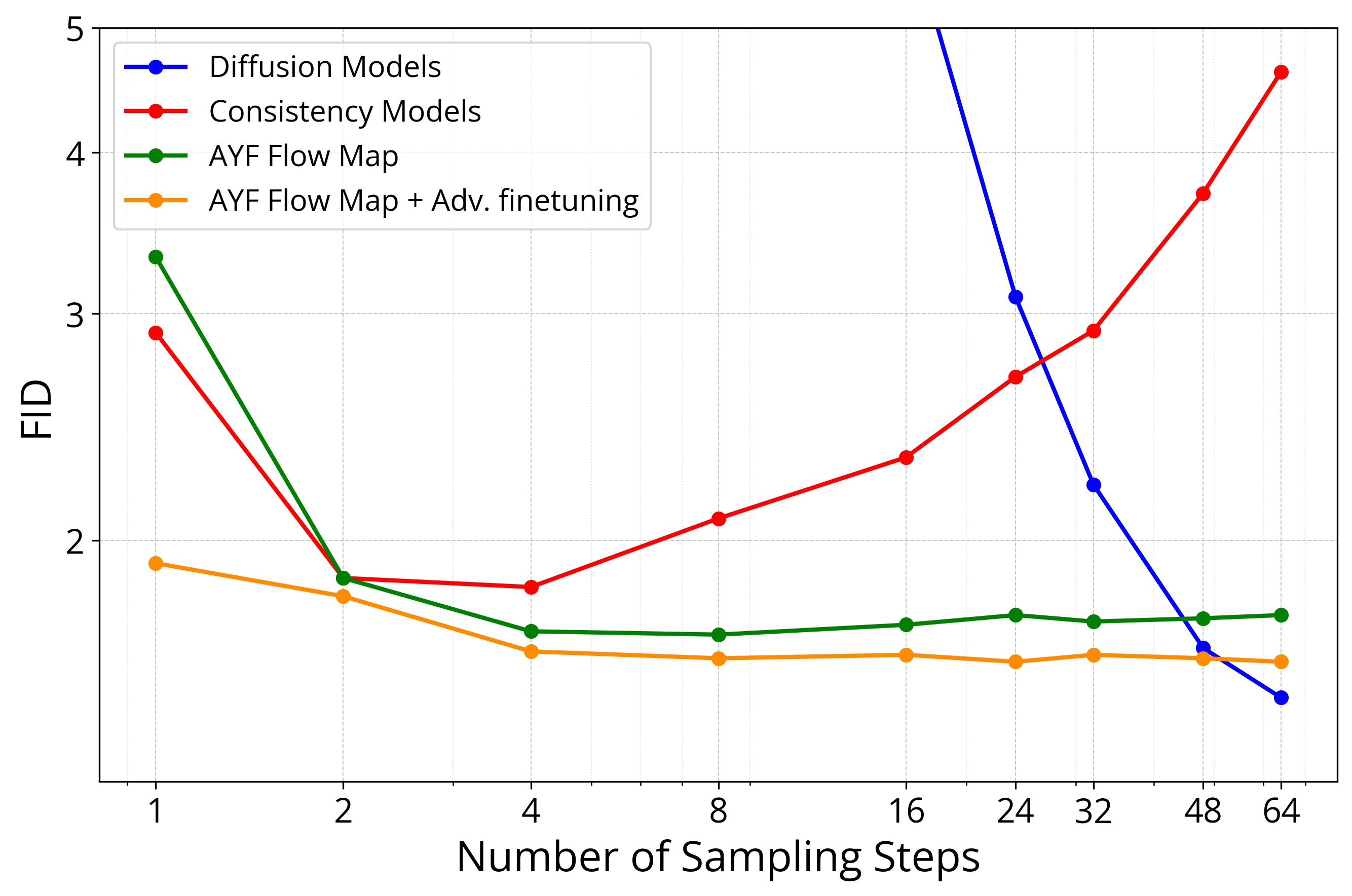}
    \caption{FID versus number of sampling steps on ImageNet 512x512 (lower is better). Diffusion models require dozens of steps to reach good quality, and consistency models deteriorate after only a few, whereas our AYF flow maps maintain low FID across the board. AYF is slightly weaker at a single step generation, but a brief adversarial fine-tuning stage closes this gap and improves quality for all numbers of sampling steps.}
    \label{fig:fid_vs_nfe}
\end{figure}

\section{Proofs, Derivations and Theoretical Details}
\label{appendix:proofs}

\subsection{
Theorem: Consistency Models are Flawed Multi-step Generators
}
\label{app:proof_cm_multistep_bad}

\begin{theorem}[Restated from \Cref{thm:cm_multistep_bad}]
Let \( p_{\text{data}}(\rvx) = \mathcal{N}(\mathbf{0}, c^2 \rmI) \) be the data distribution. 
Then the optimal consistency model $\rvf^*(\rvx_t, t)$ is given by 
\begin{equation*}
\rvf^*(\rvx_t, t) = \frac{c \times \rvx_t}{\sqrt{t^2 + (1 - t)^2 c^2}}.    
\end{equation*}
For any $\epsilon > 0$ consider the following non-optimal consistency model
\begin{equation*}
    \rvf_\epsilon(\rvx_t, t) = \frac{(c + t \times \epsilon) \rvx_t}{\sqrt{t^2 + (1 - t)^2 c^2}}. 
\end{equation*} 
Then, there exists an integer $N$ for which increasing the number of sampling steps during multistep CM sampling beyond $N$ increases the Wasserstein-2 distance of the generated samples to the ground truth distribution. 
\end{theorem}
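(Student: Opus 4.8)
The plan is to collapse the entire sampling process onto a scalar recursion for the variance of the iterates and then show this variance diverges as the number of steps grows. Write $D_t := t^2 + (1-t)^2 c^2$, so that under the interpolation $\rvx_t = (1-t)\rvx_0 + t\rvx_1$ with $\rvx_0\sim p_{\textrm{data}}$, $\rvx_1\sim\mathcal N(\mathbf 0,\rmI)$ one has $\rvx_t\sim\mathcal N(\mathbf 0, D_t\rmI)$. The stated formula for $\rvf^*$ is immediate: the flow-matching PF-ODE for Gaussian data has velocity linear in $\rvx$, so its flow map from time $t$ to $0$ is linear, and the unique linear map sending $\mathcal N(\mathbf 0, D_t\rmI)$ to $\mathcal N(\mathbf 0, c^2\rmI)$ is multiplication by $c/\sqrt{D_t}$. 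Also $\rvf_\epsilon(\rvx,0)=\rvx$, so $\rvf_\epsilon$ is an admissible consistency model, and $\rvf_\epsilon(\rvx_t,t)-\rvf^*(\rvx_t,t)=\tfrac{t\epsilon}{\sqrt{D_t}}\rvx_t$, whence $\E\|\rvf_\epsilon-\rvf^*\|_2^2 = d\,t^2\epsilon^2 \le d\epsilon^2$ uniformly in $t$ (so $\rvf_\epsilon\to\rvf^*$ as $\epsilon\to0$, matching the ``$\delta$-close'' phrasing of the main-text version).

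Next I would write out $N$-step multistep CM sampling with the uniform schedule $t_k=k/N$: start from $\rvx_{t_N}\sim\mathcal N(\mathbf 0,\rmI)$, and for $k=N,\dots,1$ set $\hat\rvx_0=\rvf_\epsilon(\rvx_{t_k},t_k)$, $\rvx_{t_{k-1}}=(1-t_{k-1})\hat\rvx_0+t_{k-1}\rvz_k$ with $\rvz_k\sim\mathcal N(\mathbf 0,\rmI)$ fresh, and output $\rvx_{t_0}$. By linearity and homogeneity of $\rvf_\epsilon$ each iterate is $\mathcal N(\mathbf 0,\sigma_k^2\rmI)$ with
\[
\sigma_{k-1}^2 = \frac{(1-t_{k-1})^2(c+t_k\epsilon)^2}{D_{t_k}}\,\sigma_k^2 + t_{k-1}^2,\qquad \sigma_N^2 = 1 .
\]
Two remarks: for $\epsilon=0$ the value $\sigma_k^2=D_{t_k}$ is invariant and matches $\sigma_N^2=1$, so the \emph{optimal} CM outputs $\mathcal N(\mathbf 0,c^2\rmI)$ exactly at every $N$ (the degradation is entirely due to the perturbation); and since both laws are centered Gaussians, $W_2=\sqrt d\,|\sigma_0-c|$, so it is enough to prove $\sigma_0(N)\to\infty$. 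Passing to $\rho_k:=\sigma_k^2/D_{t_k}$ (so $\rho_N=1$, $\sigma_0=c\sqrt{\rho_0}$) turns the recursion into the affine form $\rho_{k-1}=\mu_k\rho_k+\nu_k$ with $\mu_k=\tfrac{(1-t_{k-1})^2(c+t_k\epsilon)^2}{D_{t_{k-1}}}>0$, $\nu_k=\tfrac{t_{k-1}^2}{D_{t_{k-1}}}\ge0$, and unrolling gives the all-nonnegative expansion $\rho_0=\prod_{j=1}^N\mu_j+\sum_{k=1}^N\nu_k\prod_{j=1}^{k-1}\mu_j$.

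It then suffices to exhibit one term of this sum that blows up. A short expansion at $t=0$ gives $\mu_j = 1+\tfrac{2\epsilon}{c}\,t_{j-1}+O(t_{j-1}^2)$, so there are $\tau=\tau(\epsilon,c)>0$ and $\gamma=\gamma(\epsilon,c)>0$ with $\mu_j\ge1+\gamma t_{j-1}$ whenever $t_{j-1}\le\tau$ --- a ``growth region'' near the data end. Taking $k^\star=\lfloor\tau N\rfloor$, all indices $j\le k^\star-1$ satisfy $t_{j-1}\le\tau$, so $\prod_{j=1}^{k^\star-1}\mu_j\ge\prod_{j=1}^{k^\star-1}(1+\gamma t_{j-1})\ge\exp\!\big(\sum_{j=1}^{k^\star-1}\log(1+\gamma t_{j-1})\big)=\exp\big(\Omega_{\epsilon,c}(N)\big)$, using $\log(1+x)\ge x/(1+\gamma\tau)$ for $0\le x\le\gamma\tau$ and the Riemann-sum estimate $\sum_{j=1}^{k^\star}t_{j-1}=\Theta(\tau^2 N)$; meanwhile $\nu_{k^\star}=t_{k^\star-1}^2/D_{t_{k^\star-1}}\to\tau^2/(\tau^2+(1-\tau)^2c^2)>0$. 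Hence $\rho_0\ge\nu_{k^\star}\prod_{j=1}^{k^\star-1}\mu_j\to\infty$, so $\sigma_0(N)=c\sqrt{\rho_0}\to\infty$ and $W_2(N)=\sqrt d\,|\sigma_0(N)-c|\to\infty$. Since $W_2(1)=\sqrt d\,\epsilon<\infty$, the sequence $\big(W_2(N)\big)_N$ cannot be eventually non-increasing; in particular there is an integer $N$ (indeed infinitely many) past which taking one more sampling step strictly increases the Wasserstein-2 distance, which is the claim.

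The main obstacle is the final estimate: one must verify that the ``growth region'' near $t=0$ contributes $\Omega(N)$ --- not just $o(N)$ --- to $\sum_j\log\mu_j$, so that a single positive term of the expansion of $\rho_0$ already diverges, and one must make sure that the later ``shrinking'' regime (where $\mu_j<1$, in fact $\mu_j\to0$ as $t_j\to1$) together with the additive $\nu_k$ terms cannot undo this --- both are handled by truncating at $k^\star$, which lies before the shrinking regime, and by using that every term in the expansion of $\rho_0$ is nonnegative. Establishing $\mu_j\ge1+\gamma t_{j-1}$ uniformly for every $\epsilon>0$, $c>0$ only requires bounding the $O(t^2)$ remainder in the expansion of $\mu_j$, which is routine.
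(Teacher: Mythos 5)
Your argument is correct and reaches the same conclusion, but the route through the divergence step is genuinely different from the paper's. Both of you derive the same variance recursion and reduce to the normalized quantity $\rho_k=\sigma_k^2/D_{t_k}$ (the paper's $h$), but from there the paper performs a chain of substitutions ($h\to g\to g'\to r$) that turns the recursion into an affine map $r\mapsto\lambda(s)r+(1-\lambda(s))o(s,t)$ with contraction factor $\lambda(s)<1$, and then argues by contradiction: the fixed points $o(s,t)=t(1-s)^2/s^2$ escape to infinity as $s\to0$ while $\lambda(s)\to1$, so if $r(0)$ stayed bounded one could derive a geometric-decay inequality that fails. You instead unroll the affine recursion $\rho_{k-1}=\mu_k\rho_k+\nu_k$ into the manifestly nonnegative sum $\rho_0=\prod_j\mu_j+\sum_k\nu_k\prod_{j<k}\mu_j$ and exhibit a single divergent term by noting $\mu_j\ge1+\gamma t_{j-1}$ in a window $t_{j-1}\le\tau$ near the data end (the $\epsilon$-perturbation makes the multiplicative factor strictly exceed $1$ there), so the partial product through $k^\star=\lfloor\tau N\rfloor$ grows like $e^{\Omega(N)}$. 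This is a cleaner, more constructive version of the same phenomenon — the paper expresses ``growth near $t=0$'' through diverging fixed points of a contraction, you express it directly as $\mu>1$ — and it avoids the paper's case analysis and choice of $s^\star,t^\star$. One small presentational point: what you call ``routine'' (the uniform bound $\mu_j\ge1+\gamma t_{j-1}$) is in fact the only place where $\epsilon>0$ enters, so it deserves the explicit two-line expansion $\mu_k=\frac{(1+t_k\epsilon/c)^2}{1+t_{k-1}^2/((1-t_{k-1})^2c^2)}\ge\frac{1+2t_{k-1}\epsilon/c}{1+2t_{k-1}^2/c^2}\ge1+\frac{\epsilon}{2c}t_{k-1}$ for $t_{k-1}$ small; with that spelled out the argument is complete.
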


\begin{proof}
In this Gaussian setting, all intermediate noisy distributions $p(\rvx_t)$, where $\rvx_t = t \times \rvx_1 + (1 - t) \times \rvx_0, \rvx_0 \sim p_{data}, \rvx_1 \sim \mathcal{N}(\mathbf{0}, \rmI)$, and $t \in [0,1]$, remain Gaussian. As a result, the error of the non-optimal consistency model $\rvf_\epsilon$ is:
\begin{equation}
\begin{split}
    \mathbb{E}_{\rvx_t}[\| \rvf_\epsilon(\rvx_t, t) - \rvf^*(\rvx_t, t) \|_2^2] & = \mathbb{E}_{\rvx_t} [t^2 \epsilon^2 \|\frac{\rvx_t}{\sqrt{t^2 + (1 - t)^2 c^2}}\|_2^2] \\ 
    & = \frac{t^2 \epsilon^2}{t^2 + (1 - t)^2 c^2} \mathbb{E}_{\rvx_0, \rvx_1} [\|t \,\rvx_1 + (1 - t) \, \rvx_0\|_2^2] = t^2\epsilon^2 \leq \epsilon^2, 
\end{split}
\end{equation}
and satisfies the error bound in \Cref{thm:cm_multistep_bad} by choosing a small enough $\epsilon$ that satisfies $\epsilon^2 < \delta$.

Due to the Gaussian setting, both the optimal velocity from flow matching, $\rvv^*(\rvx_t, t)$, and the optimal denoiser, $\rmD^*(\rvx_t, t)$, have closed-form solutions.

We first derive the denoiser. Following the framework of \cite{edm}, let $p(\rvx; \sigma)$ denote the distribution obtained by adding independent Gaussian noise with standard deviation $\sigma$ to the data. Then,
\begin{equation*}
    p(\rvx; \sigma) = \mathcal{N}(\mathbf{0}, (c^2 + \sigma^2)\rmI) \Rightarrow \nabla_\rvx \log p(\rvx; \sigma) = \frac{-\rvx}{c^2 + \sigma^2} \Rightarrow \rmD^*(\rvx, \sigma) = \frac{c^2}{c^2 + \sigma^2}\rvx,
\end{equation*}
where the final step follows from the identity 
\begin{equation*}
    \nabla_\rvx \log p(\rvx; \sigma) = (\rmD(\rvx; \sigma) - \rvx) / \sigma^2 
\end{equation*} 
from \cite{edm}. Using this, the optimal velocity from flow matching, $\rvv^*(\rvx_t, t)$, can be computed as  
\begin{equation*}
    \rvv^*(\rvx_t, t) = \mathbb{E}[ \rvx_1 - \rvx_0 \mid \rvx_t ] = \mathbb{E} \left[ \frac{\rvx_t - \rvx_0}{t} \mid \rvx_t \right] = \frac{\rvx_t}{t} - \frac{1}{t} \mathbb{E}[\rvx_0 \mid \rvx_t].
\end{equation*}

Substituting $\rmD^*\left(\frac{\rvx_t}{1 - t}, \frac{t}{1 - t}\right)$ for $\mathbb{E}[\rvx_0 \mid \rvx_t]$, we obtain  
\begin{equation*}
    \rvv^*(\rvx_t, t) = \frac{\rvx_t}{t} - \frac{1}{t} \rmD^*\left(\frac{\rvx_t}{1 - t}, \frac{t}{1 - t}\right).
\end{equation*}

Using the closed-form expression for $\rmD^*(\rvx, \sigma)$,  
\begin{equation*}
    \rvv^*(\rvx_t, t) = \frac{\rvx_t}{t} - \frac{1}{t} \cdot \frac{c^2}{c^2 + \left(\frac{t}{1 - t}\right)^2} \cdot \frac{\rvx_t}{1 - t}.
\end{equation*}

Simplifying further,  
\begin{equation*}
    \rvv^*(\rvx_t, t) = \frac{\rvx_t}{t} \left( 1 - \frac{c^2(1 - t)}{c^2(1 - t)^2 + t^2} \right) = \left(\frac{t - c^2(1 - t)}{t^2 + (1 - t)^2 c^2}\right) \rvx_t.
\end{equation*}

Integrating this velocity along the PF-ODE,  
\begin{equation*}
    \mathrm{d}\rvx_t = \rvv^*(\rvx_t, t) \mathrm{d}t,
\end{equation*}
from $t$ to $0$ yields the optimal consistency model:  
\begin{equation*}
    \rvf^*(\rvx_t, t) = \frac{c \cdot \rvx_t}{\sqrt{t^2 + (1 - t)^2 c^2}}.
\end{equation*}

Consider the non-optimal consistency model  
\begin{equation}
    \rvf_\epsilon(\rvx_t, t) = \frac{(c + t \times \epsilon) \rvx_t}{\sqrt{t^2 + (1 - t)^2 c^2}}.
\end{equation}
Note that $\rvf_\epsilon(\rvx, 0) = \rvx$ satisfies the boundary condition and is a valid consistency model.  

Let us analyze a single multistep transition from timestep $t$ to timestep $s$. This process consists of two steps:  
1. The noise is removed by predicting the clean data using the consistency model, yielding $\rvx'_0 = \rvf_\epsilon(\rvx_t, t)$.  
2. The estimated clean data $\rvx'_0$ is then noised back to timestep $s$ using  
\begin{equation*}
   \rvx_s = s \times \rvz + (1 - s) \times \rvx'_0, \quad \rvz \sim \mathcal{N}(\mathbf{0}, \rmI).
\end{equation*}

Assuming that $\rvx_t \sim \mathcal{N}(\mathbf{0}, \sigma_t^2 \rmI)$, $\rvx_s$ will also follow an isotropic zero-mean Gaussian distribution and is obtained as follows:  
\begin{equation*}
    \rvx_s = s \times \rvz + (1 - s) \times \frac{(c + t\epsilon) \rvx_t}{\sqrt{t^2 + (1 - t)^2 c^2}}.
\end{equation*}
Therefore, the variance of $x_s$ is given by  
\begin{equation}\label{eq:recursive_fn}
    \operatorname{Var}(\rvx_s) = s^2 + \frac{ (1 - s)^2 (c + t\epsilon)^2}{t^2 + (1 - t)^2c^2} \operatorname{Var}(\rvx_t).
\end{equation}

Using this recurrence relation, we can compute the variance of the distribution obtained by running multistep CM sampling on a uniform $n$-step schedule:  
\[
[0, \frac{1}{n}, \dots, \frac{n-1}{n}, 1].
\]  
Since both the ground truth distribution and the distribution of $x_0$ derived by $n$-step sampling are isotropic zero-mean Gaussians, the Wasserstein-2 distance between them has the following closed form solution:
\[
    W_2(p(\rvx_0);p_{data}(\rvx)) = (\sqrt{\operatorname{Var}(\rvx_0)} - c)^2
\]
Let $\operatorname{Var}(s) := \operatorname{Var}(\rvx_s)$ for convenience. We will show that as $n \to \infty$, the variance $\operatorname{Var}(0)$ when computed via the recurrence defined in \Cref{eq:recursive_fn} on the uniform $n$ step schedule diverges, i.e. $\operatorname{Var}(0) \to \infty$. This means performing multi-step sampling with the consistency model will result in accumulated errors beyond a certain point. 

Define
\[
h(s) := \frac{\operatorname{Var}(s)}{s^2 + (1 - s)^2 c^2}.
\]
Plugging this into \Cref{eq:recursive_fn} gives:
\begin{equation}
    h(s) = \frac{s^2 + (1 - s)^2 (c + t\epsilon)^2 h(t)}{s^2 + (1 - s)^2 c^2}.
\end{equation}
We know $h(1) = 1$ and $h(0) = \operatorname{Var}(0) / c^2$. So, it’s enough to show that $h(0) \to \infty$ as $n \to \infty$.

It's easy to see by induction that $h(t) \geq 1$. Define $g(t) := h(t) - 1$ to measure how much it grows. Then:
\begin{equation}
\begin{split}
    g(s) &= \frac{(1 - s)^2 (c + t\epsilon)^2 (g(t) + 1) - (1 - s)^2 c^2}{s^2 + (1 - s)^2 c^2} \\
         &= \frac{(1 - s)^2}{s^2 + (1 - s)^2 c^2} \left(2ct\epsilon + t^2\epsilon^2 + (c + t\epsilon)^2 g(t)\right) \\
         &\geq \frac{(1 - s)^2}{s^2 + (1 - s)^2 c^2} (2ct\epsilon + c^2 g(t)).
\end{split}
\end{equation}

All terms are positive, so let’s lower bound $g(t)$ by defining a simpler sequence:
\begin{equation}
    g'(s) := \frac{(1 - s)^2}{s^2 + (1 - s)^2 c^2} (2ct\epsilon + c^2 g'(t)).
\end{equation}
Clearly $g(s) \geq g'(s)$, so it’s enough to show $g'(0) \to \infty$ as $n \to \infty$.

Now define:
\[
r(s) := \frac{g'(s)}{2c\epsilon}.
\]
Then the recurrence becomes:
\begin{equation}\label{eq:recursive_fn_final}
    r(s) = \frac{(1 - s)^2}{s^2 + (1 - s)^2 c^2} (t + c^2 r(t)),
\end{equation}
with $r(1) = 0$ and $r(0) = g'(0)/(2c\epsilon)$. So we just need to show $r(0) \to \infty$ as $n \to \infty$.

To analyze this, fix $s$ and $t$, and consider the function:
\begin{equation}
    f_{s, t}(x) := \frac{(1 - s)^2}{s^2 + (1 - s)^2 c^2}(t + c^2 x).
\end{equation}
This is an affine map with a unique fixed point:
\begin{equation}
    o(s, t) = \frac{t(1 - s)^2}{s^2}.
\end{equation}
Subtracting $o(s, t)$ from both sides, we get:
\begin{equation}
    f(x) - o(s, t) = \lambda(s)(x - o(s, t)).
\end{equation}
where:
\[
\lambda(s) := \frac{1}{(\frac{s}{(1 - s)c})^2 + 1} < 1.
\]

So $f_{s, t}(x)$ pulls every point toward its fixed point $o(s, t)$, with a pull factor of $\lambda(s)$. As $s \to 0$, since $o(s, t) \geq \frac{(1 - s)^2}{s}$, the fixed point $o(s, t) \to \infty$, and the pull factor $\lambda(s)$ approaches 1, meaning the pull gets weaker.

This means the recurrence in \Cref{eq:recursive_fn_final} is applying a sequence of weaker and weaker pulls toward bigger and bigger targets. To prove $r(0) \to \infty$, we now proceed by contradiction. Assume there exists some $\delta > 0$ and an infinite sequence $n_1 < n_2 < \dots$ such that $r(0) < \delta$ when using the schedule $[0, \tfrac{1}{n_i}, \dots, 1]$.

Since $o(s, t) \geq \frac{(1 - s)^2}{s}$ and the function $\frac{(1 - x)^2}{x} \to \infty$ as $x \to 0$, we can pick $t^* \in [0, 1]$ such that for all $s \in [0, t^*]$:
\begin{equation}
    \frac{(1 - t^*)^2}{t^*} = 2\delta \Rightarrow o(s, t) \geq \frac{(1 - s)^2}{s} \geq 2\delta.
\end{equation}
So every fixed point in $[0, t^*]$ is at least $2\delta$.

Now we split the problem into two cases:

\textbf{Case 1}: There exists some $s \in [0, t^*]$ where $r(s) \geq \delta$. Since all future pulls are toward values $> 2\delta$, $r(\cdot)$ will stay above $\delta$, so $r(0) \geq \delta$—a contradiction.

\textbf{Case 2}: For all $s \in [0, t^*]$, we have $r(s) < \delta$. 
Pick any $s^* \in (0, t^*)$. Let $\ell := \lambda(s^*) < 1$, which is the maximum pull factor (corresponding to the weakest pull) on $[s^*, t^*]$. Then for $s^* \leq s < t \leq t^*$:
\begin{alignat*}{2}
    2\delta - r(s) 
    &= (o(s, t) - r(s))         &&+ (2\delta - o(s, t)) \\
    &= (o(s, t) - f_{s, t}(r(t))) &&+ (2\delta - o(s, t)) \\
    &= \lambda(s)(o(s, t) - r(t)) &&+ (2\delta - o(s, t)) \\
    &\leq \lambda(s^*)(o(s, t) - r(t)) &&+ (2\delta - o(s, t)) \\
    &= \ell(o(s, t) - r(t)) &&+ (2\delta - o(s, t)) \\
    &= \ell(2\delta - r(t)) &&+ (1 - \ell)(2\delta - o(s, t)) \\
    &\leq \ell(2\delta - r(t)).
\end{alignat*}

Applying this inequality $M$ times (where $M$ is the number of steps between $s^*$ and $t^*$ on the schedule), we get:
\begin{equation}
    \delta \leq 2\delta - r(s^*) \leq 2\delta \ell^M.
\end{equation}
If $M$ is large enough so that $2\delta \ell^M < \delta$, we get a contradiction. This happens when $n > M / (t^* - s^*)$. 

Since both cases lead to a contradiction, we conclude that $r(0) \to \infty$ as $n \to \infty$, completing the proof.

\end{proof}

\textbf{Intuition:} When doing multi-step sampling with consistency models we first perform denoising by removing all noise from a noisy image to obtain a clean one, and then re-add a smaller amount of noise. But because the model is not perfect, the denoised image drifts slightly off the true data manifold. When noise is added back in, the resulting image is now slightly off the noisy manifold the model was trained on. This mismatch compounds: each denoising step starts from a slightly worse input, pushing the sample further off-manifold over time. As a result, errors accumulate with more sampling steps, leading to degrading image quality beyond a certain point.

\subsection{Theorems: Flow Map Objectives}

\subsubsection{
AYF-EMD
}
\label{app:proof_emd_loss}
\begin{theorem}[Restated from \Cref{thm:emd_loss}]
Let $\rvf_\theta(\rvx_t, t, s)$ be the flow map and $\rvv_\phi(\rvx_t, t)$ denote the pretrained flow matching model. Define $\theta^- = \text{stopgrad}(\theta)$.  
Let $t$ and $t'=t + \epsilon(s -t)$ denote two adjacent starting timesteps for a small $\epsilon > 0$. Note that $t' \in [t, s]$ is derived by taking a small step from $t$ towards $s$. 
Consider the following consistency loss function defined 
\begin{equation}\label{eq:proof_emd_discrete}
\mathcal{L}_{EMD}^{\epsilon}(\theta) = \mathbb{E}_{\rvx_t, t, s}\left[w(t, s) \|\rvf_\theta(\rvx_t, t, s) - \rvf_{\theta^-}(\rvx_{t'}, t', s)\|_2^2\right],
\end{equation}
where $\rvx_{t'}$ is obtained by using a 1-step euler solver on the PF-ODE, i.e. $\mathrm{d}\rvx_t = \rvv_\phi(\rvx_t, t) \mathrm{d}t$, from $t$ to $t'$.
In the limit as $\epsilon \to 0$, the gradient of this objective with respect to $\theta$ converges to:
\begin{equation}
    \lim_{\epsilon \to 0} (1 / \epsilon) \nabla_\theta \mathcal{L}_{EMD}^{\epsilon}(\theta) =  
    \nabla_\theta \mathbb{E}_{\rvx_t, t, s} \left[ w(t, s) (t - s) \rvf_\theta^{\top}(\rvx_t, t, s) \frac{\mathrm{d} \rvf_{\theta^-}(\rvx_t, t, s)}{\mathrm{d}t} \right].
\end{equation}
\end{theorem}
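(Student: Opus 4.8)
The plan is to follow the continuous-time consistency model argument of \citet{cm}, adapted to the extra endpoint argument $s$: differentiate the loss, observe that it vanishes to first order at $\epsilon = 0$, and read off the $O(\epsilon)$ coefficient via a Taylor expansion in $\epsilon$. First I would differentiate $\mathcal{L}_{EMD}^{\epsilon}$ in $\theta$. Since the target $\rvf_{\theta^-}(\rvx_{t'}, t', s)$ carries a stopgrad, only the leading $\rvf_\theta$ contributes, giving
\begin{equation*}
\nabla_\theta \mathcal{L}_{EMD}^{\epsilon}(\theta) = 2\,\mathbb{E}_{\rvx_t, t, s}\left[ w(t,s)\, \big(\nabla_\theta \rvf_\theta(\rvx_t, t, s)\big)^{\top} \big( \rvf_\theta(\rvx_t, t, s) - \rvf_{\theta^-}(\rvx_{t'}, t', s) \big) \right],
\end{equation*}
where $\nabla_\theta \rvf_\theta$ denotes the parameter Jacobian. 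At $\epsilon = 0$ we have $t' = t$, $\rvx_{t'} = \rvx_t$, and $\theta^-$ equals $\theta$ as a value, so the residual vanishes and $\nabla_\theta \mathcal{L}_{EMD}^{0}(\theta) = 0$; the leading behavior is therefore $O(\epsilon)$, which is exactly what the $1/\epsilon$ prefactor isolates.

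Next I would Taylor-expand the residual in $\epsilon$ about $0$. Writing $\rvx_{t'} = \rvx_t + \epsilon(s - t)\rvv_\phi(\rvx_t, t)$ and $t' = t + \epsilon(s - t)$, the chain rule gives
\begin{equation*}
\frac{\mathrm{d}}{\mathrm{d}\epsilon}\bigg|_{\epsilon = 0} \rvf_{\theta^-}(\rvx_{t'}, t', s) = (s - t)\Big( \nabla_{\rvx_t} \rvf_{\theta^-}(\rvx_t, t, s)\, \rvv_\phi(\rvx_t, t) + \partial_t \rvf_{\theta^-}(\rvx_t, t, s) \Big) = (s - t)\, \frac{\mathrm{d} \rvf_{\theta^-}(\rvx_t, t, s)}{\mathrm{d}t},
\end{equation*}
the last step being the definition of the PF-ODE tangent, since $\mathrm{d}\rvx_t/\mathrm{d}t = \rvv_\phi(\rvx_t, t)$ along the teacher trajectory. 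Hence $\rvf_\theta(\rvx_t, t, s) - \rvf_{\theta^-}(\rvx_{t'}, t', s) = -\epsilon(s - t)\, \mathrm{d}\rvf_{\theta^-}/\mathrm{d}t + O(\epsilon^2)$, and substituting, dividing by $\epsilon$, and letting $\epsilon \to 0$ yields
\begin{equation*}
\lim_{\epsilon \to 0} \frac{1}{\epsilon} \nabla_\theta \mathcal{L}_{EMD}^{\epsilon}(\theta) = 2\,\mathbb{E}_{\rvx_t, t, s}\left[ w(t,s)(t - s)\, \big(\nabla_\theta \rvf_\theta(\rvx_t, t, s)\big)^{\top} \frac{\mathrm{d} \rvf_{\theta^-}(\rvx_t, t, s)}{\mathrm{d}t} \right].
\end{equation*}
Because $\rvf_{\theta^-}$ and its tangent are detached, the integrand equals $\nabla_\theta\big( w(t,s)(t-s)\, \rvf_\theta^{\top}(\rvx_t, t, s)\, \mathrm{d}\rvf_{\theta^-}/\mathrm{d}t \big)$, so absorbing the factor $2$ into $w$ (equivalently, into the definition of the loss) and writing $(t - s) = \mathrm{sign}(t - s)\,|t - s|$, $w'(t,s) = w(t,s)|t-s|$, we recover the stated gradient.

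The main obstacle is making the limit interchange rigorous: one must justify differentiating under the expectation and show that the $O(\epsilon^2)$ remainder, once multiplied by $1/\epsilon$, still vanishes in expectation — this requires mild smoothness of $\rvf_\theta$, $\rvv_\phi$ and their first derivatives together with moment/integrability control on $p(\rvx, t)$, as is standard in the continuous-time consistency literature. A secondary subtlety worth stating explicitly is the dual role of $\theta^-$: as a \emph{value} it coincides with $\theta$ (so the residual, and hence the $\epsilon = 0$ gradient, vanishes), whereas as a \emph{stopgrad} it contributes no $\theta$-derivative (which is what lets the final expression be repackaged as a total gradient). The reductions to the continuous-time CM loss at $s = 0$ and to the flow matching loss as $s \to t$ (claimed in the main text) then follow by direct inspection of the limiting integrand, using \Cref{eq:funny_equation}.
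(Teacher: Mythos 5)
Your proposal is correct and follows essentially the same route as the paper's proof: differentiate through the stop-gradient so only the $\rvf_\theta$ factor carries a parameter derivative, Taylor-expand $\rvf_{\theta^-}(\rvx_{t'},t',s)$ in $\epsilon$, observe the $O(1)$ term cancels because $\theta^-$ equals $\theta$ as a value, and read off the $O(\epsilon)$ coefficient $(t-s)\,\mathrm{d}\rvf_{\theta^-}/\mathrm{d}t$. The only stylistic difference is that you keep the explicit factor of $2$ and then absorb it into $w$, and you briefly flag the measure-theoretic conditions (dominated convergence / integrability of the remainder) needed to interchange the limit with expectation, which the paper leaves implicit; neither changes the substance.
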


\begin{proof}
For this proof, we follow a similar logic as the proof of Theorem 5 in \cite{cm}. We start by computing the gradient of \Cref{eq:proof_emd_discrete} with respect to $\theta$ and simplifying the result to obtain
\begin{equation*}
\begin{aligned}
\frac{1}{\epsilon} \nabla_\theta \mathcal{L}_{EMD}^{\epsilon}(\theta) & = \frac{1}{\epsilon} \nabla_\theta \mathbb{E}_{\rvx_t, t, s} \left[ w(t, s) \|\rvf_\theta(\rvx_t, t, s) - \rvf_{\theta^-}(\rvx_{t'}, t', s)\|_2^2 \right] \\
& \hspace{-20mm} = \frac{1}{\epsilon} \mathbb{E}_{\rvx_t, t, s}\left[ w(t, s) \nabla_\theta \rvf_\theta(\rvx_t, t, s)^\top \left(\rvf_\theta(\rvx_t, t, s) - \rvf_{\theta^-}(\rvx_{t'}, t', s)\right) \right] \\
& \hspace{-20mm} = \frac{1}{\epsilon} \mathbb{E}_{\rvx_t, t, s}\left[ w(t, s) \nabla_\theta \rvf_\theta(\rvx_t, t, s)^\top \left(\rvf_\theta(\rvx_t, t, s) - \left(\rvf_{\theta^-}(\rvx_t, t, s) + \frac{\partial \rvf_{\theta^-}}{\partial \rvx}(\rvx_{t'} - \rvx_t) + \right.\right.\right. \\
& \hspace{-20mm} \left.\left.\left. \hspace{75mm} \frac{\partial \rvf_{\theta^-}}{\partial t} (t' - t) + O(\epsilon^2) \right)\right) \right] \\
& \hspace{-20mm} = \frac{1}{\epsilon} \mathbb{E}_{\rvx_t, t, s}\left[ w(t, s) \nabla_\theta \rvf_\theta(\rvx_t, t, s)^\top \left(- \left( \frac{\partial \rvf_{\theta^-}}{\partial \rvx}(\epsilon(s - t) \cdot \rvv_\phi(\rvx_t, t)) + \frac{\partial \rvf_{\theta^-}}{\partial t} (\epsilon(s - t)) \right)\right) \right] + O(\epsilon) \\
& \hspace{-20mm} = \mathbb{E}_{\rvx_t, t, s}\left[ w(t, s) (t - s) \nabla_\theta \rvf_\theta(\rvx_t, t, s)^\top \left( \frac{\partial \rvf_{\theta^-}}{\partial \rvx}\rvv_\phi(\rvx_t, t) + \frac{\partial \rvf_{\theta^-}}{\partial t} \right) \right] + O(\epsilon) \\
& \hspace{-20mm} = \mathbb{E}_{\rvx_t, t, s}\left[ w(t, s) (t - s) \nabla_\theta \rvf_\theta(\rvx_t, t, s)^\top \left( \frac{\mathrm{d} \rvf_{\theta^-}(\rvx_t, t, s)}{\mathrm{d}t} \right) \right] + O(\epsilon), 
\end{aligned}
\end{equation*}
Taking the limit of both sides as $\epsilon \to 0$ completes the proof. 
\end{proof}

\begin{corollary}\label{thm:emd_simplified}
    \Cref{thm:emd_loss} assumes that the step size is proportional to the interval length, i.e. $|t - t'| \propto |t - s|$, leading to the introduction of a $(t - s)$ term in the weighting function. This can be eliminated by using $t' = t + sign(s - t) \times \epsilon$ and leads to the following objective:
    \begin{equation}
        \nabla_\theta \mathbb{E}_{\rvx_t, t, s} \left[ w(t, s) sign(t - s) \rvf_\theta^{\top}(\rvx_t, t, s) \frac{\mathrm{d} \rvf_{\theta^-}(\rvx_t, t, s)}{\mathrm{d}t} \right].
    \end{equation}
\end{corollary}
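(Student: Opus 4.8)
\textbf{Proof proposal for Corollary~\ref{thm:emd_simplified}.}
The plan is to rerun the computation in the proof of \Cref{thm:emd_loss} verbatim, but with the modified choice of the adjacent starting timestep $t' = t + \mathrm{sign}(s - t)\cdot\epsilon$ in place of $t' = t + \epsilon(s-t)$. The key observation is that the only place where the specific form of $t'$ enters the Taylor expansion is through the increment $t' - t$, which now equals $\mathrm{sign}(s-t)\cdot\epsilon$ instead of $\epsilon(s-t)$. Consequently, wherever the earlier proof produced a factor $(s-t)$ from $\frac{1}{\epsilon}(t'-t)$, we instead obtain the factor $\mathrm{sign}(s-t)$, and this propagates cleanly through all subsequent lines.

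Concretely, I would: (i) expand $\rvf_{\theta^-}(\rvx_{t'}, t', s)$ to first order around $(\rvx_t, t, s)$, using $\rvx_{t'} - \rvx_t = (t'-t)\rvv_\phi(\rvx_t,t) + O(\epsilon^2)$ from the 1-step Euler update and $t' - t = \mathrm{sign}(s-t)\epsilon$; (ii) substitute into $\frac{1}{\epsilon}\nabla_\theta \mathcal{L}^\epsilon_{EMD}(\theta)$, noting that the zeroth-order term $\rvf_{\theta^-}(\rvx_t,t,s)$ cancels against $\rvf_\theta(\rvx_t,t,s)$ up to $O(\epsilon)$ inside the stop-gradient-free factor; (iii) collect the surviving first-order term, which becomes
\begin{equation*}
\mathbb{E}_{\rvx_t,t,s}\!\left[ w(t,s)\,\mathrm{sign}(t-s)\,\nabla_\theta \rvf_\theta(\rvx_t,t,s)^\top\!\left(\frac{\partial \rvf_{\theta^-}}{\partial \rvx}\rvv_\phi(\rvx_t,t) + \frac{\partial \rvf_{\theta^-}}{\partial t}\right)\right] + O(\epsilon),
\end{equation*}
where the sign flip from $\mathrm{sign}(s-t)$ to $\mathrm{sign}(t-s)$ comes from the overall minus sign in $\rvf_\theta - \rvf_{\theta^-}(\rvx_{t'},t',s)$; (iv) identify the parenthesized expression as the total derivative $\frac{\mathrm{d}\rvf_{\theta^-}(\rvx_t,t,s)}{\mathrm{d}t}$ along the PF-ODE, and pull $\nabla_\theta$ outside using the stop-gradient identity $\nabla_\theta \rvf_\theta^\top \vg = \nabla_\theta(\rvf_\theta^\top \vg)$ valid when $\vg$ does not depend on the live $\theta$; (v) take $\epsilon \to 0$ to kill the $O(\epsilon)$ remainder.

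I do not expect a serious obstacle here, since this is essentially a one-line modification of an already-established computation; the only point requiring a little care is checking that $t'$ still lies in the closed interval between $t$ and $s$ (or, more precisely, that for $\epsilon$ small enough the Euler step and Taylor expansion remain valid and the $O(\epsilon^2)$ bounds are uniform), and that the step $t' = t + \mathrm{sign}(s-t)\epsilon$ is well-defined when $s = t$ — but in that degenerate case both the loss and its gradient vanish identically, so it can be excluded without loss of generality. The upshot is that choosing a step size independent of the interval length $|t-s|$ simply trades the weighting factor $|t-s|$ for a pure sign, which is the claimed statement.
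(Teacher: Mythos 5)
Your proposal is correct and matches exactly what the paper intends: the paper gives no explicit proof for this corollary, treating it as an immediate consequence of the computation in the proof of \Cref{thm:emd_loss}, and your argument supplies those details faithfully. You correctly identify that $t'-t = \mathrm{sign}(s-t)\,\epsilon$ replaces $\epsilon(s-t)$ in the Taylor expansion, that after dividing by $\epsilon$ this yields $\mathrm{sign}(s-t)$ in place of $(s-t)$, and that the overall minus sign from $\rvf_\theta - \rvf_{\theta^-}(\rvx_{t'},t',s)$ flips it to $\mathrm{sign}(t-s)$; your handling of the degenerate $s=t$ case is a sensible extra precaution.
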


\subsubsection{
AYF-LMD 
}
\label{app:proof_lmd_loss}
\begin{theorem}[Restated from \Cref{thm:lmd_loss}]
Let $\rvf_\theta(\rvx_t, t, s)$ be the flow map, $\rvv_\phi(\rvx_t, t)$ be the pretrained flow matching model, and define $\theta^- = \text{stopgrad}(\theta)$.  
Let $s$ and $s' = s + \epsilon(t -s)$ denote two adjacent ending timesteps for a small $\epsilon > 0$. Note that $s' \in [t, s]$ is obtained by taking a small step from $s$ towards $t$. 
Consider the following consistency loss function defined 
\begin{equation}\label{eq:proof_lmd_discrete}
\mathcal{L}_{LMD}^{\epsilon}(\theta) = \mathbb{E}_{\rvx_t, t, s}\left[w(t, s) \|\rvf_\theta(\rvx_t, t, s) - ODE_{s' \to s}(\rvf_{\theta^-}(\rvx_t, t, s'))\|_2^2\right],
\end{equation}
where $ODE_{s'\to s}(x)$ refers to running a 1-step euler solver on the PF-ODE, i.e. $\mathrm{d}\rvx_t = \rvv_\phi(\rvx_t, t) \mathrm{d}t$, starting from $x$ at timestep $s'$ to timestep $s$.
In the limit as $\epsilon \to 0$, the gradient of this objective with respect to $\theta$ converges to:
\begin{equation}
    \lim_{\epsilon \to 0} (1 / \epsilon) \nabla_\theta \mathcal{L}_{LMD}^{\epsilon}(\theta) = 
    \nabla_\theta \mathbb{E}_{\rvx_t, t, s} \left[ w(t, s) (s - t) \rvf_\theta^{\top}(\rvx_t, t, s) \left(\frac{\mathrm{d} \rvf_{\theta^-}(\rvx_t, t, s)}{\mathrm{d}s} - \rvv_\phi(\rvf_{\theta^-}(\rvx_t, t, s), s) \right) \right].
\end{equation}
\end{theorem}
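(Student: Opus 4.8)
The plan is to mirror the proof of \Cref{thm:emd_loss} (and of Theorem~5 in \cite{cm}): start from the discrete loss $\mathcal{L}_{LMD}^{\epsilon}(\theta)$, differentiate with respect to $\theta$, expand the inner target to first order in $\epsilon$, and identify the $O(\epsilon)$ term whose coefficient gives the continuous-time gradient. The key structural difference from the EMD case is that here the perturbation happens at the \emph{ending} timestep $s$: we compare $\rvf_\theta(\rvx_t,t,s)$ against $ODE_{s'\to s}\bigl(\rvf_{\theta^-}(\rvx_t,t,s')\bigr)$, so we must Taylor-expand both in the shift of the second flow-map argument and in the one-step Euler integrator.

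First I would write $\nabla_\theta \mathcal{L}_{LMD}^{\epsilon}(\theta) = 2\,\mathbb{E}_{\rvx_t,t,s}\bigl[w(t,s)\,\nabla_\theta\rvf_\theta(\rvx_t,t,s)^\top\bigl(\rvf_\theta(\rvx_t,t,s) - ODE_{s'\to s}(\rvf_{\theta^-}(\rvx_t,t,s'))\bigr)\bigr]$, absorbing the factor $2$ into $w$ as in the paper. Then I would expand the target in two stages. Stage one: $\rvf_{\theta^-}(\rvx_t,t,s') = \rvf_{\theta^-}(\rvx_t,t,s) + \partial_s\rvf_{\theta^-}(\rvx_t,t,s)\,(s'-s) + O(\epsilon^2)$, where $s'-s = \epsilon(t-s)$. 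Stage two: applying the one-step Euler solver from $s'$ to $s$ adds a displacement $\rvv_\phi(\,\cdot\,,s')\,(s-s')$; since $s-s' = -\epsilon(t-s) = \epsilon(s-t)$, and to leading order the base point and time at which $\rvv_\phi$ is evaluated can both be taken at $(\rvf_{\theta^-}(\rvx_t,t,s),s)$ with only $O(\epsilon^2)$ corrections, this contributes $\epsilon(s-t)\,\rvv_\phi(\rvf_{\theta^-}(\rvx_t,t,s),s) + O(\epsilon^2)$. Combining, the target equals $\rvf_{\theta^-}(\rvx_t,t,s) + \epsilon(t-s)\,\partial_s\rvf_{\theta^-}(\rvx_t,t,s) + \epsilon(s-t)\,\rvv_\phi(\rvf_{\theta^-}(\rvx_t,t,s),s) + O(\epsilon^2)$. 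Since $\rvf_\theta(\rvx_t,t,s) - \rvf_{\theta^-}(\rvx_t,t,s) = 0$ when evaluated as values (they agree; only the gradient differs via the stopgrad), the leading surviving term in the parenthesis is $-\epsilon(t-s)\bigl(\partial_s\rvf_{\theta^-} - \rvv_\phi(\rvf_{\theta^-},s)\bigr) = \epsilon(s-t)\bigl(\partial_s\rvf_{\theta^-} - \rvv_\phi(\rvf_{\theta^-},s)\bigr)$. Dividing by $\epsilon$, taking $\epsilon\to 0$, and noting that for a fixed starting point $(\rvx_t,t)$ the total derivative $\tfrac{\mathrm{d}}{\mathrm{d}s}\rvf_{\theta^-}(\rvx_t,t,s)$ coincides with the partial $\partial_s\rvf_{\theta^-}(\rvx_t,t,s)$, yields exactly the claimed expression, after folding the constant from the gradient-of-square-norm into $w$ and pulling $\nabla_\theta$ back outside (legitimate because the $\theta$-dependence is only through the leftmost $\rvf_\theta$, as in \cite{cm}).

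The main obstacle I anticipate is justifying cleanly that the two separate first-order expansions (the shift in the $s'$-argument of the flow map, and the Euler step of the ODE) can be combined without cross terms at order $\epsilon$, and that evaluating $\rvv_\phi$ at the perturbed versus unperturbed base point only changes things at $O(\epsilon^2)$ — this requires a mild smoothness/boundedness assumption on $\rvf_\theta$ and $\rvv_\phi$, which the paper treats informally (as in the EMD proof's $O(\epsilon^2)$ bookkeeping). A secondary subtlety is keeping careful track of the signs of $s'-s$ and $s-s'$; a sign slip there would flip the relative sign of the $\partial_s\rvf$ and $\rvv_\phi$ terms, so I would verify consistency by checking the sanity case where $\rvf_{\theta^-}$ already exactly traces the PF-ODE, in which $\partial_s\rvf_{\theta^-} = \rvv_\phi(\rvf_{\theta^-},s)$ and the gradient vanishes, as it must for a correct flow map. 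Finally, the $\mathrm{sign}(s-t)$ reparametrization in the theorem statement as originally presented (versus the $(s-t)$ weighting here) follows, as in \Cref{thm:emd_simplified}, by instead choosing $s' = s + \mathrm{sign}(t-s)\epsilon$, which I would note briefly.
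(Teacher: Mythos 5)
Your proposal is correct and follows essentially the same approach as the paper's proof: differentiate the discrete objective, expand the Euler step and the shift in the flow map's second argument to first order in $\epsilon$, observe that $\rvf_\theta - \rvf_{\theta^-}$ vanishes as a value so only the $O(\epsilon)$ term survives, then divide by $\epsilon$ and take the limit. Your sign bookkeeping, the observation that $\partial_s\rvf_{\theta^-}$ and the total $s$-derivative coincide here, and the vanishing-gradient sanity check for an exact flow map are all consistent with the paper's argument.
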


\begin{proof}
We start by computing the gradient of \Cref{eq:proof_lmd_discrete} with respect to $\theta$ and simplifying the results to obtain
\begin{equation*}
\begin{aligned}
\frac{1}{\epsilon} \nabla_\theta \mathcal{L}_{LMD}^{\epsilon}(\theta) 
& = \frac{1}{\epsilon} \nabla_\theta \mathbb{E}_{\rvx_t, t, s}\left[w(t, s) \|\rvf_\theta(\rvx_t, t, s) - ODE_{s' \to s}(\rvf_{\theta^-}(\rvx_t, t, s'))\|_2^2\right] \\
& \hspace{-20mm} = \frac{1}{\epsilon} \mathbb{E}_{\rvx_t, t, s}\left[w(t, s) \nabla_\theta  \rvf_\theta(\rvx_t, t, s)^\top \left(\rvf_\theta(\rvx_t, t, s) - ODE_{s' \to s}(\rvf_{\theta^-}(\rvx_t, t, s')) \right)\right] \\
& \hspace{-20mm} = \frac{1}{\epsilon} \mathbb{E}_{\rvx_t, t, s}\left[w(t, s) \nabla_\theta  \rvf_\theta(\rvx_t, t, s)^\top \left(\rvf_\theta(\rvx_t, t, s) - (\rvf_{\theta^-}(\rvx_t, t, s') + (s - s') \cdot \rvv_\phi(\rvf_{\theta^-}(\rvx_t, t, s'), s'))\right)\right] \\
& \hspace{-20mm} = \frac{1}{\epsilon} \mathbb{E}_{\rvx_t, t, s}\left[w(t, s) \nabla_\theta  \rvf_\theta(\rvx_t, t, s)^\top \left(\rvf_\theta(\rvx_t, t, s) - \left(\rvf_{\theta^-}(\rvx_t, t, s) + \frac{\partial \rvf_{\theta^-}(\rvx_t, t, s)}{\partial s}(s' - s) + O(\epsilon^2) \right.\right.\right. \\
& \hspace{65mm}  \left.\left.\left. + (s - s') \cdot (\rvv_\phi(\rvf_{\theta^-}(\rvx_t, t, s), s) + O(\epsilon)  \right)\right)\right] \\
& \hspace{-20mm} = \frac{1}{\epsilon} \mathbb{E}_{\rvx_t, t, s}\left[w(t, s) \nabla_\theta  \rvf_\theta(\rvx_t, t, s)^\top \left(-\left(\frac{\partial \rvf_{\theta^-}(\rvx_t, t, s)}{\partial s}(s' - s) + (s - s') \rvv_\phi(\rvf_{\theta^-}(\rvx_t, t, s), s) \right)\right)\right] + O(\epsilon) \\
& \hspace{-20mm} = \mathbb{E}_{\rvx_t, t, s}\left[w(t, s) (s - t) \nabla_\theta  \rvf_\theta(\rvx_t, t, s)^\top \left(\frac{\partial \rvf_{\theta^-}(\rvx_t, t, s)}{\partial s} - \rvv_\phi(\rvf_{\theta^-}(\rvx_t, t, s), s) \right)\right] + O(\epsilon) \\
\end{aligned}
\end{equation*}

Taking the limit of both sides as $\epsilon \to 0$ completes the proof. 
\end{proof}

\begin{corollary}
    \Cref{thm:lmd_loss} assumes that the step size is proportional to the interval length, i.e. $|s - s'| \propto |t - s|$, leading to the introduction of a $(t - s)$ term in the weighting function. This can be eliminated by using $s' = s + sign(t - s) \times \epsilon$ which leads to the following objective:
    \begin{equation}
        \nabla_\theta \mathbb{E}_{\rvx_t, t, s} \left[ w(t, s) sign(s - t) \rvf_\theta^{\top}(\rvx_t, t, s) \left(\frac{\mathrm{d} \rvf_{\theta^-}(\rvx_t, t, s)}{\mathrm{d}s} - \rvv_\phi(\rvf_{\theta^-}(\rvx_t, t, s), s) \right) \right].
    \end{equation}
\end{corollary}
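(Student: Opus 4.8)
The plan is to mirror the proof of \Cref{thm:lmd_loss}, since the corollary differs from it only in how the auxiliary endpoint $s'$ is placed: instead of $s' = s + \epsilon(t-s)$ (step size proportional to the interval length $|t-s|$) we now take $s' = s + \sign(t-s)\,\epsilon$ (step size exactly $\epsilon$, still directed from $s$ toward $t$). I would fix a sample $(\rvx_t, t, s)$ with $s \neq t$ — the diagonal $s=t$ has measure zero and contributes nothing, since there $ODE_{t\to t}$ is the identity and the general boundary condition forces the integrand to vanish — and assume $\rvf_{\theta^-}$ and $\rvv_\phi$ are $C^1$ in their arguments, exactly as in \Cref{thm:emd_loss,thm:lmd_loss}, so that first-order Taylor expansions with $O(\epsilon^2)$ remainders are available.

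First I would unfold the one-step Euler operator as $ODE_{s'\to s}(x) = x + (s-s')\,\rvv_\phi(x, s')$ and record the two sign facts that do all the work: with $s' = s + \sign(t-s)\,\epsilon$ one has $s' - s = \sign(t-s)\,\epsilon = -\sign(s-t)\,\epsilon$ and $s - s' = \sign(s-t)\,\epsilon$. Then I would Taylor-expand $\rvf_{\theta^-}(\rvx_t, t, s') = \rvf_{\theta^-}(\rvx_t,t,s) + (s'-s)\,\partial_s\rvf_{\theta^-}(\rvx_t,t,s) + O(\epsilon^2)$ and $\rvv_\phi(\rvf_{\theta^-}(\rvx_t,t,s'), s') = \rvv_\phi(\rvf_{\theta^-}(\rvx_t,t,s), s) + O(\epsilon)$, substitute both into the Euler operator, and collect orders of $\epsilon$ to get
\[
ODE_{s'\to s}(\rvf_{\theta^-}(\rvx_t,t,s')) = \rvf_{\theta^-}(\rvx_t,t,s) + \sign(s-t)\,\epsilon\Bigl(\rvv_\phi(\rvf_{\theta^-}(\rvx_t,t,s),s) - \partial_s\rvf_{\theta^-}(\rvx_t,t,s)\Bigr) + O(\epsilon^2).
\]
Since $\rvx_t$ and $t$ are held fixed and do not depend on $s$, the partial $\partial_s\rvf_{\theta^-}$ coincides with the total derivative $\tfrac{\mathrm{d}}{\mathrm{d}s}\rvf_{\theta^-}(\rvx_t,t,s)$, which is the object appearing in the statement.

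Next I would differentiate the loss, using $\nabla_\theta\|\rvf_\theta - b\|_2^2 = \nabla_\theta\rvf_\theta^\top(\rvf_\theta - b)$ up to the constant factor absorbed into $w$ (as in the sibling proofs), plug in the expansion above, and observe that the $O(1)$ contribution is $\rvf_\theta(\rvx_t,t,s) - \rvf_{\theta^-}(\rvx_t,t,s)$, which is exactly zero at the point of evaluation because $\theta^- = \text{stopgrad}(\theta)$ agrees numerically with $\theta$. What survives is then $O(\epsilon)$:
\[
\nabla_\theta\mathcal{L}_{LMD}^{\epsilon}(\theta) = \epsilon\,\mathbb{E}_{\rvx_t,t,s}\!\left[w(t,s)\,\sign(s-t)\,\nabla_\theta\rvf_\theta^\top(\rvx_t,t,s)\!\left(\tfrac{\mathrm{d}\rvf_{\theta^-}(\rvx_t,t,s)}{\mathrm{d}s} - \rvv_\phi(\rvf_{\theta^-}(\rvx_t,t,s),s)\right)\right] + O(\epsilon^2).
\]
Dividing by $\epsilon$, sending $\epsilon\to 0$ to drop the remainder, and pulling $\nabla_\theta$ outside the expectation (legitimate because $\rvf_{\theta^-}$ and $\rvv_\phi$ are constant in $\theta$ at the evaluation point, so $\nabla_\theta\rvf_\theta^\top(\cdot) = \nabla_\theta[\rvf_\theta^\top(\cdot)]$) would then yield exactly the claimed gradient.

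I expect the delicate step to be the sign bookkeeping paired with the exactness of the zeroth-order cancellation: one must check that $ODE_{s\to s}$ is the identity map on the nose (so the $O(1)$ term is \emph{precisely} $\rvf_\theta - \rvf_{\theta^-}$ rather than merely $O(\epsilon)$-close to it), and then verify that directing the Euler step toward $t$ flips a sign twice — once through the step length $s-s'$ that multiplies $\rvv_\phi$, and once through $s'-s$ that multiplies $\partial_s\rvf_{\theta^-}$ — so that both surviving terms carry the common factor $\sign(s-t)$ and the $(t-s)$ weight of \Cref{thm:lmd_loss} collapses to $\sign(s-t)$. Everything else is the same routine first-order expansion already carried out in the proof of \Cref{thm:lmd_loss}.
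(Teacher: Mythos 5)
Your proposal is correct and follows exactly the route the paper intends: the paper gives no separate proof of this corollary, treating it as an immediate rerun of the proof of \Cref{thm:lmd_loss} with the step $s' - s = \epsilon(t-s)$ replaced by $s' - s = \sign(t-s)\,\epsilon$. Your sign bookkeeping ($s-s' = \sign(s-t)\,\epsilon$, $s'-s = -\sign(s-t)\,\epsilon$), the first-order Taylor expansion, the exact cancellation of the $O(1)$ term via $\theta^- = \text{stopgrad}(\theta)$, and the identification $\partial_s \rvf_{\theta^-} = \tfrac{\mathrm{d}}{\mathrm{d}s}\rvf_{\theta^-}$ all match the argument in Appendix~\ref{app:proof_lmd_loss}; the only change is that the factor $|t-s|$ coming from the proportional step size drops out, leaving only $\sign(s-t)$, exactly as claimed.
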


\subsection{
Derivation: Tangent Warmup as Linearity Regularization
}\label{app:funny_derivation}
Here, we derive \Cref{eq:funny_equation}. This equation shows the equivalence between the tangent warmup technique and a regularization term on flow maps that encourages linearity.
\begin{equation}
\begin{aligned}
    &\nabla_\theta \left[ sign(t - s) \rvf_\theta^\top(\rvx_t, t, s) \times \left( \frac{\mathrm{d}\rvx_t}{\mathrm{d}t} - \mathbf{F}_{\theta^-}(\rvx_t, t, s) \right) \right] \\
    &= \nabla_\theta \left[ sign(t - s) (\rvx_t + (s - t)\mathbf{F}_\theta(\rvx_t, t, s)) \times \left( \rvv_\phi(\rvx_t, t) - \mathbf{F}_{\theta^-}(\rvx_t, t, s) \right) \right] \\
    &= \nabla_\theta \left[ -|t - s| \times \mathbf{F}_\theta(\rvx_t, t, s) \times \left( \rvv_\phi(\rvx_t, t) - \mathbf{F}_{\theta^-}(\rvx_t, t, s) \right) \right] \\
    &\stackrel{(i)}{\propto} \nabla_\theta \left[ \mathbf{F}_\theta(\rvx_t, t, s) \times \left( \mathbf{F}_{\theta^-}(\rvx_t, t, s) - \rvv_\phi(\rvx_t, t) \right) \right] \\
    &= \left(\nabla_\theta \mathbf{F}_\theta(\rvx_t, t, s)\right) \times \left( \mathbf{F}_{\theta^-}(\rvx_t, t, s) - \rvv_\phi(\rvx_t, t) \right)  \\
    &\stackrel{(ii)}{=} \left(\nabla_\theta \mathbf{F}_\theta(\rvx_t, t, s)\right) \times \left( \mathbf{F}_{\theta}(\rvx_t, t, s) - \rvv_\phi(\rvx_t, t) \right)  \\
    &\stackrel{(iii)}{=} \left(\nabla_\theta [\mathbf{F}_\theta(\rvx_t, t, s) - \rvv_\phi(\rvx_t, t)]\right) \times \left( \mathbf{F}_{\theta}(\rvx_t, t, s) - \rvv_\phi(\rvx_t, t) \right)  \\
    &= 0.5 \times \nabla_\theta \|\mathbf{F}_\theta(\rvx_t, t, s) - \rvv_\phi(\rvx_t, t, s)\|_2^2,  \\
\end{aligned}
\end{equation}
where (i) is because we discard $|t - s| \geq 0$, which doesn't change the gradient direction, (ii) is because $\mathbf{F}_\theta = \mathbf{F}_{\theta^-}$, and (iii) is because $\nabla_\theta \rvv_\phi(\rvx_t, t) = 0$.

\section{Connections to Existing Methods}
In this section, we highlight the connections between AYF and existing methods. We show how AYF generalizes several prior approaches and discuss its relationship to recent concurrent works. See \Cref{fig:ayf_generalizes_prior_works} for a schematic overview of these connections. In the following subsections, we will derive these relationships in detail.

\begin{figure}[h!]
    \centering
    \includegraphics[width=0.75\linewidth]{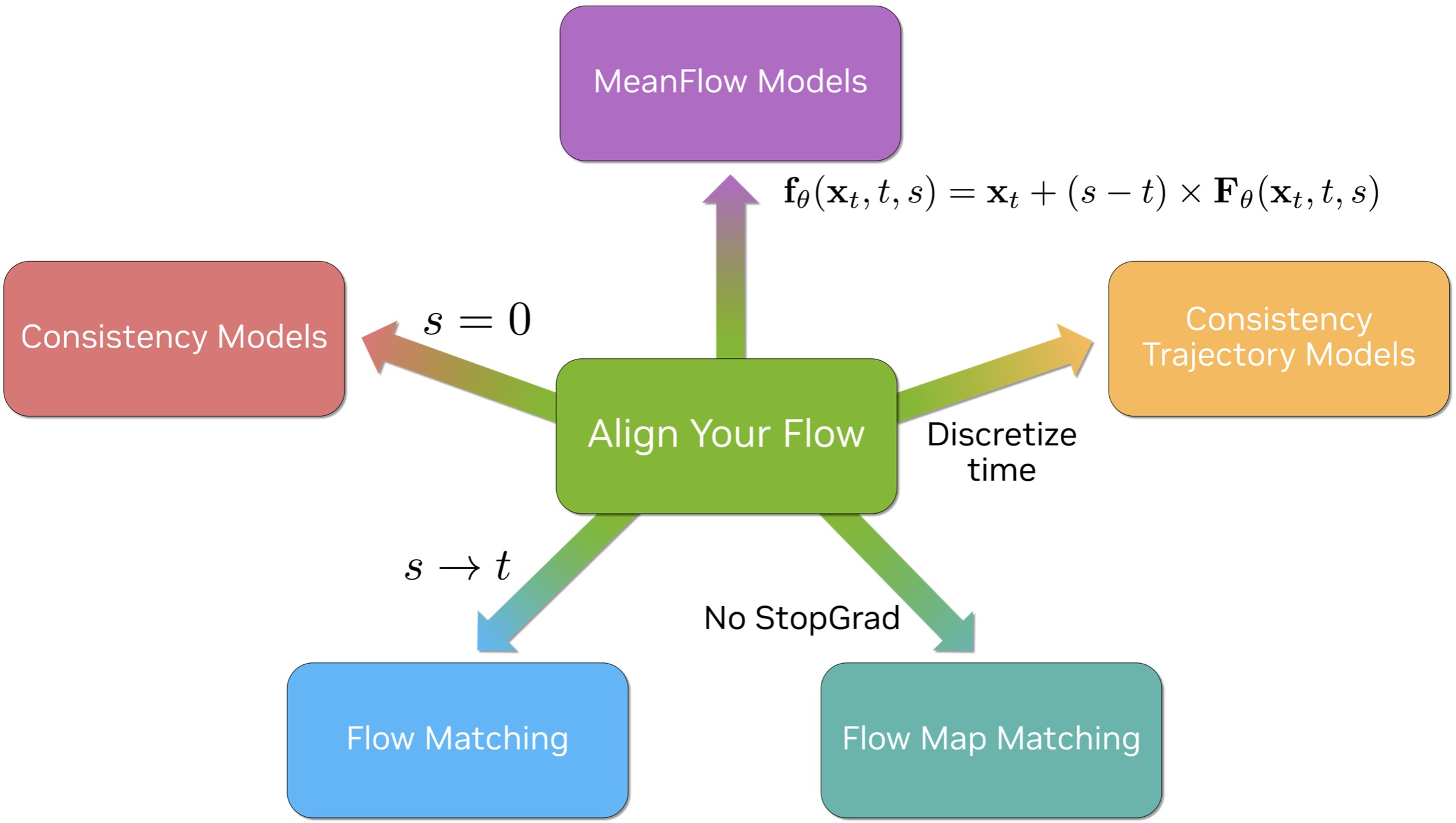}
    \caption{AYF can be seen a a generalization of many prior works such as Flow Matching~\cite{lipman2022flow}, Continuous-time Consistency Models~\cite{scm}, Flow Map Matching~\cite{Boffi2024FlowMM}, Consistency Trajectory Distillation~\cite{zheng2024trajectory}, and the concurrent MeanFlow Models~\cite{geng2025mean}.}
    \label{fig:ayf_generalizes_prior_works}
\end{figure}

\subsection{Flow Matching}
We show that the AYF-EMD objective introduced in \cref{thm:emd_loss} generalizes the standard flow matching objective. In particular, we prove that the gradient of the AYF-EMD objective reduces to the flow matching gradient in the limit as $s \to t$, up to a constant factor.

Recall the flow map parameterization:
\begin{equation}
\rvf_\theta(\rvx_t, t, s) = \rvx_t + (s - t)\, \mathbf{F}_\theta(\rvx_t, t, s).
\end{equation}
Substituting into the AYF-EMD objective gives:
\begin{equation}\label{eq:emd_to_fm}
\begin{aligned}
    &\nabla_\theta \mathbb{E}_{\rvx_t, t, s} \left[ w'(t, s)\, \text{sign}(t - s)\, \rvf_\theta^\top(\rvx_t, t, s)\, \frac{\mathrm{d} \rvf_{\theta^-}(\rvx_t, t, s)}{\mathrm{d}t} \right] \\
    &= \nabla_\theta \mathbb{E}_{\rvx_t, t, s} \left[ w'(t, s)\, \text{sign}(t - s)\, \left( \rvx_t + (s - t)\mathbf{F}_\theta \right)^{\top} \left( \frac{\mathrm{d} \rvx_t}{\mathrm{d}t} - \mathbf{F}_{\theta^-} + (s - t)\, \frac{\mathrm{d} \mathbf{F}_{\theta^-}}{\mathrm{d}t} \right) \right] \\
    &= \nabla_\theta \mathbb{E}_{\rvx_t, t, s} \left[ -w'(t, s)\, |t - s|\, \mathbf{F}_\theta^\top \left( \frac{\mathrm{d} \rvx_t}{\mathrm{d}t} - \mathbf{F}_{\theta^-} + (s - t)\, \frac{\mathrm{d} \mathbf{F}_{\theta^-}}{\mathrm{d}t} \right) \right] \\
    &\propto \nabla_\theta \mathbb{E}_{\rvx_t, t, s} \left[ \mathbf{F}_\theta^\top \left( \mathbf{F}_{\theta^-} - \frac{\mathrm{d} \rvx_t}{\mathrm{d}t} + (t - s)\, \frac{\mathrm{d} \mathbf{F}_{\theta^-}}{\mathrm{d}t} \right) \right].
\end{aligned}
\end{equation}
Taking the limit as $s \to t$ gives:
\begin{equation}
\begin{aligned}
    \nabla_\theta \mathbb{E}_{\rvx_t, t} \left[ \mathbf{F}_\theta^\top(\rvx_t, t, t) \left( \mathbf{F}_{\theta^-}(\rvx_t, t, t) - \frac{\mathrm{d} \rvx_t}{\mathrm{d}t} \right) \right] 
    &= \nabla_\theta \mathbb{E}_{\rvx_t, t} \left[ \left\| \mathbf{F}_\theta(\rvx_t, t, t) - \frac{\mathrm{d} \rvx_t}{\mathrm{d}t} \right\|_2^2 \right],
\end{aligned}
\end{equation}
which is exactly the standard flow matching loss.

\subsection{Continuous-Time Consistency Models}
We show that the AYF-EMD objective introduced in \cref{thm:emd_loss} also generalizes the continuous-time consistency model objective. Specifically, we prove that the AYF-EMD objective reduces to the continuous CM objective when $s = 0$, up to a constant factor.

Consider the AYF-EMD objective:
\begin{equation}
    \nabla_\theta \mathbb{E}_{\rvx_t, t, s} \left[ w'(t, s)\, \text{sign}(t - s)\, \rvf_\theta^{\top}(\rvx_t, t, s) \frac{\mathrm{d} \rvf_{\theta^-}(\rvx_t, t, s)}{\mathrm{d}t} \right].
\end{equation}
Setting $s = 0$ gives:
\begin{equation}
    \nabla_\theta \mathbb{E}_{\rvx_t, t} \left[ w'(t, 0)\, \rvf_\theta^{\top}(\rvx_t, t, 0) \frac{\mathrm{d} \rvf_{\theta^-}(\rvx_t, t, 0)}{\mathrm{d}t} \right],
\end{equation}
since $\text{sign}(t) = 1$ for $t \in [0, 1]$. Noting that $\rvf_\theta(\rvx_t, t, 0)$ corresponds to the CM prediction at noise level $t$, this recovers the standard continuous-time CM objective~\citep{scm}.

\subsection{Consistency Trajectory Models}\label{sec:ayf_generalizes_tcd}
In this part, we discuss the connection to Consistency Trajectory Models (CTMs)~\citep{ctm,zheng2024trajectory}. Recall from the derivation of the AYF-EMD objective in \Cref{app:proof_emd_loss} that its gradient corresponds to the continuous-time limit of a discrete consistency loss. Interestingly, TCD uses this exact same discrete consistency loss to train its flow map, with a fixed discretization schedule. Therefore, the TCD objective can be seen as a discrete approximation of the AYF-EMD loss.

\subsection{Flow Map Matching}\label{sec:ayf_generalizes_fmm}
In this part, we highlight the similarities and differences between our AYF objectives and the losses proposed by \citet{Boffi2024FlowMM}. 

\subsubsection{EMD}
Here, we will show the connection between our AYF-EMD objective and the EMD loss proposed by \citet{Boffi2024FlowMM}. Recall the EMD loss from their work:
\begin{equation}
    \nabla_\theta \mathbb{E}_{\rvx_t, t, s} \left[ w(t, s) \left\| \partial_{t}\rvf_\theta(\rvx_t, t, s) + \nabla_{\rvx} \rvf_\theta(\rvx_t, t, s) \cdot \frac{\mathrm{d}\rvx_t}{\mathrm{d}t} \right\|_2^2 \right] =
    \nabla_\theta \mathbb{E}_{\rvx_t, t, s} \left[ w(t, s) \left\| \frac{\mathrm{d}\rvf_\theta(\rvx_t, t, s)}{\mathrm{d}t} \right\|_2^2 \right].
\end{equation}

This loss can be derived in a similar fashion as our AYF-EMD loss by introducing a modification to \Cref{eq:proof_emd_discrete}. Specifically, by replacing the second term $\rvf_{\theta^-}(\rvx_{t'}, t', s)$ with $\rvf_{\theta}(\rvx_{t'}, t', s)$ and allowing gradients to flow through both terms, we recover the EMD loss from \citet{Boffi2024FlowMM}.

Concretely, we can show that in the limit as $\epsilon \to 0$, the gradient of this objective with respect to $\theta$ converges to:
\begin{equation}
    \lim_{\epsilon \to 0} \frac{1}{\epsilon^2} \nabla_\theta \mathbb{E}_{\rvx_t, t, s}\left[w(t, s) \|\rvf_\theta(\rvx_t, t, s) - \rvf_{\theta}(\rvx_{t'}, t', s)\|_2^2\right] =  
    \nabla_\theta \mathbb{E}_{\rvx_t, t, s} \left[ w'(t, s) \left\| \frac{\mathrm{d} \rvf_{\theta}(\rvx_t, t, s)}{\mathrm{d}t} \right\|_2^2 \right],
\end{equation}
where $w'(t, s) = w(t, s) \times |t - s|^2$.

The proof is as follows:
\begin{equation*}
\begin{aligned}
& \quad \ \ \frac{1}{\epsilon^2} \nabla_\theta \mathbb{E}_{\rvx_t, t, s} \left[ w(t, s) \|\rvf_\theta(\rvx_t, t, s) - \rvf_{\theta}(\rvx_{t'}, t', s)\|_2^2 \right] \\
& = \frac{1}{\epsilon^2} \nabla_\theta \mathbb{E}_{\rvx_t, t, s}\left[ w(t, s)  \left\| \epsilon (t - s) \frac{\mathrm{d} \rvf_{\theta}(\rvx_t, t, s)}{\mathrm{d}t} + O(\epsilon^2) \right\|_2^2 \right] \\
& = \nabla_\theta \mathbb{E}_{\rvx_t, t, s}\left[ w(t, s)  \left\| (t - s) \frac{\mathrm{d} \rvf_{\theta}(\rvx_t, t, s)}{\mathrm{d}t} + O(\epsilon) \right\|_2^2 \right] \\
& = \nabla_\theta \mathbb{E}_{\rvx_t, t, s}\left[ w'(t, s)  \left\| \frac{\mathrm{d} \rvf_{\theta}(\rvx_t, t, s)}{\mathrm{d}t} \right\|_2^2 \right] + O(\epsilon), \\
\end{aligned}
\end{equation*}
which converges to the gradient of the EMD loss as $\epsilon \to 0$. Note that when deriving AYF-EMD, taking the limit of the discrete consistency loss when $\epsilon \to 0$ only required dividing by $\epsilon$, since the relevant term decays linearly in $\epsilon$. In contrast, the gradient in this case decays quadratically with respect to $\epsilon$, which is why we divide by $\epsilon^2$ before taking the limit.

The small difference in the AYF-EMD derivation, namely applying a stop-gradient operation on the second term in \Cref{eq:proof_emd_discrete}, has a significant effect on training dynamics. Without it, one must backpropagate through the Jacobian-vector product (JVP) used to compute $\frac{\mathrm{d}\rvf_\theta(\rvx_t, t, s)}{\mathrm{d}t}$. This often introduces instability and slows down training. 
However, with the stop-gradient operation, one must only compute the JVP without needing to backpropagate through it. Fortunately, modern autograd libraries like PyTorch support forward-mode automatic differentiation, which allows computing the JVP efficiently with minimal overhead.

Intuitively, applying the stop-gradient means the output of a large step with the flow map is pushed toward the output of following the PF-ODE trajectory for a bit and then doing a smaller step with the flow map. Without the stop-gradient, the small step is also encouraged to match the outcome of the large step, which is counterintuitive. This is because learning a flow map becomes more difficult as the interval length increases. Smaller steps are typically more reliable and offer better approximations.

Ultimately, the decision to include or omit the stop-gradient operation in \Cref{eq:proof_emd_discrete} leads to two fundamentally different derivations and objectives. In our experiments, only the AYF-EMD variant, where the stop-gradient is applied, was able to scale effectively to large-scale image datasets and produce high-quality outputs.

\subsubsection{LMD}
An analogous connection can also be made between our AYF-LMD objective and the LMD loss from \citet{Boffi2024FlowMM}. Recall the LMD loss from their work:
\begin{equation}
    \nabla_\theta \mathbb{E}_{\rvx_t, t, s} \left[ w(t, s) \left\| \partial_{s}\rvf_\theta(\rvx_t, t, s) - \rvv_\phi(\rvf_\theta(\rvx_t, t, s), s) \right\|_2^2 \right].
\end{equation}

Similar to before, we will show that by removing the stop-gradient operation from \Cref{eq:proof_lmd_discrete} and allowing gradients to flow through both terms, we recover the LMD loss from \citet{Boffi2024FlowMM}.

Concretely, we can show that as $\epsilon \to 0$, the gradient of this objective with respect to $\theta$ converges to:
\begin{equation}
\begin{aligned}    
\lim_{\epsilon \to 0} \frac{1}{\epsilon^2} \mathbb{E}_{\rvx_t, t, s}\left[w(t, s) \|\rvf_\theta(\rvx_t, t, s) - ODE_{s' \to s}(\rvf_{\theta}(\rvx_t, t, s'))\|_2^2\right] \\
= \nabla_\theta \mathbb{E}_{\rvx_t, t, s} \left[ w(t, s) \left\| \partial_{s}\rvf_\theta(\rvx_t, t, s) - \rvv_\phi(\rvf_\theta(\rvx_t, t, s), s) \right\|_2^2 \right],
\end{aligned}
\end{equation}
where $w'(t, s) = w(t, s) \times |t - s|^2$.

The proof is as follows:
\begin{equation*}
\begin{aligned}
& \quad \ \ \lim_{\epsilon \to 0} \frac{1}{\epsilon^2} \mathbb{E}_{\rvx_t, t, s}\left[w(t, s) \|\rvf_\theta(\rvx_t, t, s) - ODE_{s' \to s}(\rvf_{\theta}(\rvx_t, t, s'))\|_2^2\right] \\
& = \frac{1}{\epsilon^2}  \lim_{\epsilon \to 0} \mathbb{E}_{\rvx_t, t, s}\left[w(t, s) \|\rvf_\theta(\rvx_t, t, s) - (\rvf_\theta(\rvx_t, t, s') + (s - s') \cdot \rvv_\phi(\rvf_\theta(\rvx_t, t, s'), s'))\|_2^2\right] \\
& = \frac{1}{\epsilon^2}  \lim_{\epsilon \to 0} \mathbb{E}_{\rvx_t, t, s}\left[w(t, s) \|(\rvf_\theta(\rvx_t, t, s) - \rvf_\theta(\rvx_t, t, s')) - (\epsilon(s - t) \cdot \rvv_\phi(\rvf_\theta(\rvx_t, t, s'), s'))\|_2^2\right] \\
& = \frac{1}{\epsilon^2}  \lim_{\epsilon \to 0} \mathbb{E}_{\rvx_t, t, s}\left[w(t, s) \|((s - s') \partial_s \rvf_\theta(\rvx_t, t, s) + O(\epsilon^2)) - (\epsilon(t - s) \cdot \rvv_\phi(\rvf_\theta(\rvx_t, t, s), s) + O(\epsilon^2))\|_2^2\right] \\
& = \frac{1}{\epsilon^2}  \lim_{\epsilon \to 0} \mathbb{E}_{\rvx_t, t, s}\left[w(t, s) \|(\epsilon(t - s) \partial_s \rvf_\theta(\rvx_t, t, s) + O(\epsilon^2)) - (\epsilon(t - s) \cdot \rvv_\phi(\rvf_\theta(\rvx_t, t, s), s) + O(\epsilon^2))\|_2^2\right] \\
& = \lim_{\epsilon \to 0} \mathbb{E}_{\rvx_t, t, s}\left[w'(t, s) \|(\partial_s \rvf_\theta(\rvx_t, t, s) + O(\epsilon)) - ( \rvv_\phi(\rvf_\theta(\rvx_t, t, s), s) + O(\epsilon))\|_2^2\right] \\
&= \mathbb{E}_{\rvx_t, t, s}\left[w'(t, s) \|\partial_s \rvf_\theta(\rvx_t, t, s) - \rvv_\phi(\rvf_\theta(\rvx_t, t, s), s)\|_2^2\right].
\end{aligned}
\end{equation*}

As before, applying the stop-gradient operation allows us to avoid backpropagating through the JVP, which speeds up training. Unlike the EMD case though, we found that the LMD loss from \citet{Boffi2024FlowMM} was already stable in practice and did not introduce training instabilities. Since both versions performed similarly in our experiments, we recommend using AYF-LMD for its improved training efficiency (note, however, that in all image generation experiments, we found AYF-EMD to perform better; see ablation study~\Cref{table:ablations}).

Intuitively, the stop-gradient in this loss plays a similar role as in the other objective: we want the output of a large step of the flow map to match the result of taking a smaller step with the flow map, followed by integrating the PF-ODE.

Another way to understand this is through \Cref{eq:proof_lmd_discrete}. This loss fixes the smaller step $\rvf_\theta(x_t, t, s')$ and optimizes the larger step $\rvf_\theta(x_t, t, s)$ so that it aligns with the velocity field at the smaller step. As $s' \to s$, we can think of $\rvf_\theta(x_t, t, s)$ as fixed. The model then adjusts the slope of the flow map with respect to $s$ so that it matches the teacher flow at that point. Without the stop-gradient, the endpoint is no longer fixed and can also move to match the teacher, which changes the optimization behavior.

\subsection{MeanFlow Models}\label{sec:ayf_generalizes_mfm}
In this section, we will show the connection to MeanFlow Models~\citep{geng2025mean}, which are a concurrent work focused on training flow maps from scratch. We will show that the AYF-EMD objective reduces to the MeanFlow loss assuming an Euler parametrization of the flow map $\rvf_\theta(\rvx_t, t, s) = \rvx_t + (s - t) \rmF_\theta(\rvx_t, t, s)$. This parametrization is inspired by the first-order DDIM~\cite{ddim} solver of diffusion models, which uses Euler Integration to solve the probability flow ODE.   

Recall the MeanFlow objective:
\begin{equation}
    \mathcal{L}_{\text{MeanFlow}}(\theta) = \mathbb{E}_{\rvx_t, t, s}\left[ \left\|
    \rmF_{\theta}(\rvx_t, t, s)  - \left(\frac{\mathrm{d}\rvx_t}{\mathrm{d}t} - (t - s)\frac{\mathrm{d}\rmF_{\theta^-}(\rvx_t, t, s)}{\mathrm{d}t}\right)
    \right\|_2^2 \right].
\end{equation}
Taking the gradient with respect to $\theta$:
\begin{equation}
\begin{aligned}
    \nabla_{\theta} \mathcal{L}_{\text{MeanFlow}}(\theta) &= \nabla_{\theta} \mathbb{E}_{\rvx_t, t, s}\left[ \left\|
    \rmF_{\theta}(\rvx_t, t, s)  - \left(\frac{\mathrm{d}\rvx_t}{\mathrm{d}t} - (t - s)\frac{\mathrm{d}\rmF_{\theta^-}(\rvx_t, t, s)}{\mathrm{d}t}\right)
    \right\|_2^2 \right] \\
    &= \mathbb{E}_{\rvx_t, t, s}\left[2
    \nabla_{\theta} \rmF_{\theta}^\top\left(\rmF_{\theta} - \frac{\mathrm{d}\rvx_t}{\mathrm{d}t} + (t - s)\frac{\mathrm{d}\rmF_{\theta^-}}{\mathrm{d}t}\right)
     \right] \\
     &= \mathbb{E}_{\rvx_t, t, s}\left[2
    \nabla_{\theta} \rmF_{\theta}^\top \left(\rmF_{\theta^-} - \frac{\mathrm{d}\rvx_t}{\mathrm{d}t} + (t - s)\frac{\mathrm{d}\rmF_{\theta^-}}{\mathrm{d}t}\right)
     \right] \\
     &= 2\nabla_{\theta} \mathbb{E}_{\rvx_t, t, s}\left[
    \rmF_{\theta}^\top \cdot \left(\rmF_{\theta^-} - \frac{\mathrm{d}\rvx_t}{\mathrm{d}t} + (t - s)\frac{\mathrm{d}\rmF_{\theta^-}}{\mathrm{d}t}\right)
     \right], \\
\end{aligned}
\end{equation}
which matches the AYF-EMD objective using an Euler parametrization up to a constant, as shown in \Cref{eq:emd_to_fm}.

\section{Flow Maps in Prior and Concurrent Works}
Flow maps (or specific instances of them) have appeared in various prior works. In the previous section, we have derived relations to some prior and concurrent works already. In this section, we provide a broader overview. Broadly, prior works on flow maps can be grouped into discrete-time and continuous-time methods.

\subsection{Discrete-time Flow Maps}
Flow maps were initially proposed as a natural generalization of consistency models (CMs), and early work primarily focused on discrete-time formulations built on discrete consistency losses.

\textbf{Consistency Trajectory Models (CTM)}~\cite{ctm} were the first to explicitly introduce and study flow maps. CTM trains discrete-time flow maps using a combination of discrete consistency loss, flow matching loss, and an adversarial loss. Notably, CTM was also the first to introduce $\gamma$-sampling for stochastic sampling of flow maps. While the models produced high-quality samples, their performance depended heavily on the adversarial component, with a significant FID drop when it was removed.

\textbf{Trajectory Consistency Distillation}~\cite{zheng2024trajectory} builds on CTM, extending it to text-to-image generation. They empirically demonstrate that standard CMs degrade in quality as the number of function evaluations (NFEs) increases, a problem which is solved when using flow maps. These models can be viewed as a discrete variant of AYF, as shown in \Cref{sec:ayf_generalizes_tcd}.

\textbf{Bidirectional CMs}~\cite{li2024bidirectional} observe that most prior work only trained flow maps in the denoising direction ($s < t$). They propose training on unordered timestep pairs, accelerating application like inversion, inpainting, interpolation, and blind restoration.

\citet{kim2024generalizedconsistencytrajectorymodels} trained CTMs connecting arbitrary distributions by operating within the flow matching framework instead of diffusions. 

In parallel, \textbf{Multistep CMs}~\cite{heek2024multistep} attempted to address the poor multi-step behavior of standard CMs. They divide the denoising trajectory into subintervals and train separate CMs within each, achieving strong performance with as few as 2–8 steps. \textbf{Phased CMs}~\cite{Wang2024PhasedCM} adopt a similar idea but add adversarial training within each subinterval. These models effectively learn flow maps by training on $(t, s)$ pairs where $s$ is the start of a fixed subinterval containing $t$. This requires the inference step count to be pre-determined during training, and cannot be changed to trade off compute and quality.

\subsection{Continuous-time Flow Maps}
More recently, several methods have tackled training flow maps in continuous time.

\textbf{Flow Map Matching}~\cite{Boffi2024FlowMM} provides a formal and rigorous analysis of continuous-time flow maps and proposes several continuous-time losses. However, their empirical validation is limited to small-scale experiments. In \Cref{sec:ayf_generalizes_tcd}, we discuss the connection between AYF and Flow Map Matching. 

\textbf{Shortcut Models}~\cite{Frans2024OneSD} also operate in the continuous-time flow map setting. They propose an objective combining flow matching and a self-consistency loss. Their full loss (up to constants) is:
\begin{equation}
    \mathcal{L}(\theta) = \mathbb{E}_{\rvx_t, t, s} \left[ \left\| \rmF_\theta(\rvx_t, t, t) - \frac{\mathrm{d}\rvx_t}{\mathrm{d}t} \right\|_2^2  + 
    \left\| \rvf_\theta(\rvx_t, t, s) - \rvf_{\theta^-}\left(\rvf_{\theta^-}\left(\rvx_t, t, \frac{t + s}{2}\right), \frac{t + s}{2}, s\right) \right\|_2^2
    \right],
\end{equation}
where the flow map is parameterized as $\rvf_\theta(\rvx_t, t, s) = \rvx_t + (s - t) \rmF_\theta(\rvx_t, t, s)$. Intuitively, the self-consistency term encourages agreement between one large step and two smaller intermediate steps along the PF-ODE denoising path. To train from scratch, they use the empirical estimate $(\rvx_1 - \rvx_0)$ in place of $\frac{\mathrm{d}\rvx_t}{\mathrm{d}t}$.Although their results on ImageNet-256 and CelebAHQ-256 are promising, they use suboptimal architectures and achieve significantly worse FID scores than state-of-the-art methods.

The Shortcut loss can also be used for distillation by replacing $\frac{\mathrm{d}\rvx_t}{\mathrm{d}t}$ with outputs from a pretrained velocity model. We include this variant in our ablations (see \Cref{table:ablations} for details).

\textbf{Inductive Moment Matching}~\cite{zhou2025inductive} is a recent method for training flow maps from scratch. It proposes using an MMD loss to align the \textit{distributions} of $\rvf_\theta(\rvx_t, t, s)$ and $\rvf_{\theta^-}(\rvx_r, r, s)$ over tuples $(s, r, t)$ satisfying $s < r < t$.

\subsection{Concurrent Works}
Two concurrent works have also investigated training continuous-time flow maps.   

\textbf{MeanFlow Models}~\cite{geng2025mean} study training flow maps from scratch using a loss closely related to our AYF-EMD objective. In fact, their formulation corresponds to a special case of AYF-EMD under an Euler flow map parameterization: $\rvf_\theta(\rvx_t, t, s) = \rvx_t + (s - t) \times \rmF_\theta(\rvx_t, t, s)$, as shown in \Cref{sec:ayf_generalizes_mfm}. They refer to $\rmF_\theta(\rvx_t, t, s)$ as the \textit{Mean Flow} and use their objective to train flow maps from scratch, thereby being complementary to our work, which focuses on distillation. Their method achieves strong one- and two-step results on ImageNet 256x256, showing that the AYF-EMD objective is effective not just for distillation, but also for training from scratch.

\textbf{How to Build a Consistency Model}~\cite{boffi2025build} extends the authors' prior work on Flow Map Matching~\cite{Boffi2024FlowMM}, offering a deeper analysis of the EMD and LMD objectives. They also investigate the role of higher-order derivatives, finding them helpful in low-dimensional settings but largely ineffective in high-dimensional ones.

In contrast to these prior and concurrent works, AYF proposes new training objectives, leverages autoguidance for distillation, and shows how brief adversarial fine-tuning can boost performance without reducing diversity. Moreover, we provide new theoretical insights and for the first time analytically prove CMs' deterioration with increased numbers of sampling steps. Finally, we overall scale flow map models to text-to-image generation and state-of-the-art few-step performance on ImageNet benchmarks.

\section{Experiment Details}\label{app:exp_details}
\subsection{ImageNet Experiments}
For our ImageNet experiments, we use publicly available checkpoints from EDM2~\cite{edm2}. These models are first fine-tuned to align with the flow matching framework (see \Cref{sec:training_tricks} for details) before being used as teacher models to distill a flow map. We run this finetuning stage for $10,000$ steps using a learning rate of $0.001$.

For the teacher model, we use checkpoints corresponding to the S and XS models, trained on 2147 million and 134 million images, respectively. During training, we randomize the guidance scale by sampling $\lambda$ uniformly from the range $[1, 3]$.

In all experiments, we apply tangent normalization and tangent warmup, following the approach introduced in sCM~\cite{scm}, setting $c=0.1$ and $H=10000$. To ensure stable training, we define $w(t, s) = \frac{1}{|t - s|^2}$, which removes the $(s - t)^2$ term—one arising from the proportional assumption (see \Cref{app:proof_emd_loss} for details) and another from the $(s - t)$ coefficient of $\mathbf{F}_\theta(\rvx_t, t, s)$ in our flow map parameterization. We use a learning rate of $10^{-4}$ and a batch size of 2048 for all experiments for a total of $50,000$ iterations. These experiments were performed using 32 NVIDIA A100 gpus and took approximately 24-48 hours to converge.

A detailed algorithm can be seen in \Cref{alg:ayf_emd}.

\paragraph{Timestep scheduling}
We sample the interval distance $|t - s|$ from a normal distribution $\mathcal{N}(P_{\text{mean}}, P_{\text{std}}^2)$, followed by a sigmoid transformation. This prioritizes medium-length intervals and improves overall training stability. Once the interval length is determined, a random interval of that length is uniformly selected, with $t > s$ set as the two endpoints. Since we are only concerned with generation, we do not train on $(t < s)$ pairs.

We find $(P_{\text{mean}}, P_{\text{std}}) = (-0.8, 1.0)$ works well for ImageNet-512, while $(P_{\text{mean}}, P_{\text{std}}) = (-0.6, 1.6)$ works well for ImageNet-64.

\paragraph{Sampling hyperparameters}
At inference time, we sweep the guidance scale $\lambda$ in the range $[1, 3]$ and the sampling stochasticity $\gamma$ in $[0, 1]$ to determine optimal hyperparameters. \Cref{table:sampling_parameters} summarizes the selected values. For $n$-step sampling, intermediate timesteps $t_i$ are uniformly distributed over the interval $[0, 1]$.

\begin{table*}
\centering
\caption{Optimal sampling hyperparameters.}
\label{table:sampling_parameters}
\centering
\begin{tabular}{@{}lccc@{}}
\toprule
\textbf{Model} & \textbf{NFE} & \textbf{Stochasticity $\gamma$} & \textbf{Guidance scale $\lambda$} \\ 
\midrule
ImageNet-64, AYS-S  & 1 & 1.0 & 2.0 \\
                    & 2 & 1.0 & 2.0 \\
                    & 4 & 0.8 & 2.0 \\
\midrule
ImageNet-512, AYS-S  & 1 & 1.0 & 2.0 \\
                     & 2 & 1.0 & 2.1  \\
                     & 4 & 0.9 & 2.0 \\
\bottomrule
\end{tabular}
\end{table*}

\paragraph{Additional ImageNet512 baselines:}
For completeness, we include the additional baselines from Table 2 of sCM~\cite{scm} in \Cref{table:img512_extended}. Our AYF models are able to outperform all methods using only 4 sampling steps.

\begin{table*}
\centering
\caption{Sample quality on class-conditional ImageNet 512×512. This is an extension of \Cref{table:img512} with further baseline methods.}
\label{table:img512_extended}
\centering
\resizebox{0.55\textwidth}{!}{
\begin{tabular}{@{}lccc@{}}
\toprule
\textbf{Method} & \textbf{NFE ($\downarrow$)} & \textbf{FID ($\downarrow$)} & \textbf{\#Params} \\ 
\midrule
\textbf{Diffusion Models} & & & \\ 
\midrule
ADM-G~\citep{dhariwal2021diffusion} & 250 & 7.72 & 559M \\ 
RIN~\citep{Jabri2022ScalableAC} & 250 & 4.05 & 501M \\ 
U-ViT-H/4~\citep{bao2023all} & 250 & 4.05 & 501M \\ 
DiT-XL/2~\citep{peebles2023scalable} & 250 & 3.04 & 675M \\ 
SimDiff~\citep{hoogeboom2023simple} & 512×2 & 3.02 & 2B \\ 
VDM++~\citep{kingma2023understanding}  & 512×2 & 3.15 & 2B \\ 
DiffIT~\citep{hatamizadeh2024diffit} & 250×2 & 2.67 & 561M \\ 
DiMR-XL/3R~\citep{liu2024alleviating} & 250×2 & 2.50 & 725M \\ 
DiFFUSSM-XL~\citep{yan2024diffusion} & 250×2 & 3.41 & 673M \\ 
DiM-H~\citep{teng2024dim} & 250×2 & 3.78 & 860M \\ 
U-DiT~\citep{tian2024u} & 250×2 & 3.50 & 604M \\ 
SiT-XL~\citep{ma2024sit} & 250×2 & 2.62 & 675M \\ 
MaskDiT~\citep{zheng2023fast} & 79×2 & 2.24 & 736M \\ 
Dis-H/2~\citep{fei2024scalable} & 250×2 & 2.88 & 900M \\ 
DRWvK-H/2~\citep{fei2024diffusion} & 250×2 & 2.95 & 879M \\ 
EDM2-S~\citep{edm2} & 63×2 & 2.23 & 280M \\ 
EDM2-M~\citep{edm2} & 63×2 & 2.00 & 498M \\ 
EDM2-L~\citep{edm2} & 63×2 & 1.87 & 778M \\ 
EDM2-XL~\citep{edm2} & 63×2 & 1.80 & 1.1B \\ 
EDM2-XXL~\citep{edm2} & 63×2 & 1.73 & 1.5B \\ 
\midrule
\textbf{GANs \& Masked Models}  & & & \\ 
\midrule
BigGAN~\citep{brock2019largescalegantraining} & 1 & 8.31 & 160M \\ 
StyleGAN-XL~\citep{sauer2022stylegan} & 1×2 & 3.92 & 266M \\ 
VQGAN~\citep{esser2021taming} & 1024 & 12.57 & 232M \\ 
MaskGIT~\citep{chang2022maskgit} & 64×2 & 9.24 & 284M \\ 
MAGVIT-V2~\citep{yu2023language} & 64×2 & 9.11 & 1B \\ 
MAR~\citep{li2024autoregressive} & 64×2 & 1.95 & 2.3B \\ 
VAR-d36-s~\citep{tian2024visual}) & 10×2 & 2.63 & 2.3B \\ 
\midrule
\textbf{AYF-S (ours)} & 1 & 3.32 & 280M \\ 
& 2 & 1.87 & 280M \\ 
& 4 & 1.70 & 280M \\
\midrule
\textbf{AYF-S + adv. loss \textit{(ours)}}  & 1 & 1.92 & 280M \\
        & 2 & 1.81 & 280M \\
        & 4 & \textbf{1.64} & 280M \\
\bottomrule
\end{tabular}
}
\end{table*}

\subsection{Adversarial Finetuning}
For adversarial finetuning, we use the StyleGAN2 discriminator~\cite{karras2020analyzing} and follow the relativistic pairing GAN (RpGAN) formulation~\cite{huang2024gan,jolicoeur2018relativistic}. The complete algorithm is provided in \Cref{alg:ayf_emd_advtuning}.

The flow map is optimized by minimizing a combination of the AYF-EMD loss and a weighted RpGAN objective. To compute the adversarial loss, we perform one-step sampling with the flow map (i.e. setting $(t, s) = (1, 0)$) to generate negative samples. 
Following prior work~\cite{esser2021taming}, we apply an adaptive weighting scheme to balance the AYF-EMD and adversarial terms. Additionally, we multiply the adversarial loss by a fixed coefficient $\alpha = 0.1$ to ensure stable training. For the discriminator, we apply $R_1$ and $R_2$ regularization~\cite{huang2024gan} with a regularization weight of $\beta = 0.1$.

We use a learning rate of $2 \times 10^{-5}$ for both networks and a batch size of 1024. Finetuning is run for approximately 3000 iterations using 32 NVIDIA A100 GPUs, taking around 4 hours in total.

\subsection{Training Algorithms}
We provide the full AYF algorithm in \Cref{alg:ayf_emd}, and the variant with adversarial finetuning in \Cref{alg:ayf_emd_advtuning}.

\begin{algorithm}[t]
\caption{Flow Map Distillation with AYF-EMD Loss.}
\begin{algorithmic}[1]
\State \textbf{Input:} dataset $\mathcal{D}$ with std. $\sigma_d$, autoguided pretrained flow model $\mathbf{v}_{\phi}^{\text{guided}}(\mathbf{x}_t, t, \lambda)$ with guidance weight $\lambda$, model $\mathbf{F}_{\theta}(\mathbf{x}_t, t, s, \lambda)$, learning rate $\eta$, distance schedule $(P_{\text{mean}}, P_{\text{std}})$, guidance interval $[\lambda_{\text{min}}, \lambda_{\text{max}}]$, constant $c$, warmup iteration $H$.
\State \textbf{Init:} $\text{Iters} \leftarrow 0$
\Repeat
    \State $\rvx_0 \sim \mathcal{D},\ \rvx_1 \sim \mathcal{N}(\mathbf{0}, \sigma_d^2 \mathbf{I}),\ \tau \sim \mathcal{N}(P_{\text{mean}}, P_{\text{std}}^2), \lambda \sim \text{Unif}(\lambda_{\text{min}}, \lambda_{\text{max}})$
    
    \State $d \leftarrow \sigma(\tau),\ s \sim \text{Unif}(0, 1 - d),\ t \leftarrow s + d,\ \rvx_t \leftarrow (1 - t) \rvx_0 + t \rvx_1$
    
    \State $\frac{\mathrm{d} \rvx_t}{\mathrm{d}t} \leftarrow \mathbf{v}_{\phi}^{\text{guided}}(\mathbf{x}_t, t, \lambda)$
    
    \State $r \leftarrow \min(0.99, \text{Iters} / H)$
    
    \State $\mathbf{g} \leftarrow  \left(\mathbf{F}_{\theta^-}(\rvx_t, t, s, \lambda) - \frac{\mathrm{d} \rvx_t}{\mathrm{d}t}\right) + r (t - s) \frac{\mathrm{d} \mathbf{F}_{\theta^-}(\rvx_t, t, s, \lambda)}{\mathrm{d}t}$ \Comment{Tangent warmup}
    
    \State $\mathbf{g} \leftarrow \mathbf{g} / (\|\mathbf{g}\| + c)$ \Comment{Tangent normalization}
    
    \State $\mathcal{L}(\theta) \leftarrow \left\| \mathbf{F}_\theta\left( \rvx_t, t, s, \lambda \right) - \mathbf{F}_{\theta^{-}}\left( \rvx_t, t, s, \lambda \right) + \mathbf{g} \right\|_2^2$
    
    \State $\theta \leftarrow \theta - \eta \nabla_{\theta} \mathcal{L}(\theta)$
    
    \State $\text{Iters} \leftarrow \text{Iters} + 1$
\Until{convergence}
\end{algorithmic}
\label{alg:ayf_emd}
\end{algorithm}

\begin{algorithm}[t]
\caption{Adversarial Flow Map Finetuning with AYF-EMD and Adversarial losses.}
\begin{algorithmic}[1]
\State \textbf{Input:} dataset $\mathcal{D}$ with std. $\sigma_d$, autoguided pretrained flow model $\mathbf{v}_{\phi}^{\text{guided}}(\mathbf{x}_t, t, \lambda)$ with guidance weight $\lambda$, model $\mathbf{F}_{\theta}(\mathbf{x}_t, t, s, \lambda)$, learning rates $\eta_{G}, \eta_{D}$, distance schedule $(P_{\text{mean}}, P_{\text{std}})$, guidance interval $[\lambda_{\text{min}}, \lambda_{\text{max}}]$, constant $c$, warmup iteration $H$, discriminator $\mathbf{D}_\psi(\rvx)$, adversarial weights $\alpha, \beta$.
\Repeat
    \If{Generator step}
        \State $\rvx_0 \sim \mathcal{D},\ \rvx_1 \sim \mathcal{N}(\mathbf{0}, \sigma_d^2 \mathbf{I}),\ \tau \sim \mathcal{N}(P_{\text{mean}}, P_{\text{std}}^2), \lambda \sim \text{Unif}(\lambda_{\text{min}}, \lambda_{\text{max}})$
        
        \State $d \leftarrow \sigma(\tau),\ s \sim \text{Unif}(0, 1 - d),\ t \leftarrow s + d,\ \rvx_t \leftarrow (1 - t) \rvx_0 + t \rvx_1$
        
        \State $\frac{\mathrm{d} \rvx_t}{\mathrm{d}t} \leftarrow \mathbf{v}_{\phi}^{\text{guided}}(\mathbf{x}_t, t, \lambda)$
        
        \State $r \leftarrow 0.99$
        
        \State $\mathbf{g} \leftarrow  \left(\mathbf{F}_{\theta^-}(\rvx_t, t, s, \lambda) - \frac{\mathrm{d} \rvx_t}{\mathrm{d}t}\right) + r (t - s) \frac{\mathrm{d} \mathbf{F}_{\theta^-}(\rvx_t, t, s, \lambda)}{\mathrm{d}t}$ \Comment{Tangent warmup}
        
        \State $\mathbf{g} \leftarrow \mathbf{g} / (\|\mathbf{g}\| + c)$ \Comment{Tangent normalization}
    
        \State $\rvx'_0 \leftarrow \mathbf{f}_\theta(\rvx_1, 1, 0, \lambda) = \rvx_1 - \mathbf{F}_\theta(\rvx_1, 1, 0, \lambda)$

        \State $\mathcal{L}_{EMD}(\theta) \leftarrow \left\| \mathbf{F}_\theta\left( \rvx_t, t, s, \lambda \right) - \mathbf{F}_{\theta^{-}}\left( \rvx_t, t, s, \lambda \right) + \mathbf{g} \right\|_2^2$

        \State $\mathcal{L}_{ADV}(\theta) \leftarrow \text{Softplus}(\mathbf{D}_\psi(\rvx'_0) - \mathbf{D}_\psi(\rvx_0))$

        \State $w_{adaptive} \leftarrow \text{adaptive\_weight}(\mathcal{L}_{ADV}, \mathcal{L}_{EMD})$
        
        \State $\mathcal{L}(\theta) \leftarrow \mathcal{L}_{Adv} + \alpha \times w_{adaptive} \times \mathcal{L}_{EMD}$
        
        \State $\theta \leftarrow \theta - \eta_G \nabla_{\theta} \mathcal{L}(\theta)$
    \ElsIf{Discriminator step}
        \State $\rvx_0 \sim \mathcal{D},\ \rvx_1 \sim \mathcal{N}(\mathbf{0}, \sigma_d^2 \mathbf{I})$

        \State $\rvx'_0 \leftarrow \mathbf{f}_\theta(\rvx_1, 1, 0, \lambda) = \rvx_1 - \mathbf{F}_\theta(\rvx_1, 1, 0, \lambda)$
    
        \State $\mathcal{L}(\psi) \leftarrow \text{Softplus}(\mathbf{D}_\psi(\rvx_0) - \mathbf{D}_\psi(\rvx'_0)) + \beta \times (\| \nabla_\rvx \mathbf{D}_\psi(\rvx_0) \|_2^2 + \| \nabla_\rvx \mathbf{D}_\psi(\rvx'_0) \|_2^2)$
        
        \State $\psi \leftarrow \psi - \eta_D \nabla_{\psi} \mathcal{L}(\psi)$
    \EndIf
\Until{convergence}
\end{algorithmic}
\label{alg:ayf_emd_advtuning}
\end{algorithm}

\subsection{Text-to-Image Experiments}
For our text-to-image experiments, we distill FLUX.1-[dev]~\cite{flux2023} into a few-step generator by fine-tuning a LoRA~\cite{Hu2021LoRALA} on top of the FLUX base model. We use the AYF-EMD objective and train the LoRA for 10,000 iterations on 8 NVIDIA A100 GPUs. Each GPU fits a single $512 \times 512$ image, resulting in a total batch size of 8. To manage memory, gradient checkpointing and gradient partitioning is used. We also warm up the tangent over the first 2000 iterations. Since FLUX.1-[dev] is guidance-distilled, we cannot use autoguidance and instead rely solely on the distilled guidance.

We train our model using the text-to-image-2M dataset~\cite{jackyhate_text_to_image_2M} from Hugging Face, which contains over 2 million real and synthetic images. We filter this dataset and train only on the 100K images generated by FLUX.1-[dev] using text prompts from GPT-4o.

\subsection{User Study Details}
To evaluate our AYF-LoRA and compare it against TCD-LoRA~\cite{zheng2024trajectory} and LCM-LoRA~\cite{Luo2023LatentCM} (both based on SDXL~\cite{podell2023sdxl}), we conduct a user study with 47 participants. 
We use a holdout set of 200 prompts generated by GPT-4o from the text-to-image-2M dataset. For each prompt, we generate five images using each of the three methods, all sampled with four steps from the same random seed, resulting in 1000 sets of three generated images.
Each participant is given a text prompt and one image from each method, with the image order randomized to prevent bias. Participants are asked to select the best image based on quality and text alignment or indicate a tie. \Cref{fig:user_study} summarizes the results. See \Cref{fig:user_study_ui} for a screenshot of the instruction. 
The participants were paid 0.10\$ per evaluation with an average evaluation time of 15-30s per decision, with an hourly average pay of 18\$/hr.

\begin{figure}[h!]
    \centering
    \includegraphics[width=0.8\linewidth]{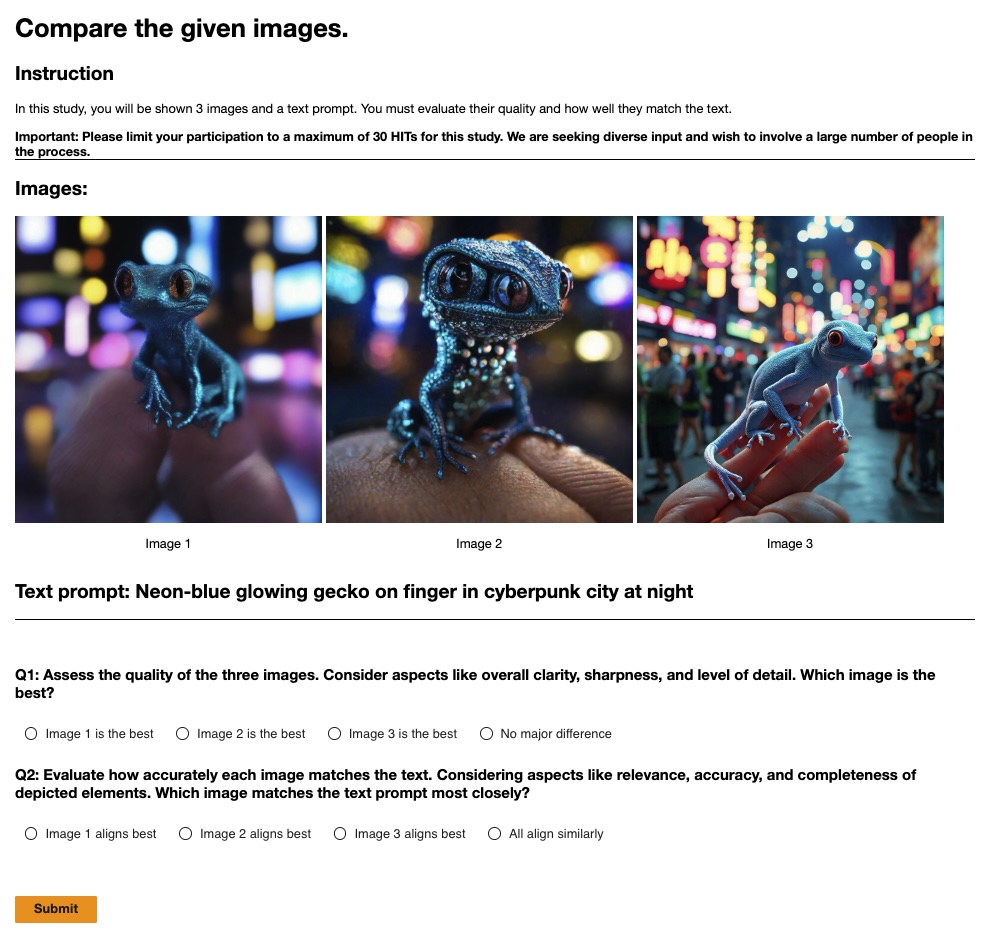}
    \caption{Screenshot of instructions provided to the participants for the human evaluation study.}
    \label{fig:user_study_ui}
\end{figure}

\section{Ablation Studies}\label{app:ablations}
In this section, we isolate the effects of AYF’s core design choices and compare our method against recent baseline consistency model and flow map approaches (on ImageNet512). A summary of the results is provided in \Cref{table:ablations}.

We begin by analyzing two key design decisions behind AYF: (1) using the AYF-EMD loss instead of AYF-LMD, and (2) applying autoguidance on the teacher model instead of classifier-free guidance (CFG), which is used by existing CM methods. To compare AYF-EMD and AYF-LMD, we first evaluate both qualitatively on a 2D toy dataset (\Cref{fig:emd_vs_lmd}). In this setting, AYF-LMD significantly outperforms AYF-EMD. However, the trend reverses on image datasets, where AYF-EMD consistently yields better results. As shown in \Cref{table:ablations}, AYF-EMD leads to significantly improved generation quality. 

We also evaluate the impact of autoguidance. Replacing it with CFG consistently degrades performance across all sampling steps, highlighting the benefit of autoguidance during distillation. This difference can also be visually seen in the 2D setting (\Cref{fig:autog_vs_cfg}).

Next, we compare AYF against baseline methods. Against \textbf{sCD}~\cite{scm}, the current state-of-the-art consistency model, we observe that increasing the number of steps consistently degrades performance beyond 4 steps. In contrast, AYF's performance stabilizes after 8 steps and remains consistently better at 4 steps, narrowing the gap between the few-step student and the multi-step teacher. AYF achieves this improved multi-step performance by having slightly worse single-step generations. This is expected considering that both models share the same parameter budget but AYF solves a more difficult task. However, performing a short adversarial finetuning on top of a pretrained AYF model significantly boosts performance across all sampling steps (see \Cref{fig:fid_vs_nfe}), overcoming this limitation.

Note that also with autoguidance replaced by CFG, we still outperform sCD for all steps except for single-step generation. This trend remains when re-training sCD with an autoguided teacher and comparing with the full AYF with autoguided teacher. These analyses show that while autoguidance boosts performance, our model is superior to the previous state-of-the-art CM for all sampling step settings except single-step generation also when comparing under the same guidance scheme. This is due to our novel objectives for state-of-the-art continuous-time flow map training.

Compared to \textbf{Shortcut models}~\cite{Frans2024OneSD}, which we re-implemented, AYF again outperforms them consistently. While Shortcut models improve steadily with increasing NFEs, their few-step performance is significantly worse than both AYF and sCD. Unlike shortcut models, AYF reaches strong performance earlier and plateaus around 8 steps.

\begin{figure}[h!]
    \centering
    \begin{minipage}[t]{0.50\linewidth}
        \centering
        \includegraphics[width=\linewidth]{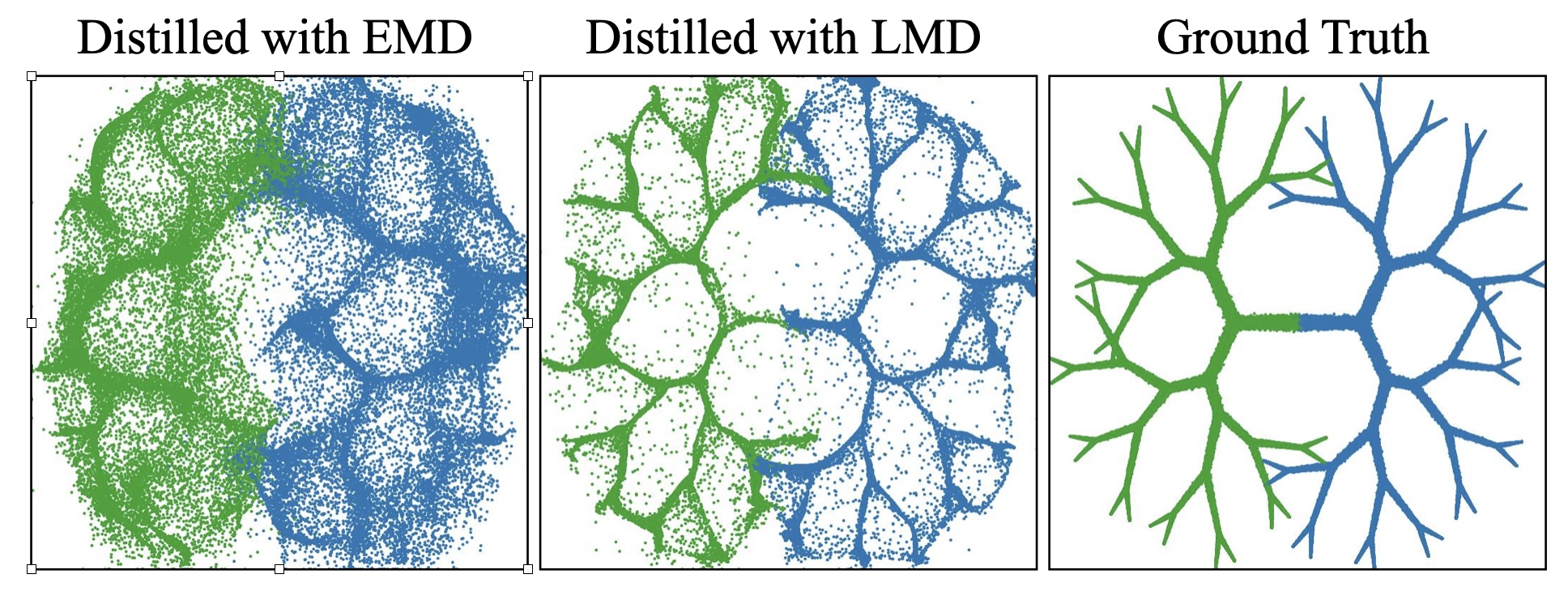}
        \caption{Four-step samples from distilled AYF flow maps trained using the AYF-EMD and AYF-LMD objectives for a 2D distribution.}
        \label{fig:emd_vs_lmd}
    \end{minipage}
    \hfill
    \begin{minipage}[t]{0.48\linewidth}
        \centering
        \includegraphics[width=\linewidth]{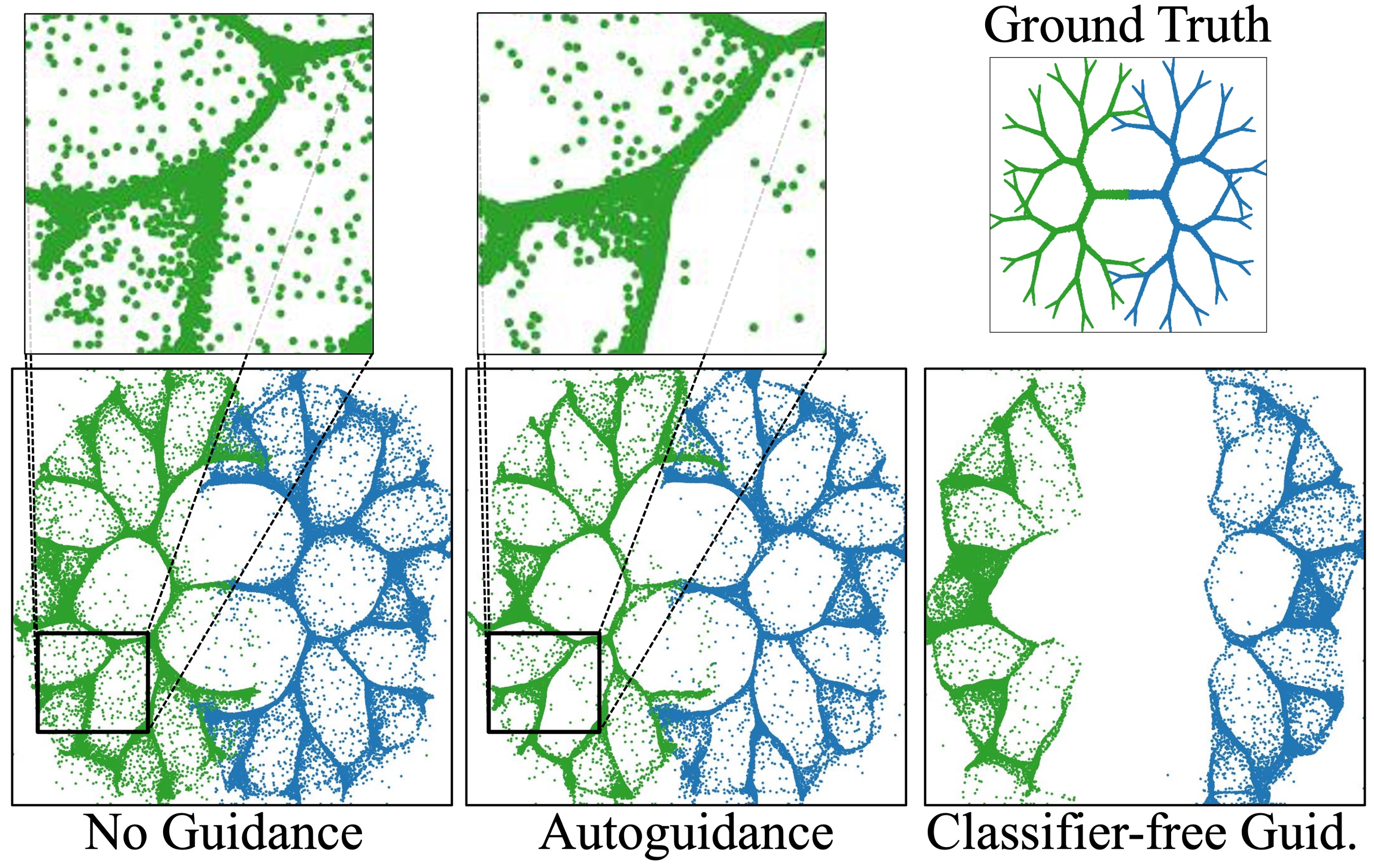}
        \caption{Four-step samples from distilled AYF flow maps using no guidance, autoguidance, and CFG (scale 3) for a 2D distribution.}
        \label{fig:autog_vs_cfg}
    \end{minipage}
\end{figure}

\begin{table}[h!]
\centering
\caption{Ablation study on ImageNet 512x512. $^*$ indicates our reproduction of prior methods. Also see \Cref{fig:fid_vs_nfe} for a visualization of the key results.}
\label{table:ablations}
\resizebox{\linewidth}{!}{
\begin{tabular}{l|ccccccccc}
\toprule
\multirow{2}{*}{\textbf{Training configurations}} & \multicolumn{9}{c}{\textbf{FID$(\downarrow)$}} \\
\cmidrule(lr){2-10}
& \textbf{1-step} & \textbf{2-step} & \textbf{4-step} & \textbf{8-step} & \textbf{16-step} & \textbf{24-step} & \textbf{32-step} & \textbf{48-step} & \textbf{64-step} \\
\midrule
AYF-S (with autoguided teacher and AYF-EMD objective)                  & 3.32 & 1.87 & 1.70 & 1.69 & 1.72 & 1.75 & 1.73 & 1.74 & 1.75 \\
- with AYF-LMD objective instead of AYF-EMD & 12.45         & 8.90          & 6.70  & 6.10 & 5.85  & - & - & - & -       \\
- with CFG teacher instead of autoguidance   & 4.12   & 2.57          & 2.32   & 2.29 & 2.31 & - & - & - & -       \\
\midrule
sCD-S$^*$ (with autoguided teacher) & 2.90 & 1.87 & 1.84 & 2.08 & 2.32 & 2.68 & 2.91 & 3.72 & 4.62  \\
- with CFG teacher instead of autoguidance  & 3.26 & 2.68 & 2.72 & 2.81 & 3.33 & - & - & - & - \\
\midrule 
Shortcut model$^*$ (with autoguided teacher) & 47.60 & 13.12 & 5.37 & 2.31 & 2.05 & 1.92 & 1.85 & 1.83 & 1.81 \\
\midrule
EDM-S + autoguidance (teacher) & 566.13 & 298.60 & 89.88 & 26.41 & 6.08 & 3.09 & 2.21 & 1.65 & \textbf{1.51}  \\
\midrule 
\multirow{2}{*}{\textbf{Adversarial finetuning}} \\
\\
\midrule
AYF-S + adv. loss & \textbf{1.92} & \textbf{1.81} & \textbf{1.64} & \textbf{1.62} & \textbf{1.63} & \textbf{1.61} & \textbf{1.63} & \textbf{1.62} & 1.61  \\
\bottomrule
\end{tabular}
}
\end{table}

\section{Additional Samples}

\subsection{Text-to-Image}
In \Cref{fig:flux_additional_samples}, we show additional text-to-image samples generated by our FLUX.1 [dev]-based AYF flow map model using efficient LoRA fine-tuning. We also show the effect of increasing the number of sampling steps of this model in \Cref{fig:ayf_multiNFE_txt2img}. Additionally, we show some side-by-side comparisons between our model and prior LoRA based consistency models in \Cref{fig:txt2img_sidebyside_2}. We find that our model produces sharper and more detailed images with better prompt adherence.  

\begin{figure}[h]
\centering
\includegraphics[width=0.49\textwidth]{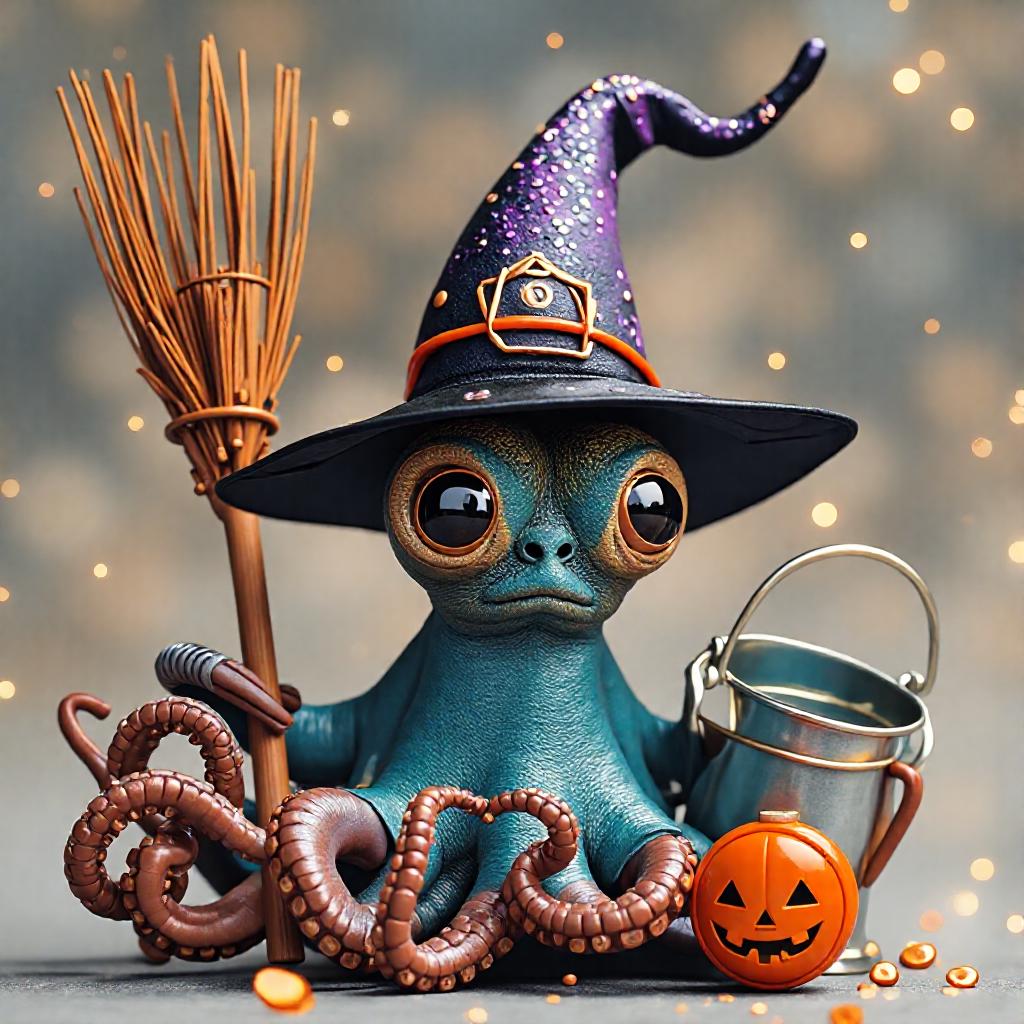}
\includegraphics[width=0.49\textwidth]{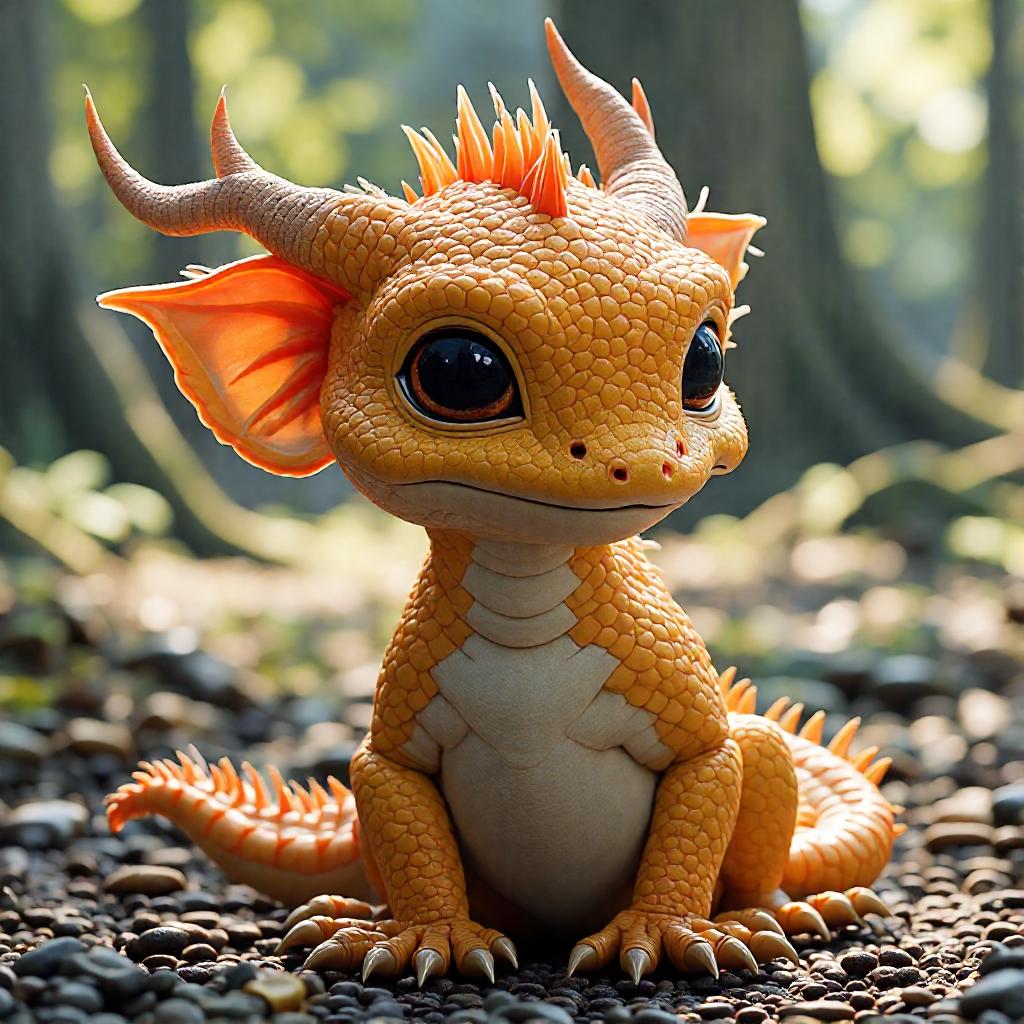}

\includegraphics[width=0.24\textwidth]{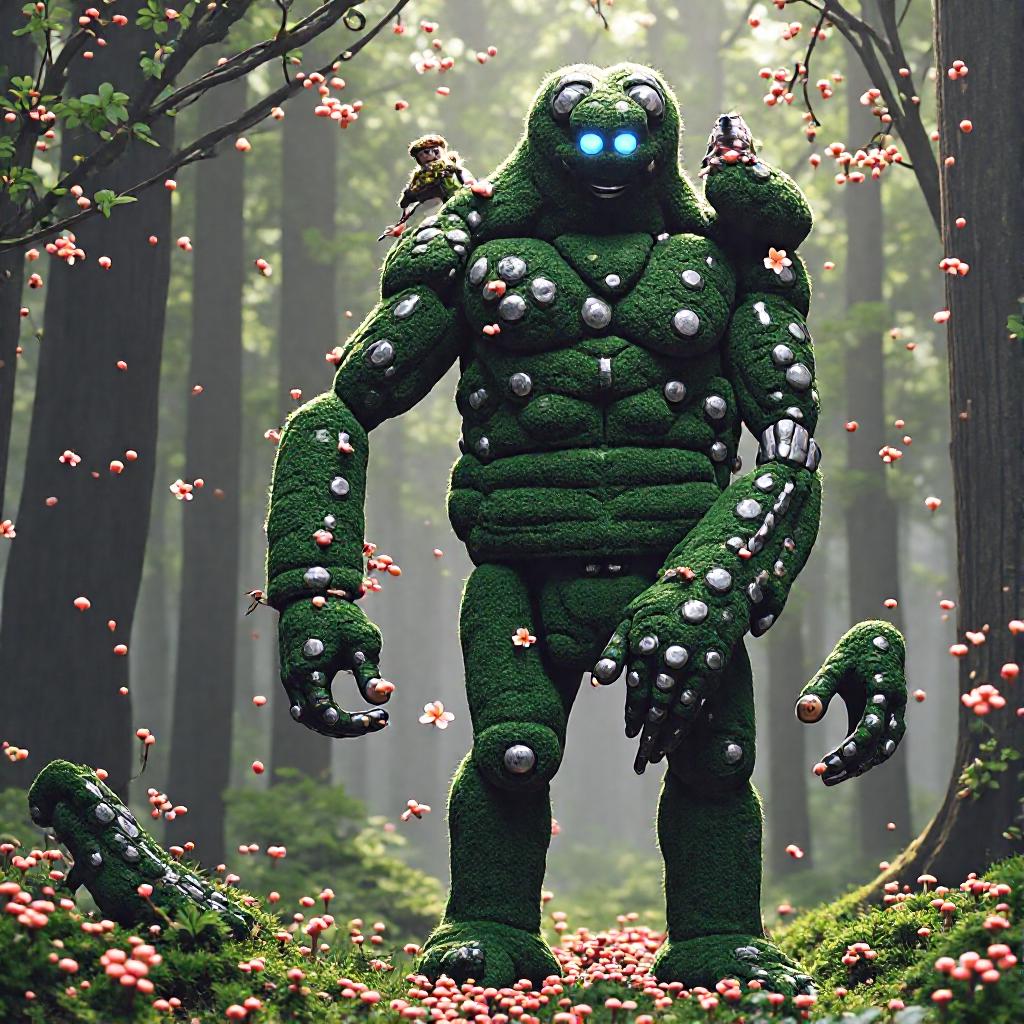}
\includegraphics[width=0.24\textwidth]{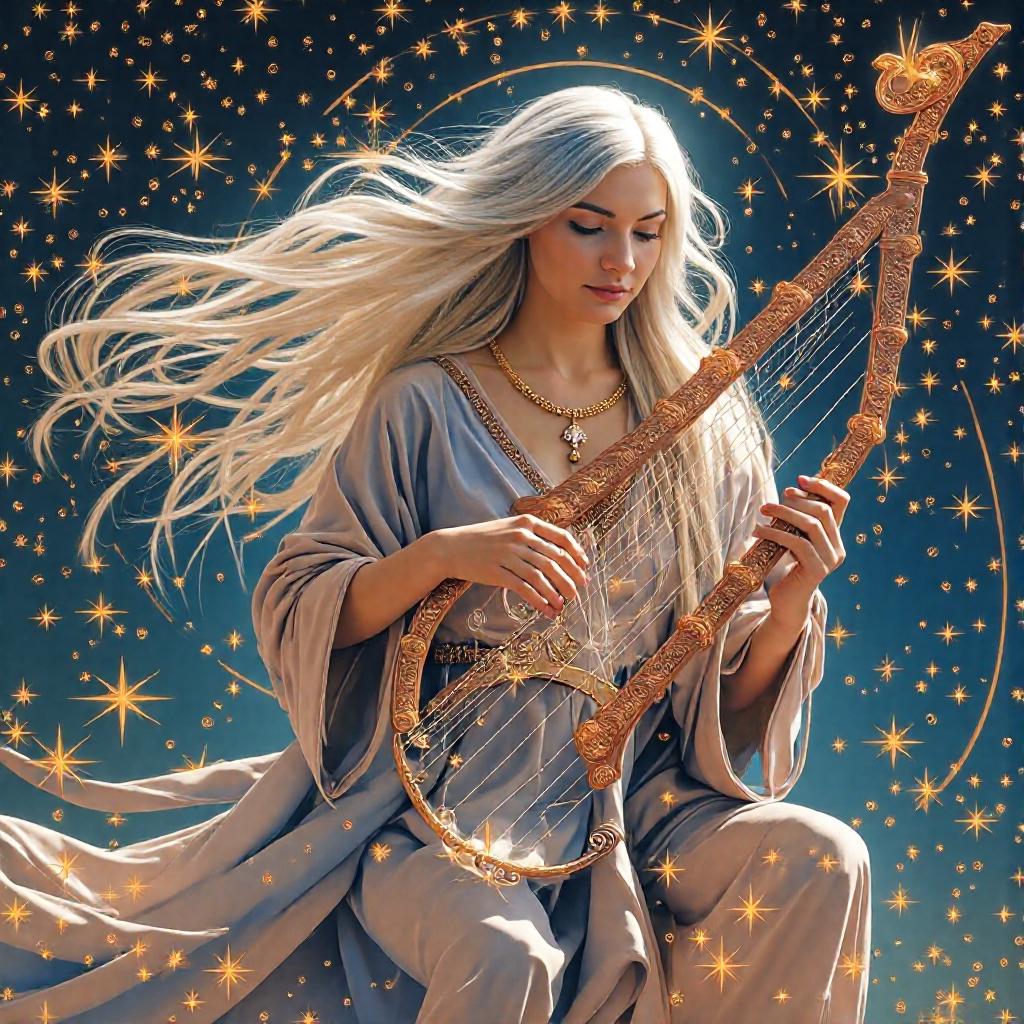}
\includegraphics[width=0.24\textwidth]{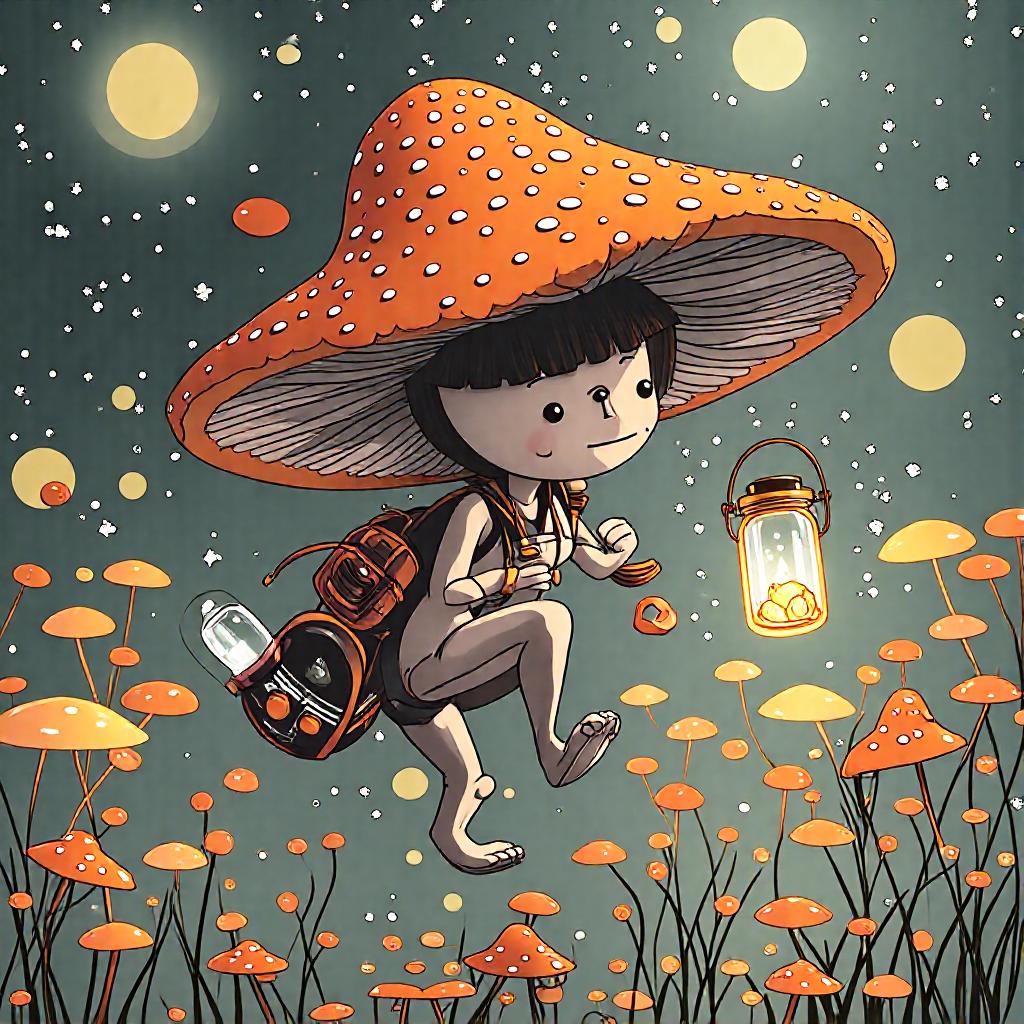}
\includegraphics[width=0.24\textwidth]{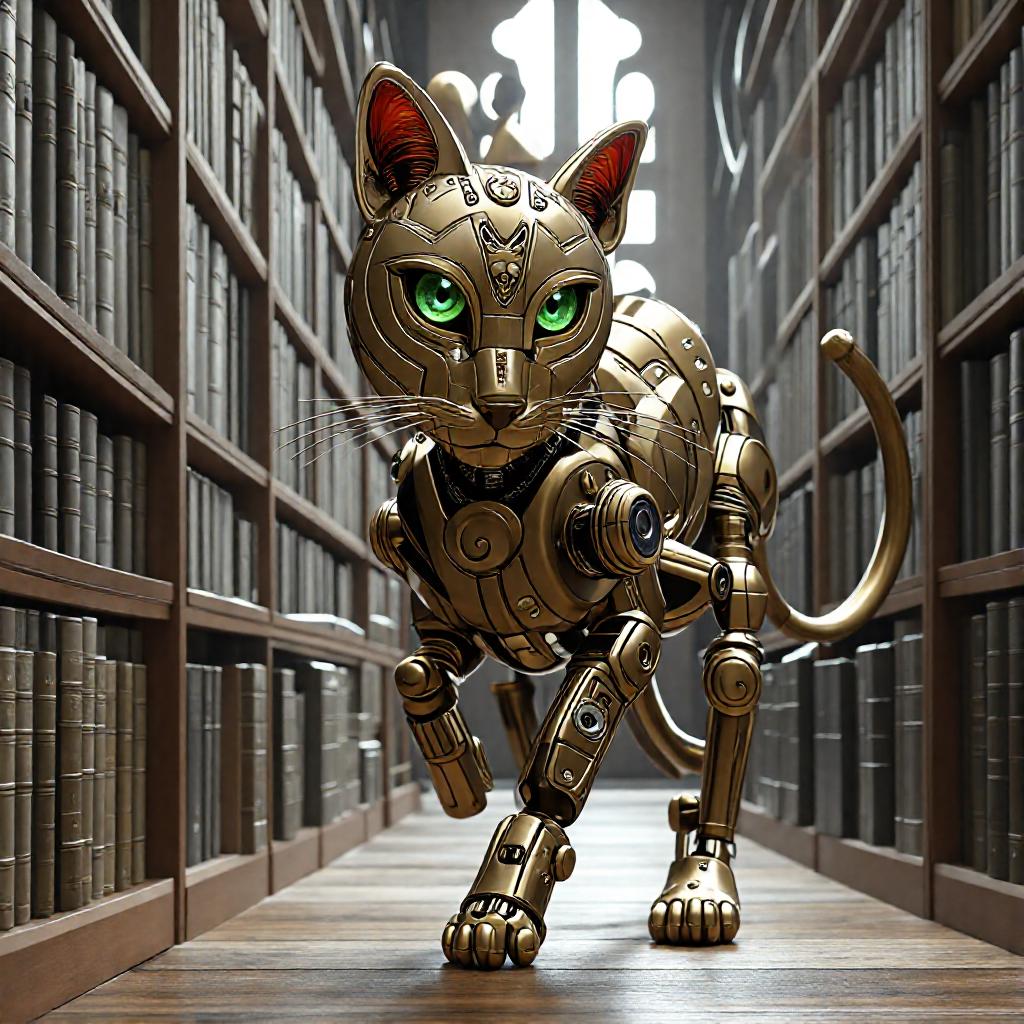}
\includegraphics[width=0.24\textwidth]{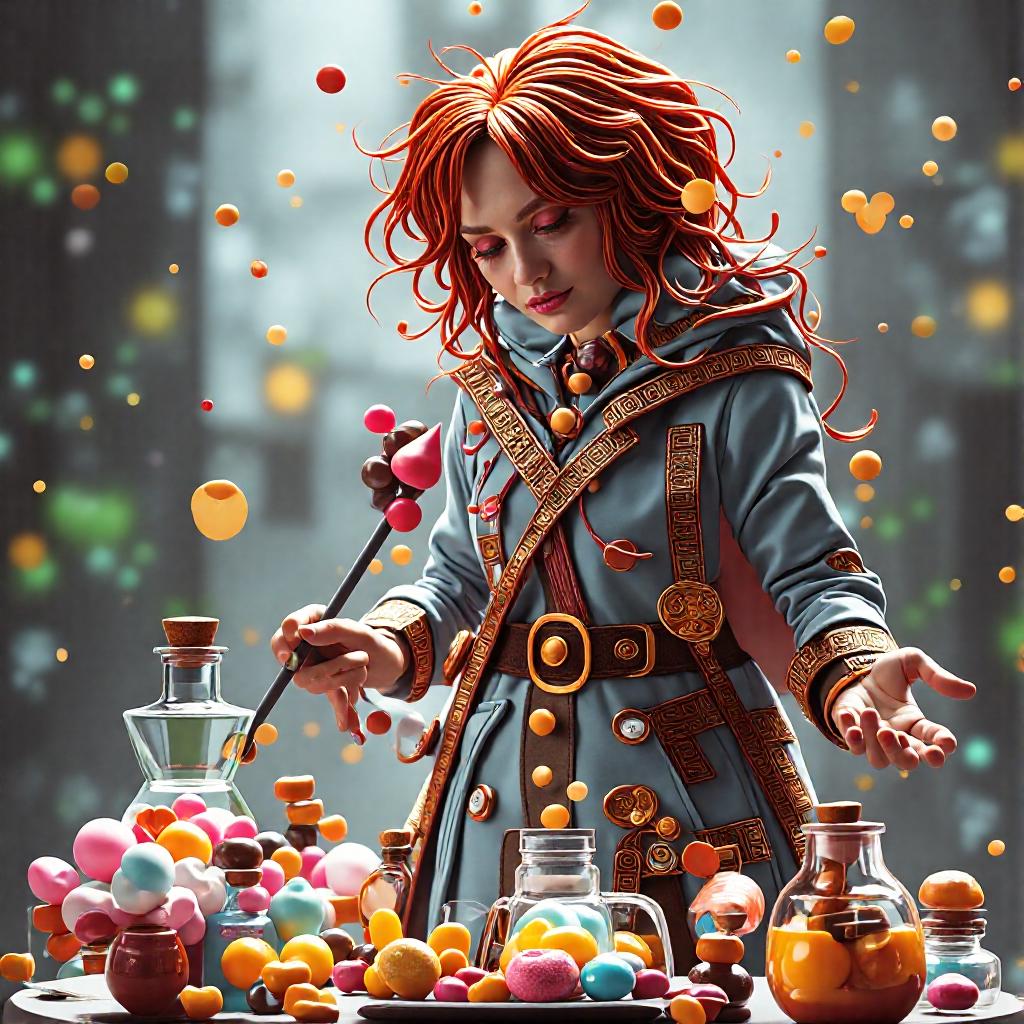}
\includegraphics[width=0.24\textwidth]{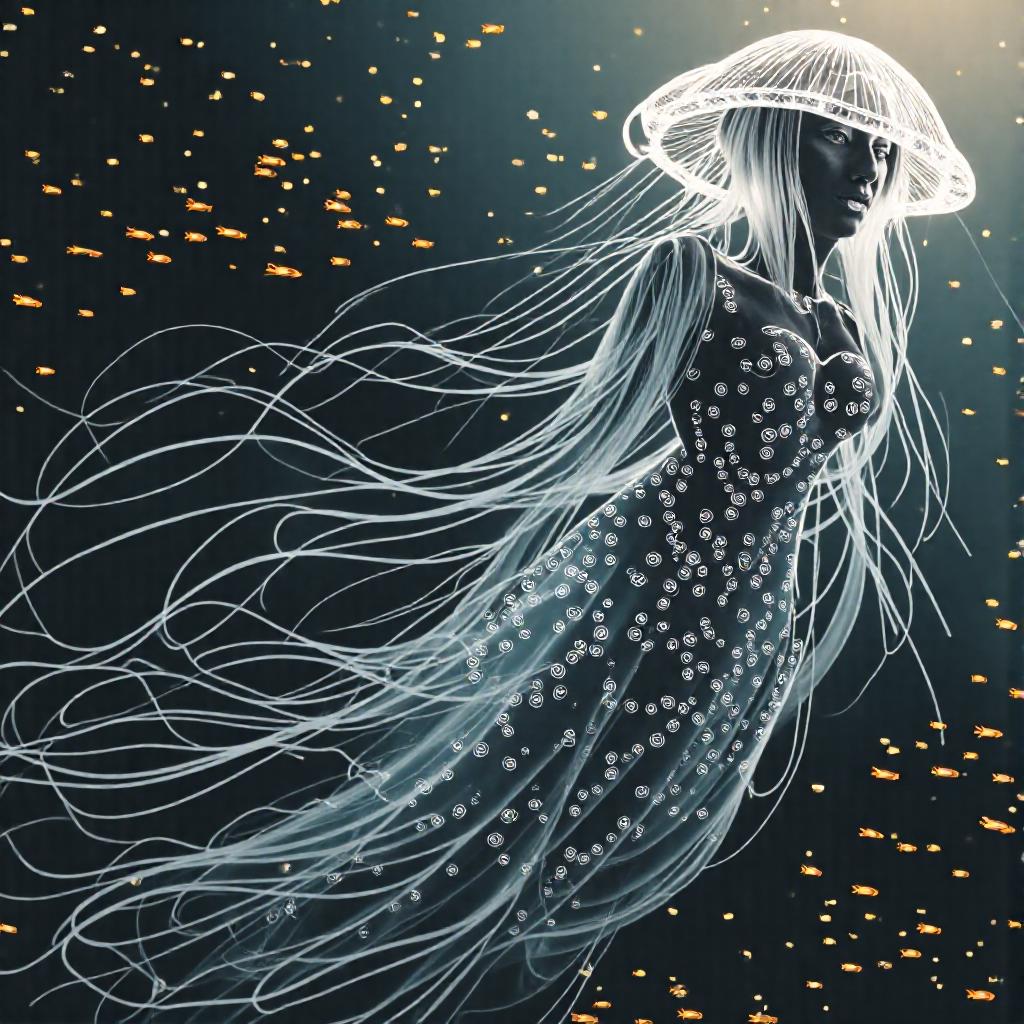}
\includegraphics[width=0.24\textwidth]{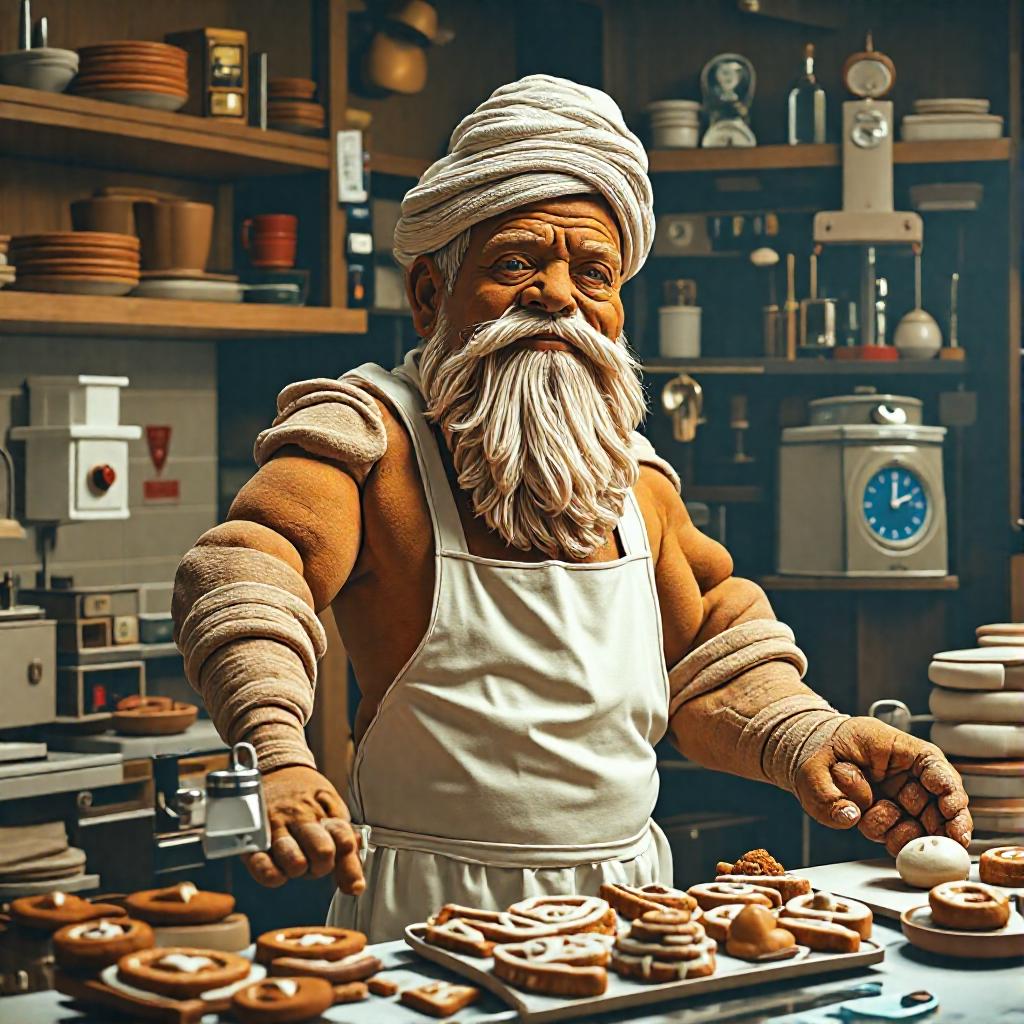}
\includegraphics[width=0.24\textwidth]{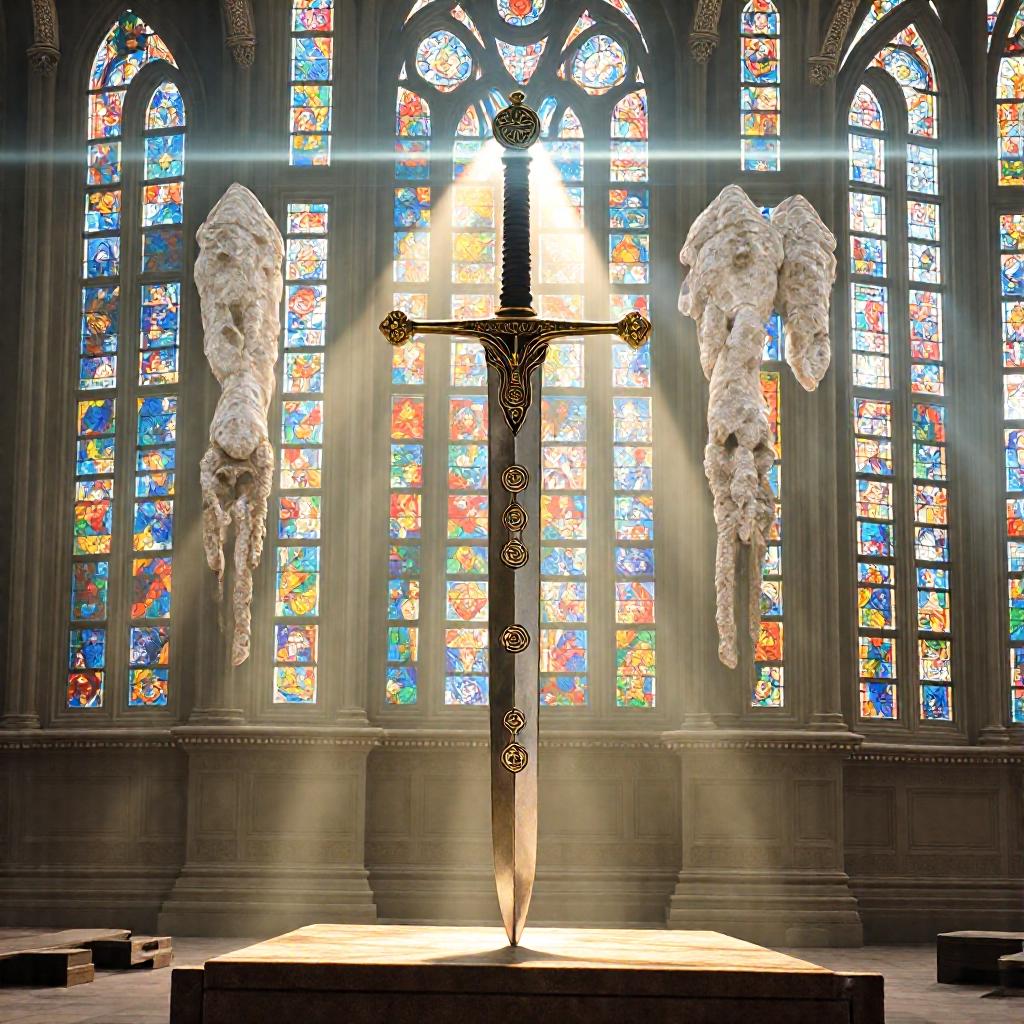}
\includegraphics[width=0.24\textwidth]{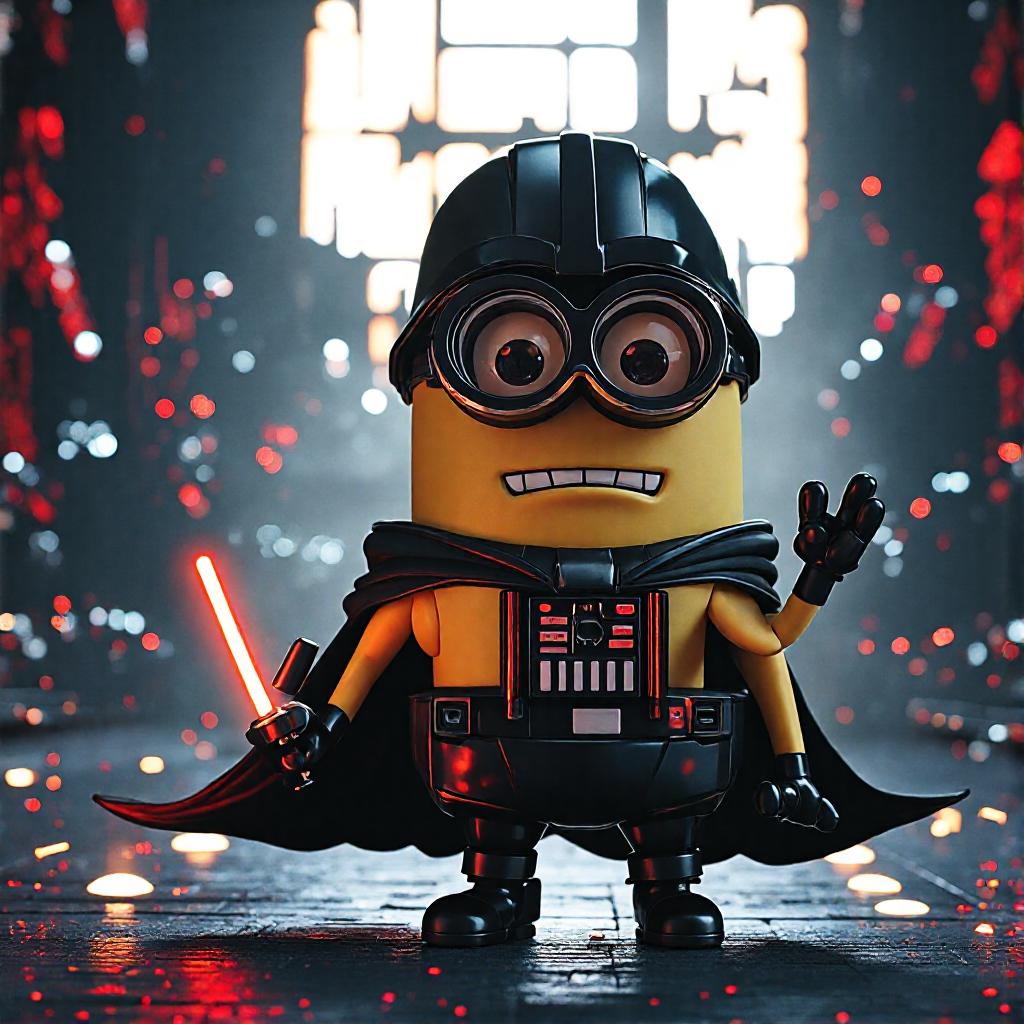}
\includegraphics[width=0.24\textwidth]{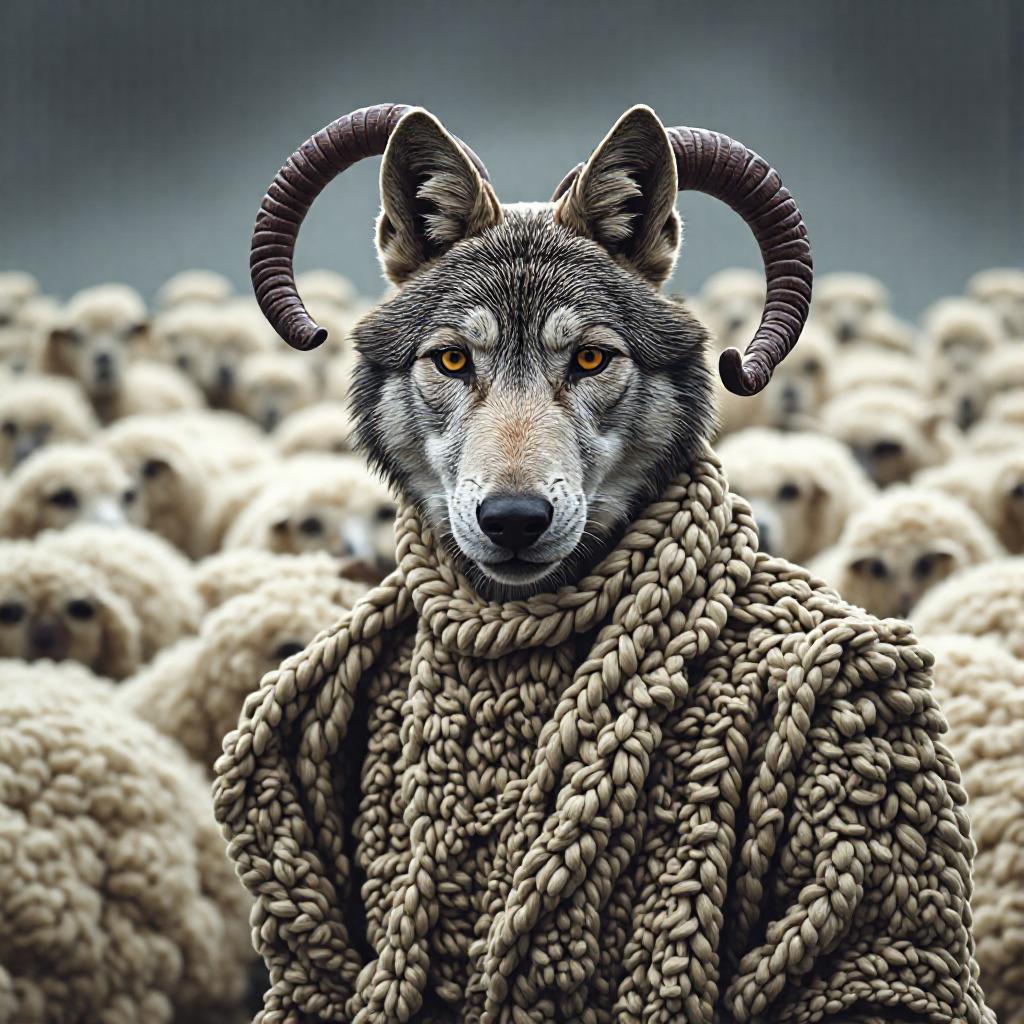}
\includegraphics[width=0.24\textwidth]{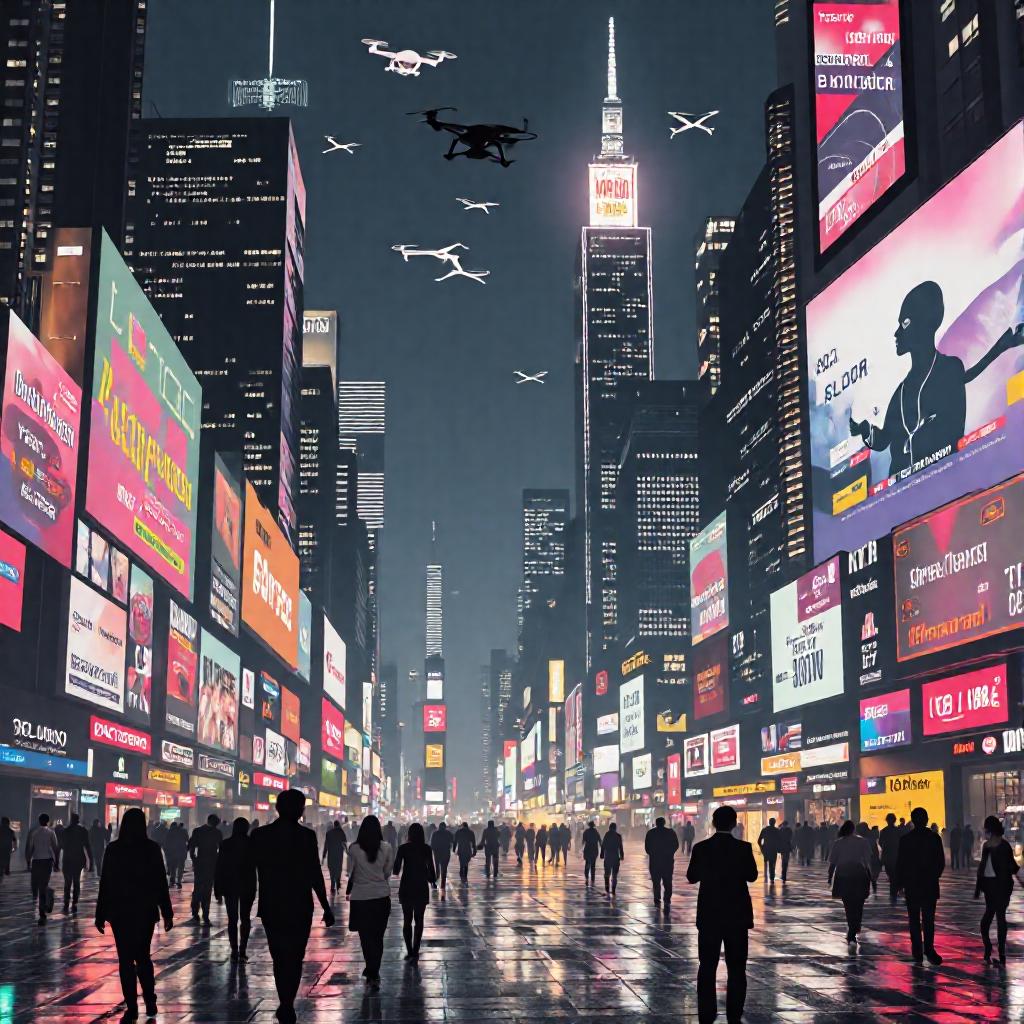}
\includegraphics[width=0.24\textwidth]{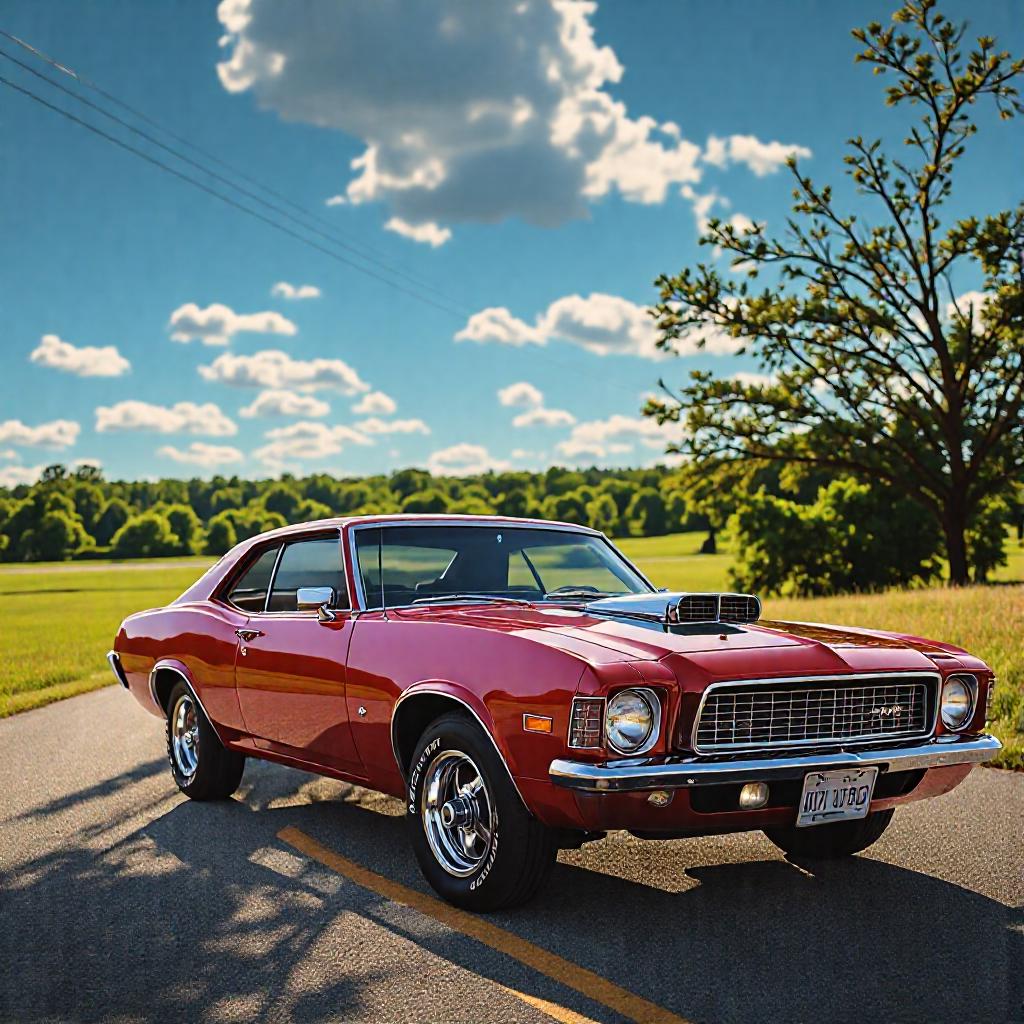}
\caption{Selected 4-step samples generated by our FLUX.1 [dev]-based AYF flow map model using efficient LoRA fine-tuning.}
\label{fig:flux_additional_samples}
\end{figure}

\begin{figure}[h]
    \centering
    \includegraphics[width=\linewidth]{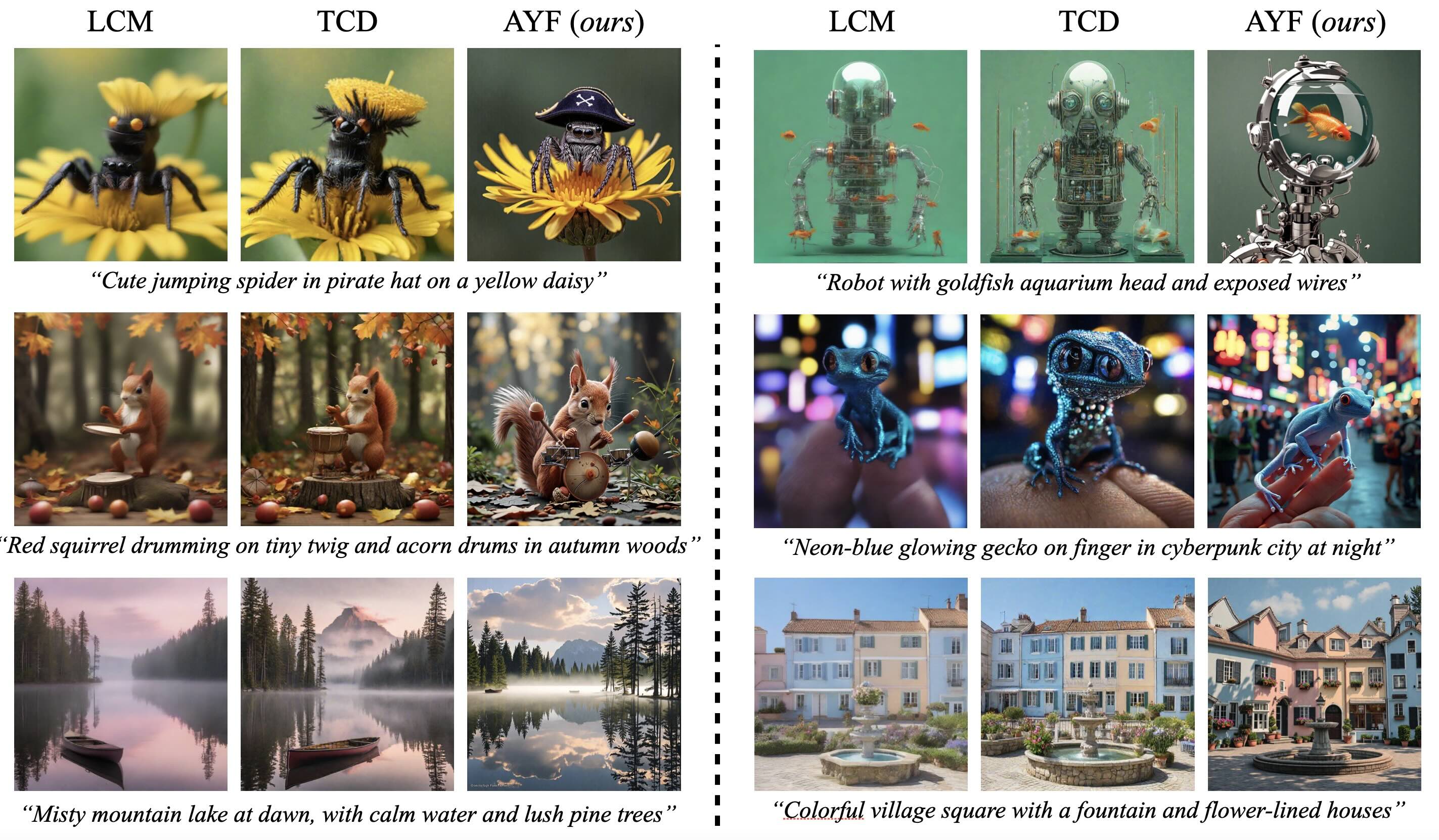}
    \caption{Qualitative comparison between 4-step samples from LCM~\cite{Luo2023LCMLoRAAU}, TCD~\cite{zheng2024trajectory}, and AYF (view zoomed in).}
    \label{fig:txt2img_sidebyside_2}
\end{figure}

\begin{figure}[h]
    \centering
    \includegraphics[width=0.9\linewidth]{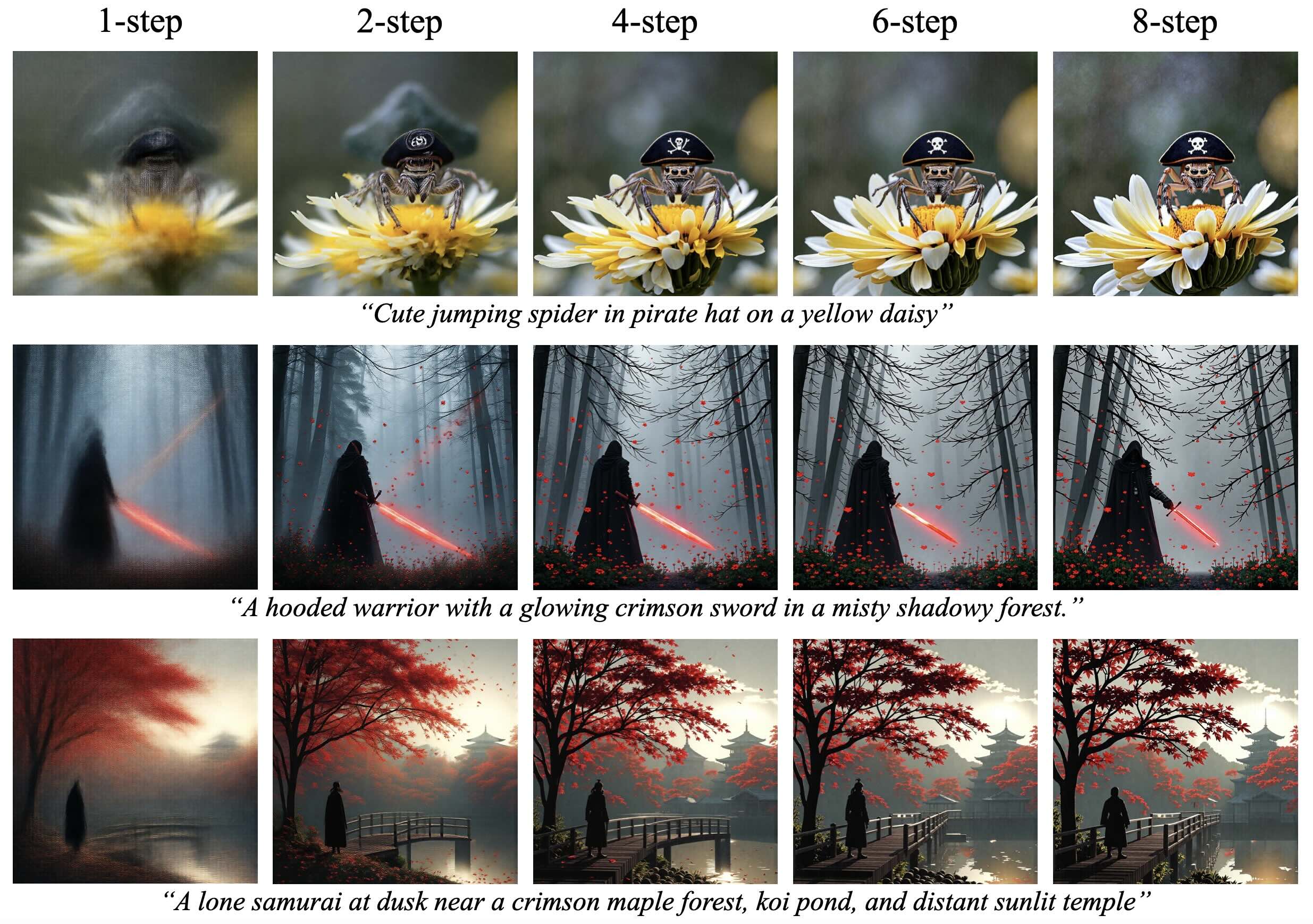}
    \caption{The effect of increasing number of steps when sampling from the text-to-image AYF model.}
    \label{fig:ayf_multiNFE_txt2img}
\end{figure}

\subsection{ImageNet-512}
In \Cref{fig:img512_extra_samples_1,fig:img512_extra_samples_2,fig:img512_extra_samples_3,fig:img512_extra_samples_4,fig:img512_extra_samples_5,fig:img512_extra_samples_6,fig:img512_extra_samples_7,fig:img512_extra_samples_8,fig:img512_extra_samples_9,fig:img512_extra_samples_10}, we show additional one- and two-step samples generated by our ImageNet-512 AYF model. 
We also show the effect of increasing the number of sampling steps of this model in \Cref{fig:ayf_multiNFE_img512}. Only very tiny quality differences are visible.

\begin{figure}[h]
    \centering
    \includegraphics[width=0.9\linewidth]{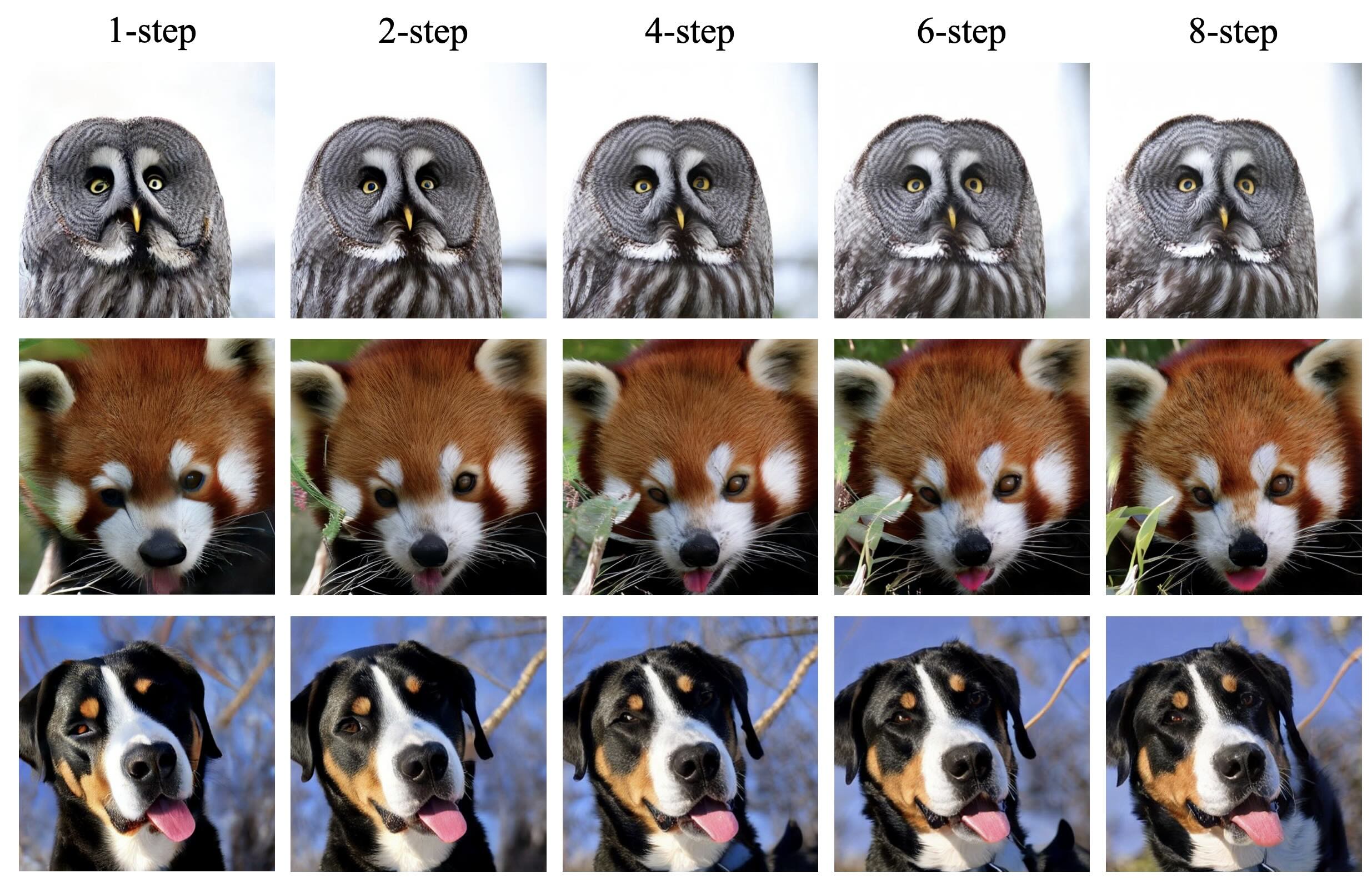}
    \caption{The effect of increasing the number of steps when sampling from the AYF model on ImageNet512 (best viewed zoomed-in).}
    \label{fig:ayf_multiNFE_img512}
\end{figure}

\begin{figure}[h]
    \centering
    \begin{subfigure}{0.9\linewidth}
        \centering
        \includegraphics[width=\linewidth]{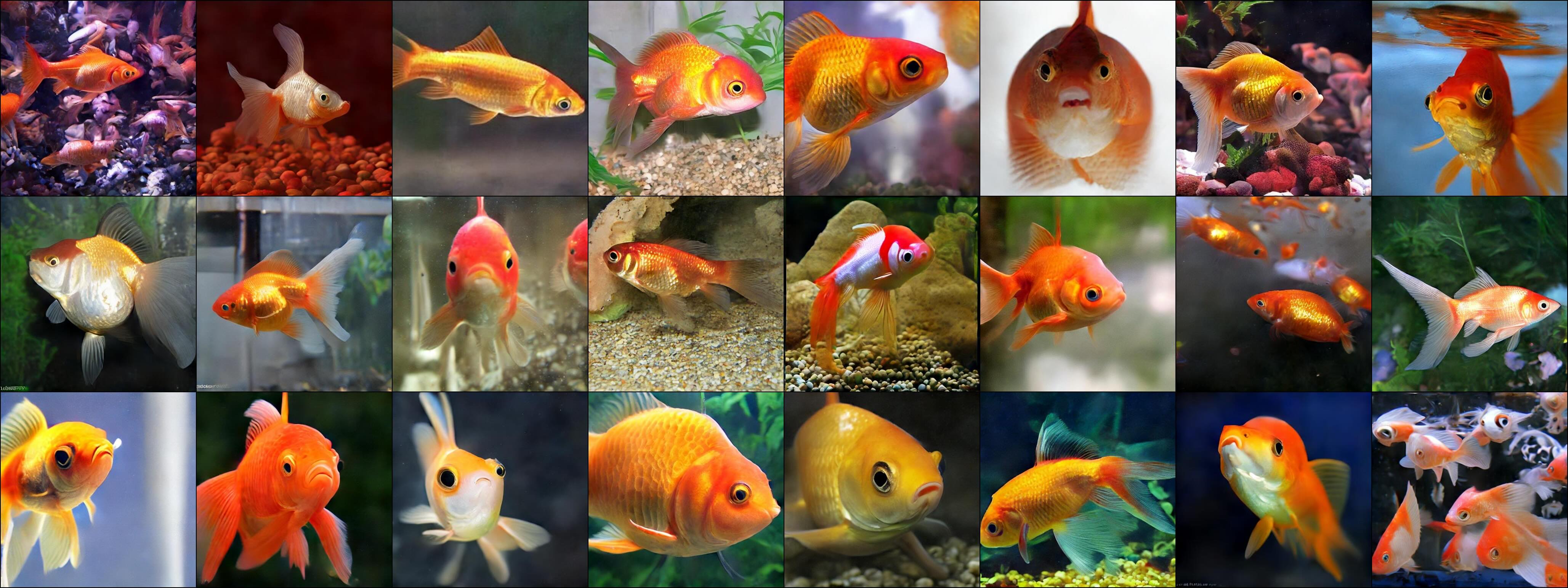}
    \end{subfigure}

    \vspace{3mm}

    \begin{subfigure}{0.9\linewidth}
        \centering
        \includegraphics[width=\linewidth]{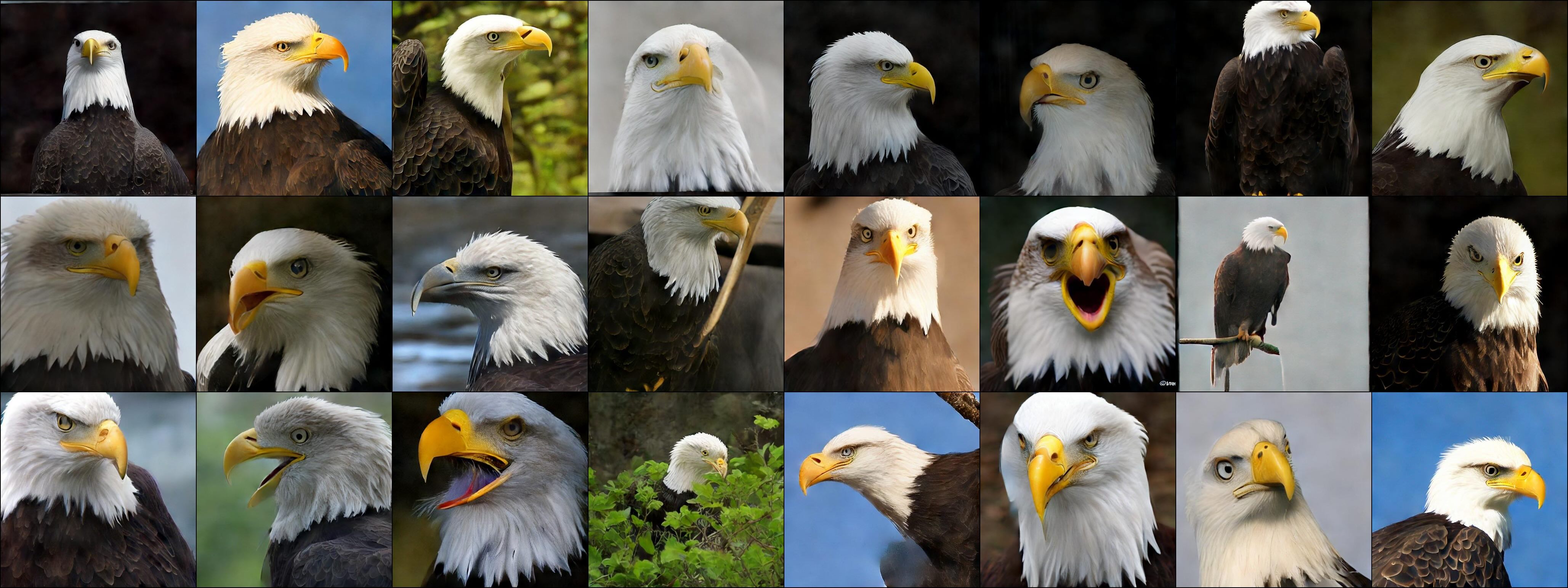}
    \end{subfigure}
    
    \caption{Selected one-step samples generated by our ImageNet512 AYF-S model, shown for classes 1 (goldfish) and 22 (bald eagle).}
    \label{fig:img512_extra_samples_1}
\end{figure}

\begin{figure}[h]
    \centering
    \begin{subfigure}{0.9\linewidth}
        \centering
        \includegraphics[width=\linewidth]{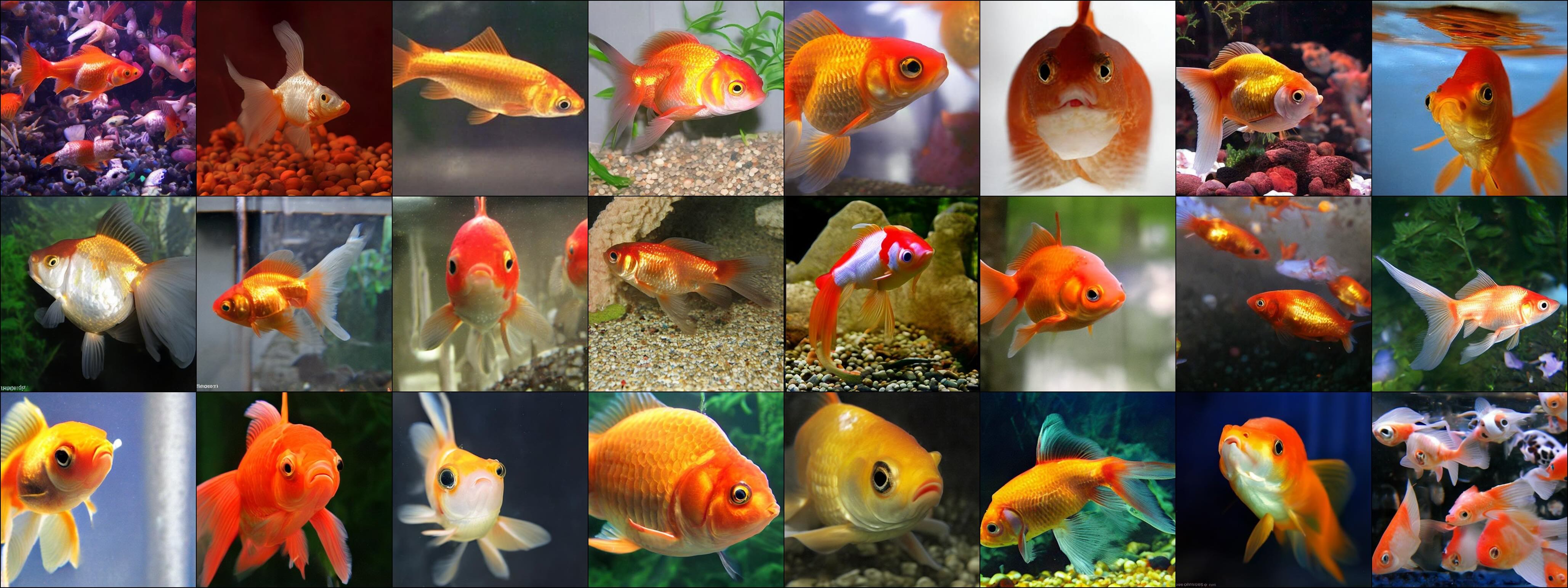}
    \end{subfigure}

    \vspace{3mm}

    \begin{subfigure}{0.9\linewidth}
        \centering
        \includegraphics[width=\linewidth]{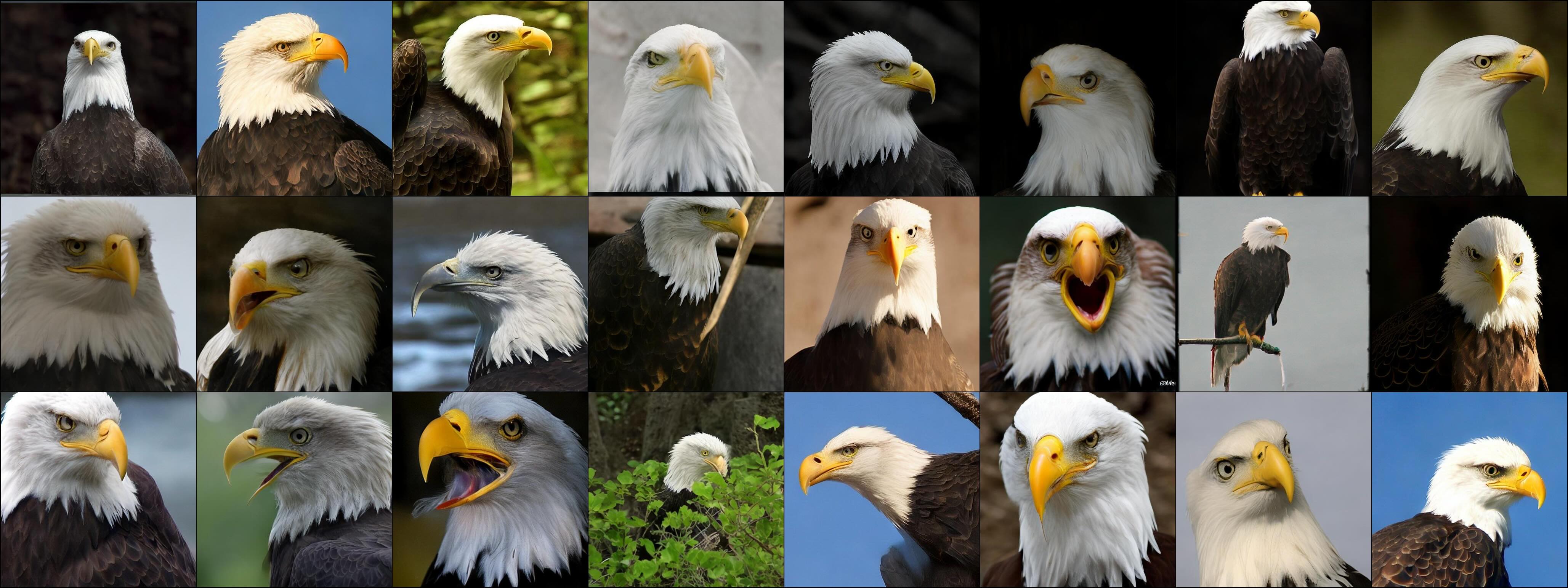}
    \end{subfigure}
    
    \caption{Selected two-step samples generated by our ImageNet512 AYF-S model, shown for classes 1 (goldfish) and 22 (bald eagle).}
    \label{fig:img512_extra_samples_2}
\end{figure}

\begin{figure}[h]
    \centering

    \begin{subfigure}{0.9\linewidth}
        \centering
        \includegraphics[width=\linewidth]{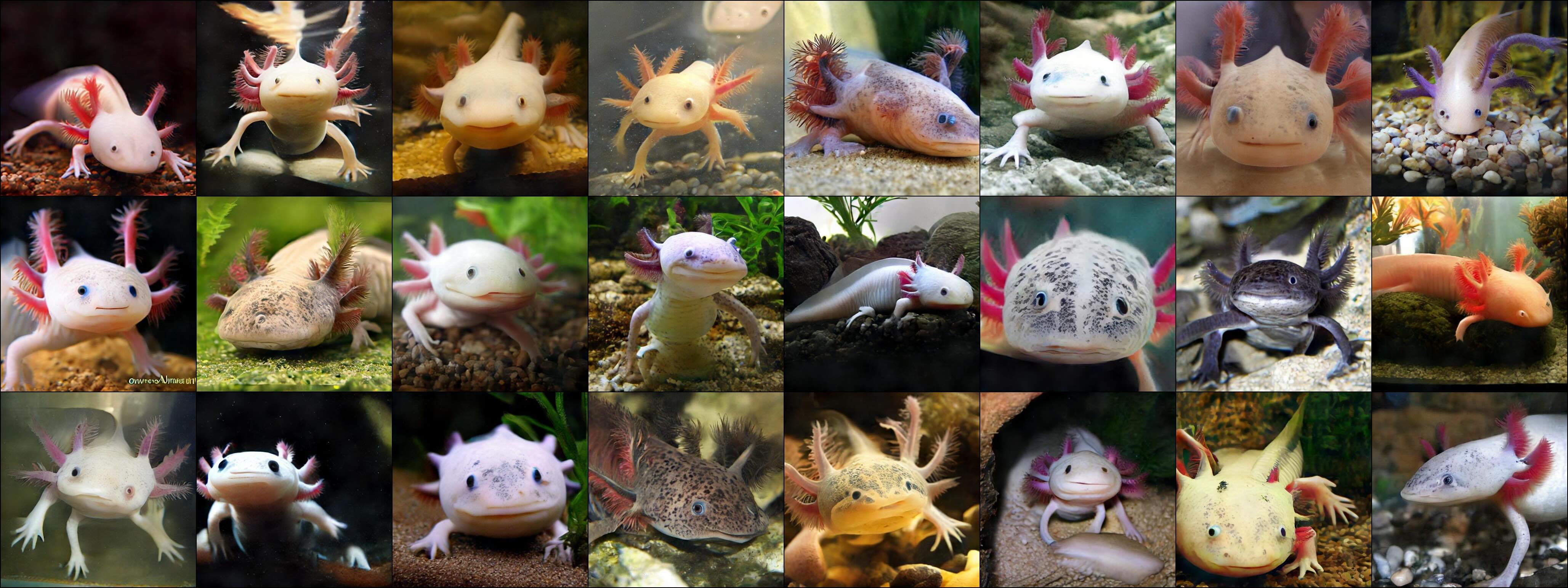}
    \end{subfigure}

    \vspace{3mm}

    \begin{subfigure}{0.9\linewidth}
        \centering
        \includegraphics[width=\linewidth]{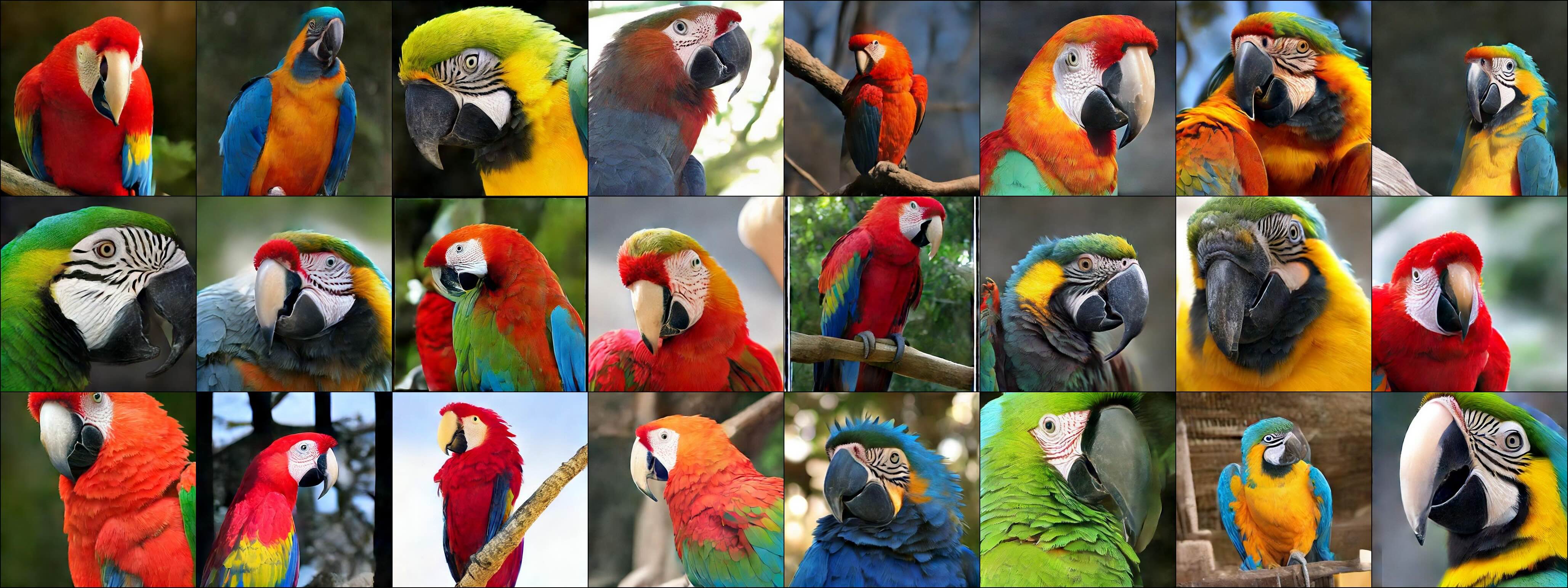}
    \end{subfigure}
    
    \caption{Selected one-step samples generated by our ImageNet512 AYF-S model, shown for classes 29 (axolotl) and 88 (macaw).}
    \label{fig:img512_extra_samples_3}
\end{figure}

\begin{figure}[h]
    \centering

    \begin{subfigure}{0.9\linewidth}
        \centering
        \includegraphics[width=\linewidth]{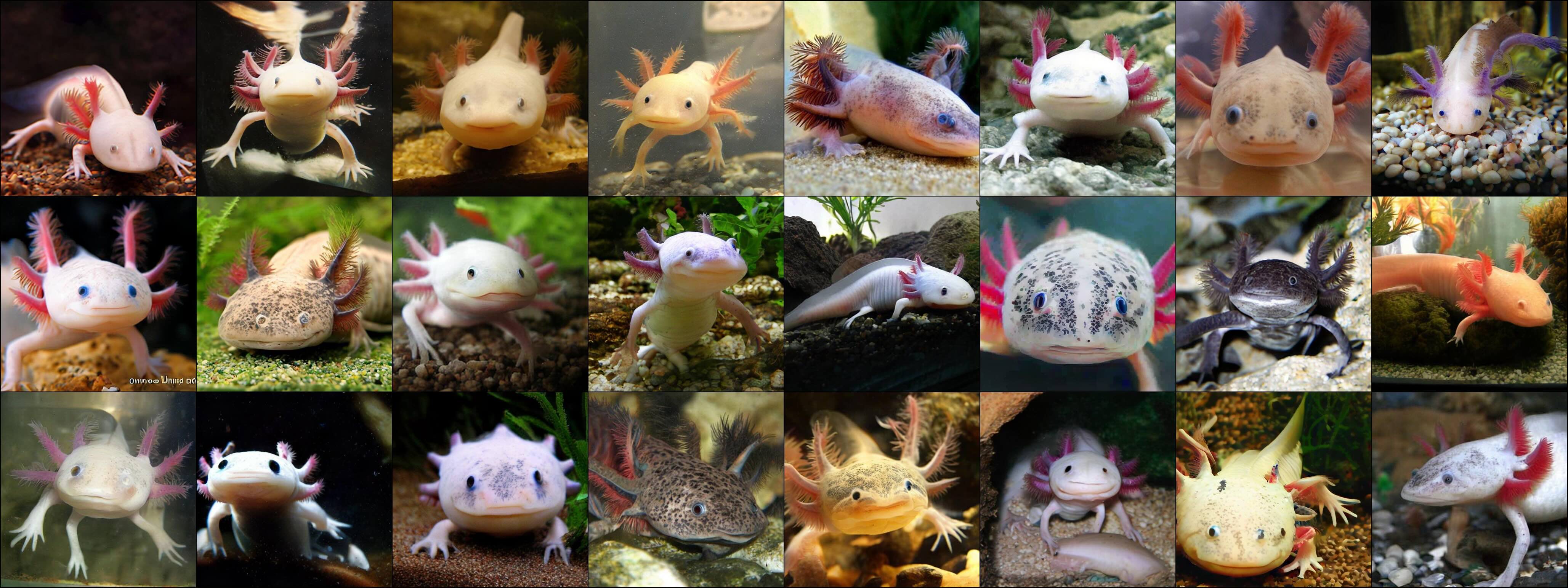}
    \end{subfigure}

    \vspace{3mm}

    \begin{subfigure}{0.9\linewidth}
        \centering
        \includegraphics[width=\linewidth]{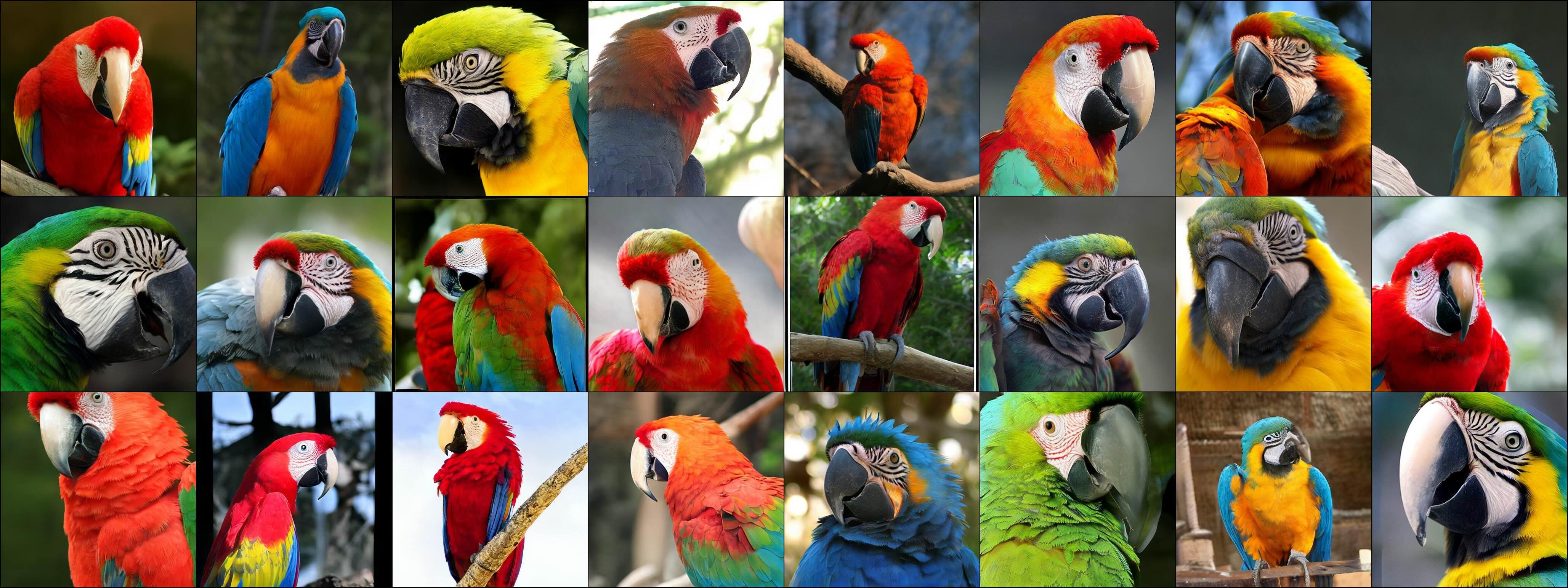}
    \end{subfigure}
    
    \caption{Selected two-step samples generated by our ImageNet512 AYF-S model, shown for classes 29 (axolotl) and 88 (macaw).}
    \label{fig:img512_extra_samples_4}
\end{figure}

\begin{figure}[h]
    \centering
    \begin{subfigure}{0.9\linewidth}
        \centering
        \includegraphics[width=\linewidth]{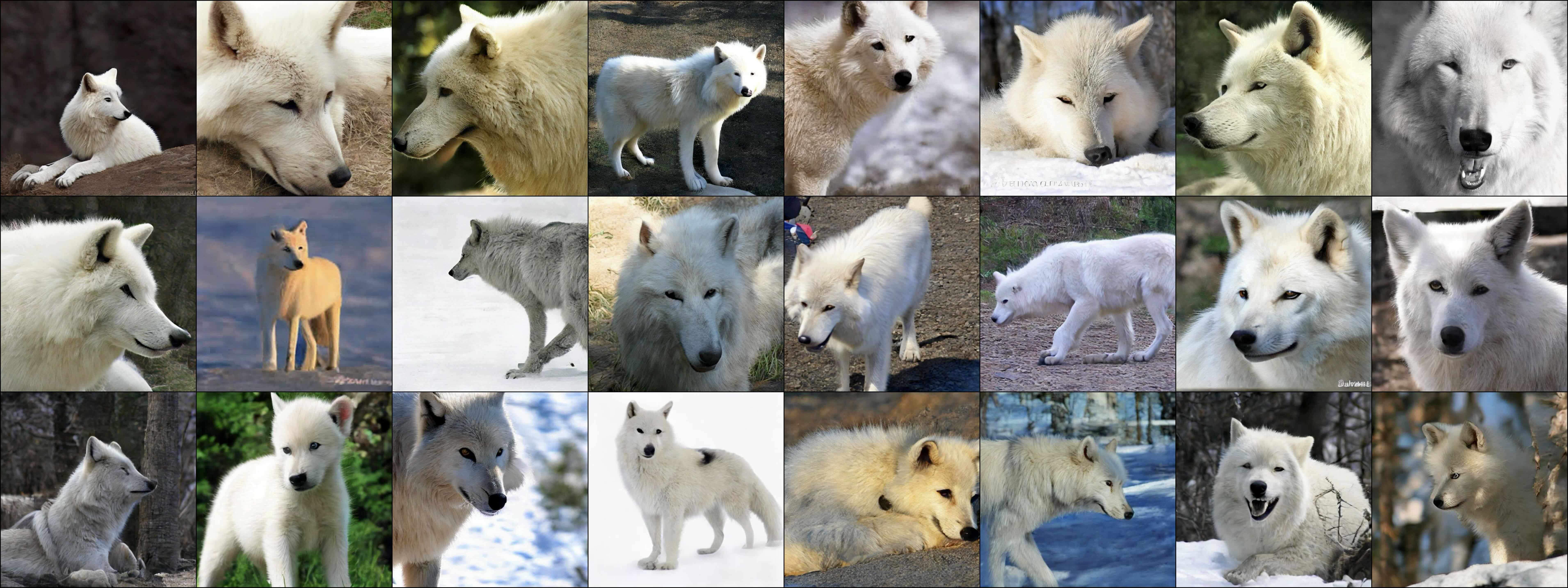}
    \end{subfigure}

    \vspace{3mm}

    \begin{subfigure}{0.9\linewidth}
        \centering
        \includegraphics[width=\linewidth]{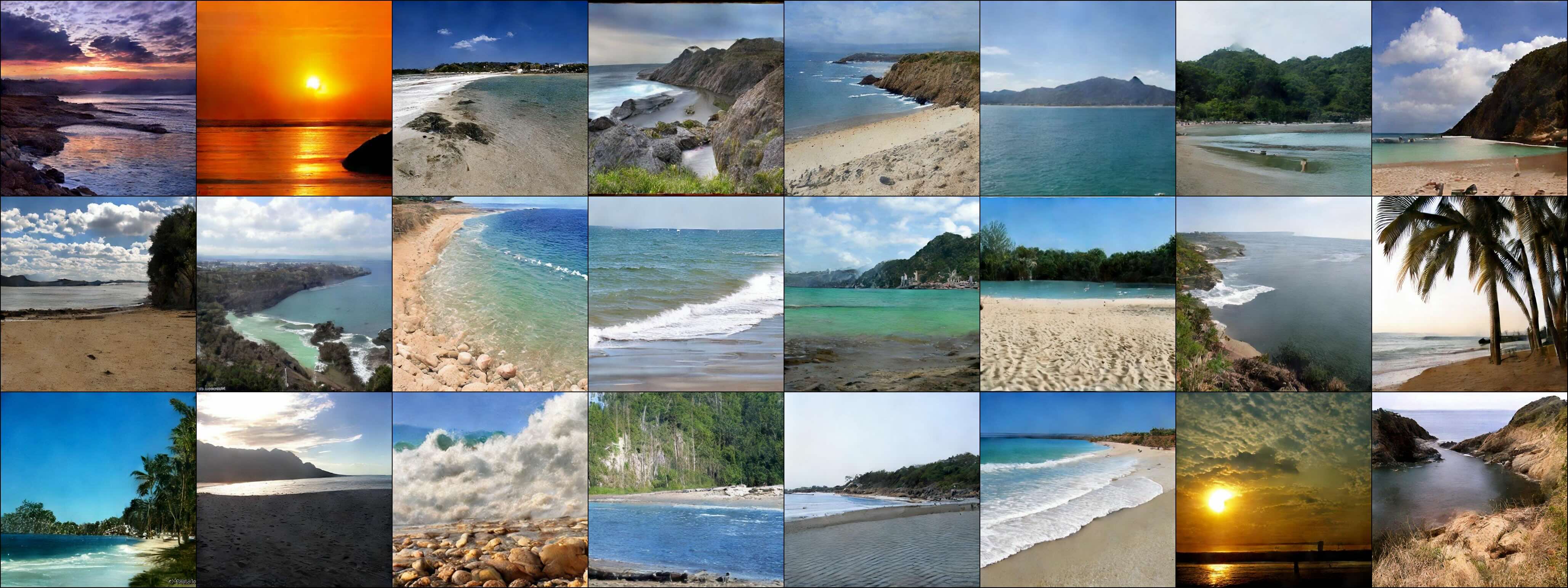}
    \end{subfigure}
    
    \caption{Selected one-step samples generated by our ImageNet512 AYF-S model, shown for classes 270 (white wolf) and 978 (coast).}
    \label{fig:img512_extra_samples_5}
\end{figure}

\begin{figure}[h]
    \centering
    \begin{subfigure}{0.9\linewidth}
        \centering
        \includegraphics[width=\linewidth]{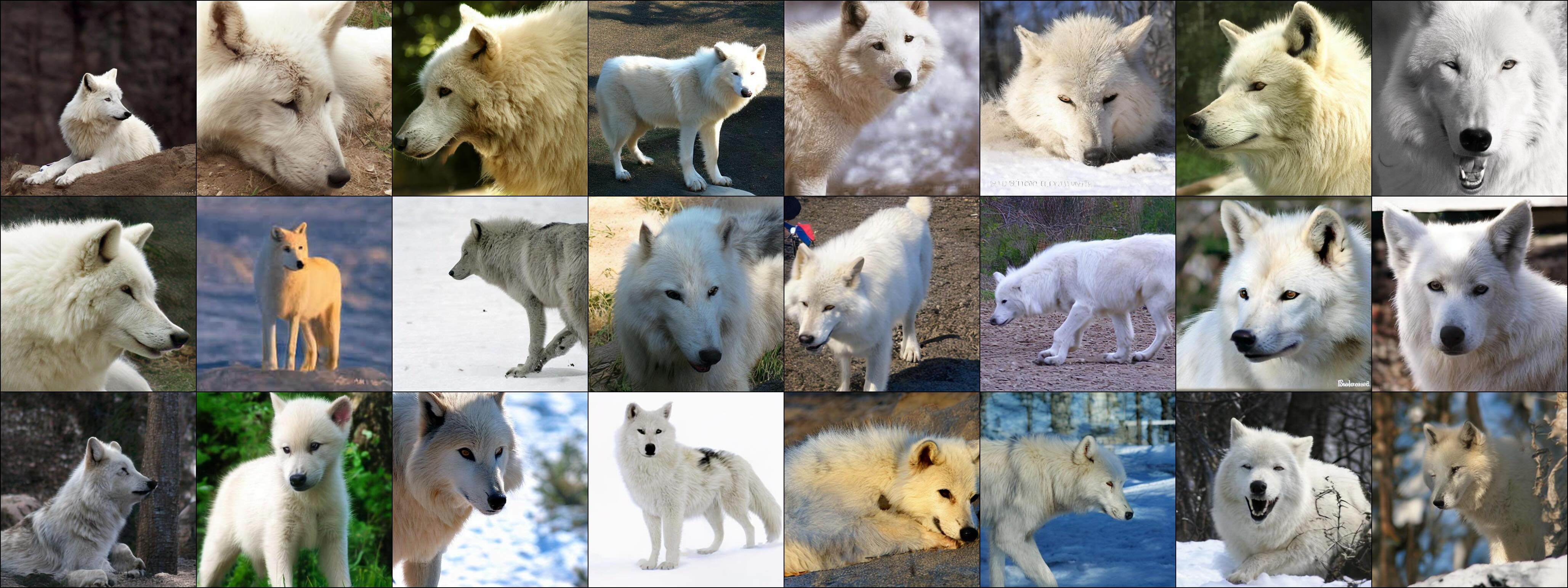}
    \end{subfigure}

    \vspace{3mm}

    \begin{subfigure}{0.9\linewidth}
        \centering
        \includegraphics[width=\linewidth]{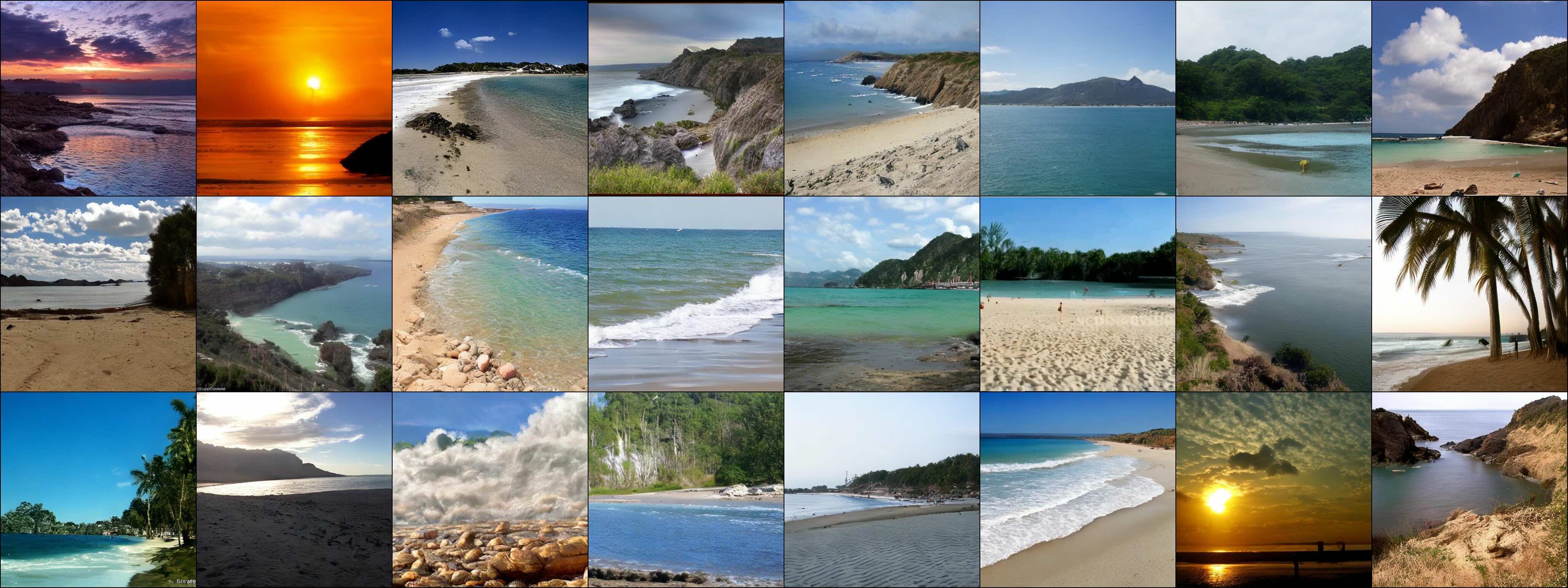}
    \end{subfigure}
    
    \caption{Selected two-step samples generated by our ImageNet512 AYF-S model, shown for classes 270 (white wolf) and 978 (coast).}
    \label{fig:img512_extra_samples_6}
\end{figure}

\begin{figure}[h]
    \centering
    \begin{subfigure}{0.9\linewidth}
        \centering
        \includegraphics[width=\linewidth]{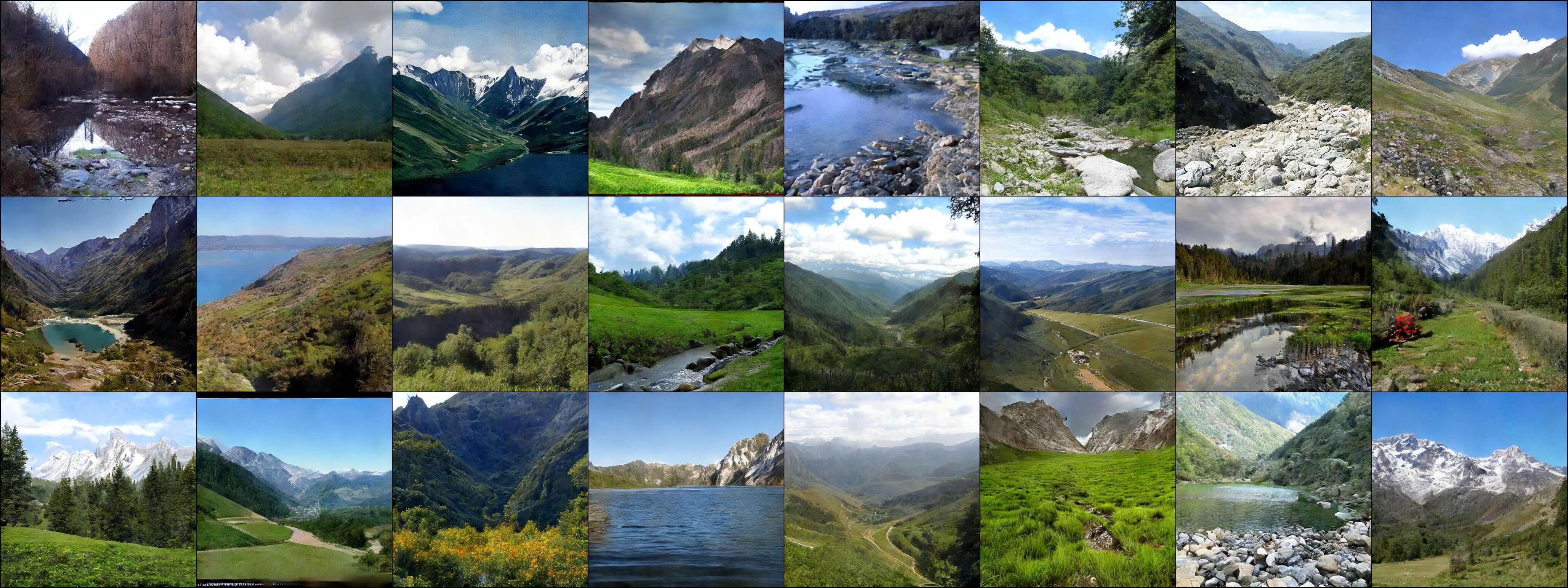}
    \end{subfigure}

    \vspace{3mm}

    \begin{subfigure}{0.9\linewidth}
        \centering
        \includegraphics[width=\linewidth]{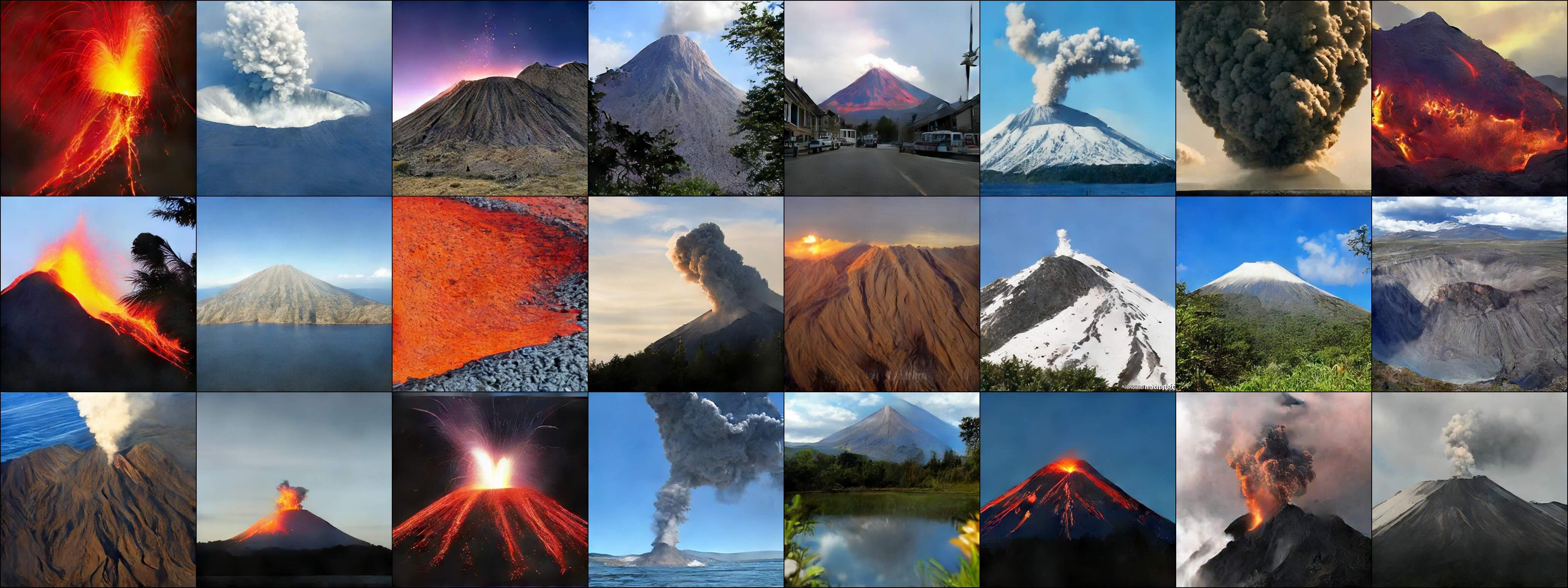}
    \end{subfigure}
    
    \caption{Selected one-step samples generated by our ImageNet512 AYF-S model, shown for classes 979 (valley) and 980 (volcano).}
    \label{fig:img512_extra_samples_7}
\end{figure}

\begin{figure}[h]
    \centering

    \begin{subfigure}{0.9\linewidth}
        \centering
        \includegraphics[width=\linewidth]{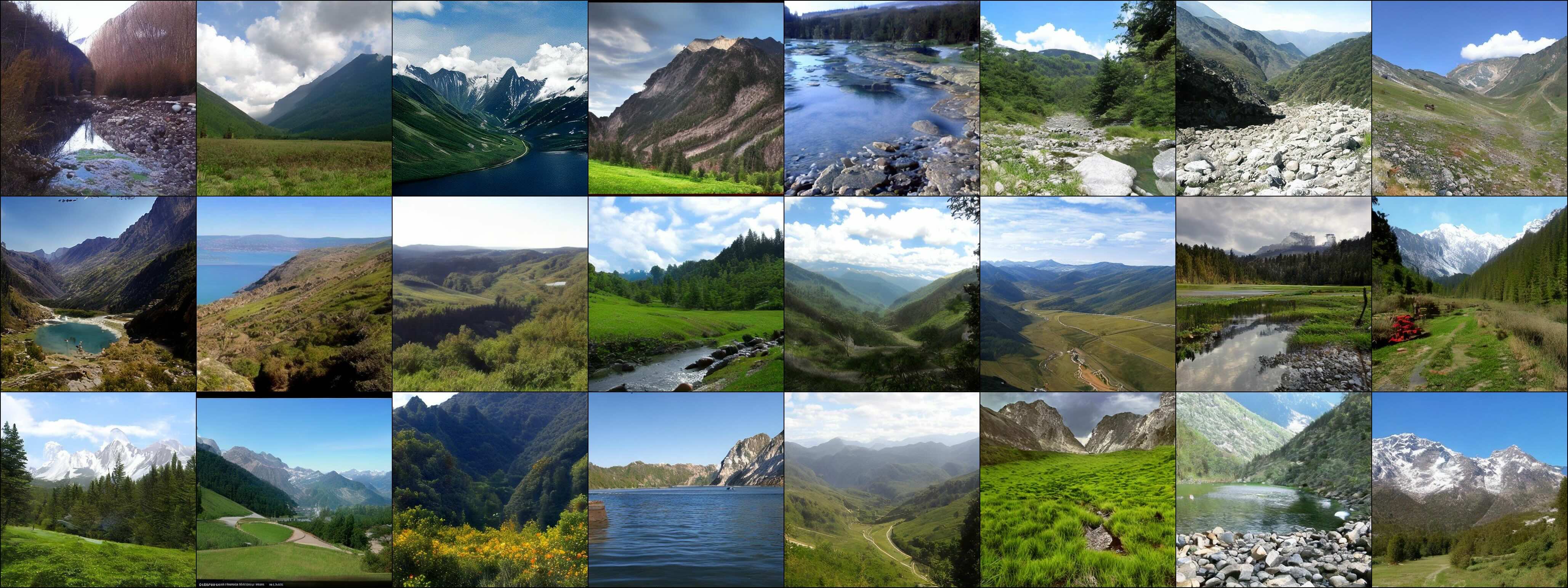}
    \end{subfigure}

    \vspace{3mm}

    \begin{subfigure}{0.9\linewidth}
        \centering
        \includegraphics[width=\linewidth]{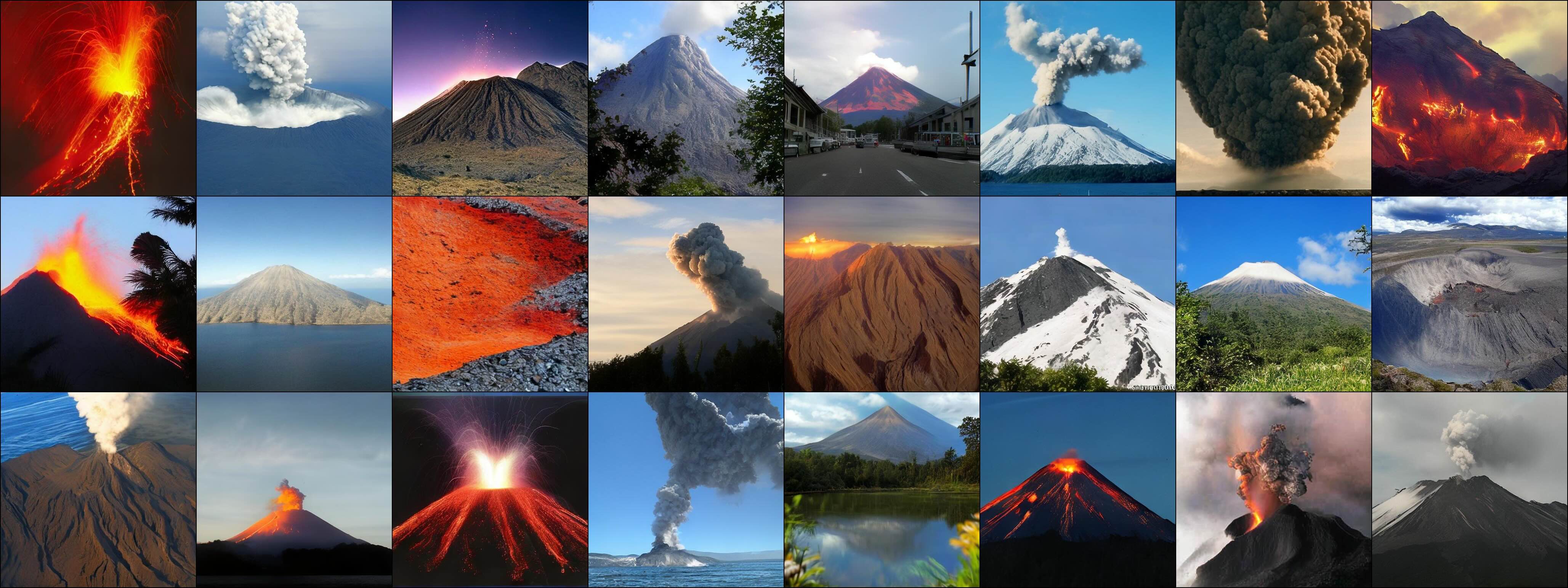}
    \end{subfigure}
    
    \caption{Selected two-step samples generated by our ImageNet512 AYF-S model, shown for classes 979 (valley) and 980 (volcano).}
    \label{fig:img512_extra_samples_8}
\end{figure}

\begin{figure}[h]
    \centering
    \begin{subfigure}{0.9\linewidth}
        \centering
        \includegraphics[width=\linewidth]{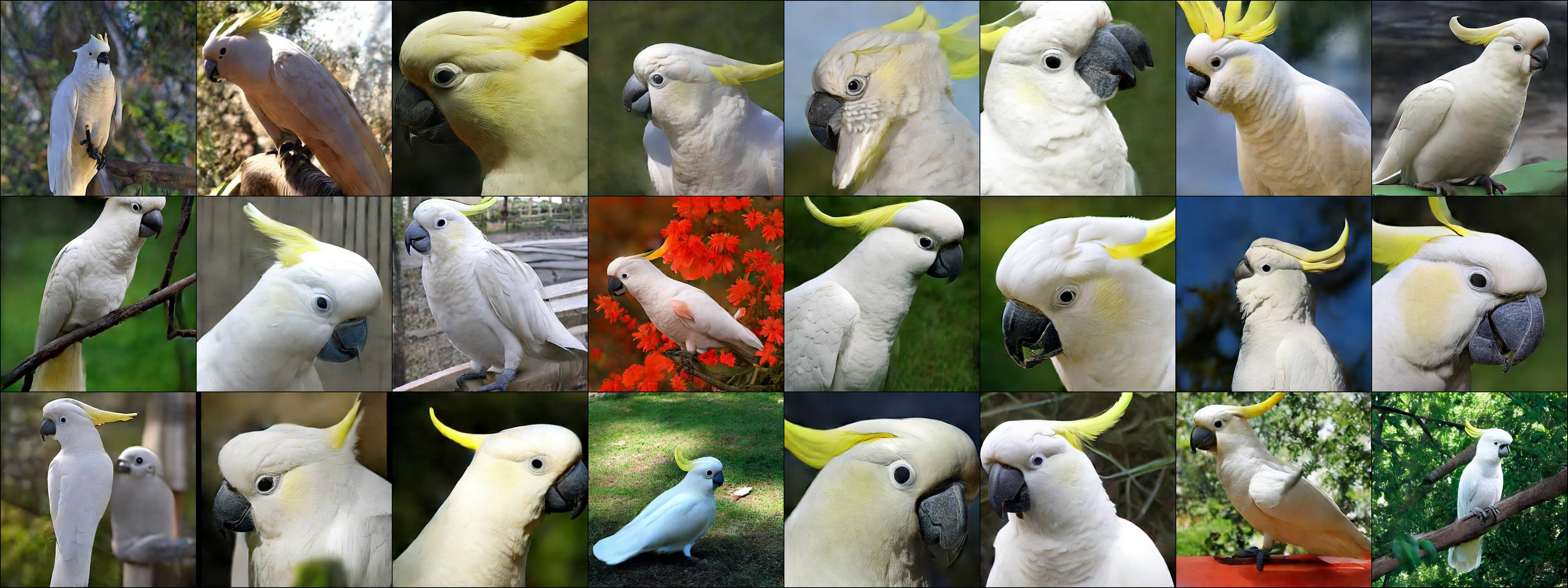}
    \end{subfigure}

    \vspace{3mm}

    \begin{subfigure}{0.9\linewidth}
        \centering
        \includegraphics[width=\linewidth]{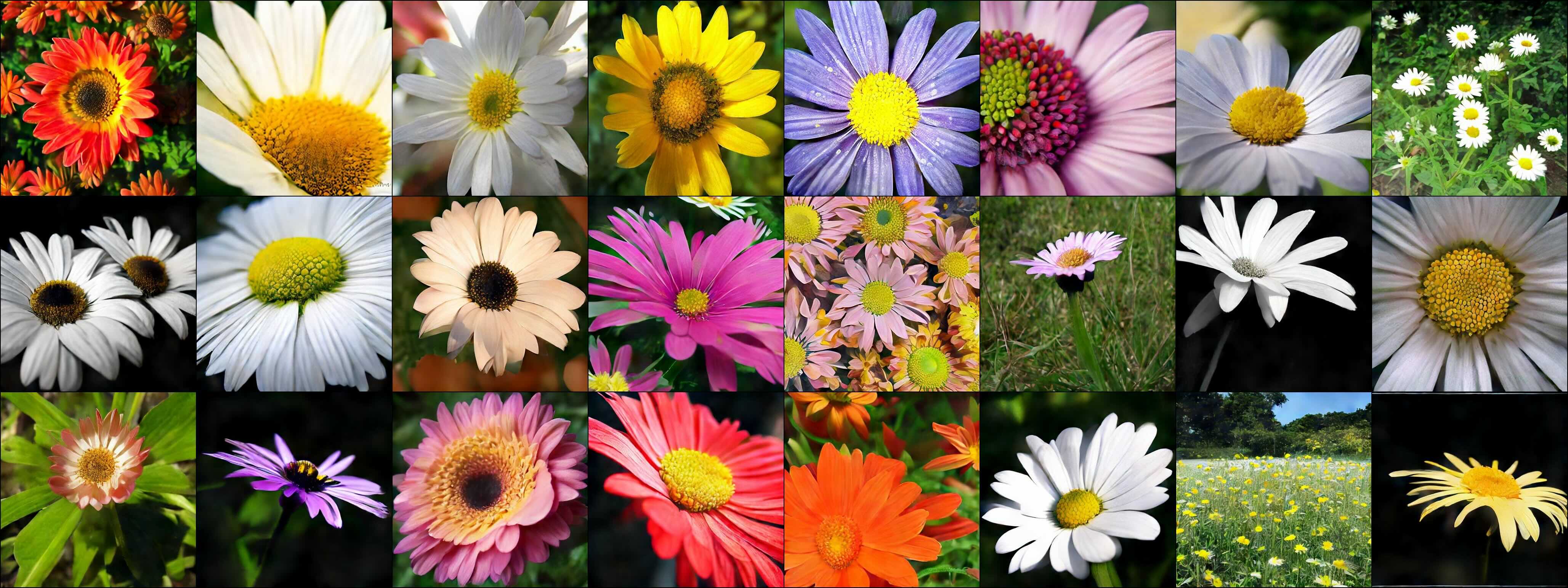}
    \end{subfigure}
    
    \caption{Selected one-step samples generated by our ImageNet512 AYF-S model, shown for classes 89 (sulphur-crested cockatoo) and 985 (daisy).}
    \label{fig:img512_extra_samples_9}
\end{figure}

\begin{figure}[h]
    \centering
    \begin{subfigure}{0.9\linewidth}
        \centering
        \includegraphics[width=\linewidth]{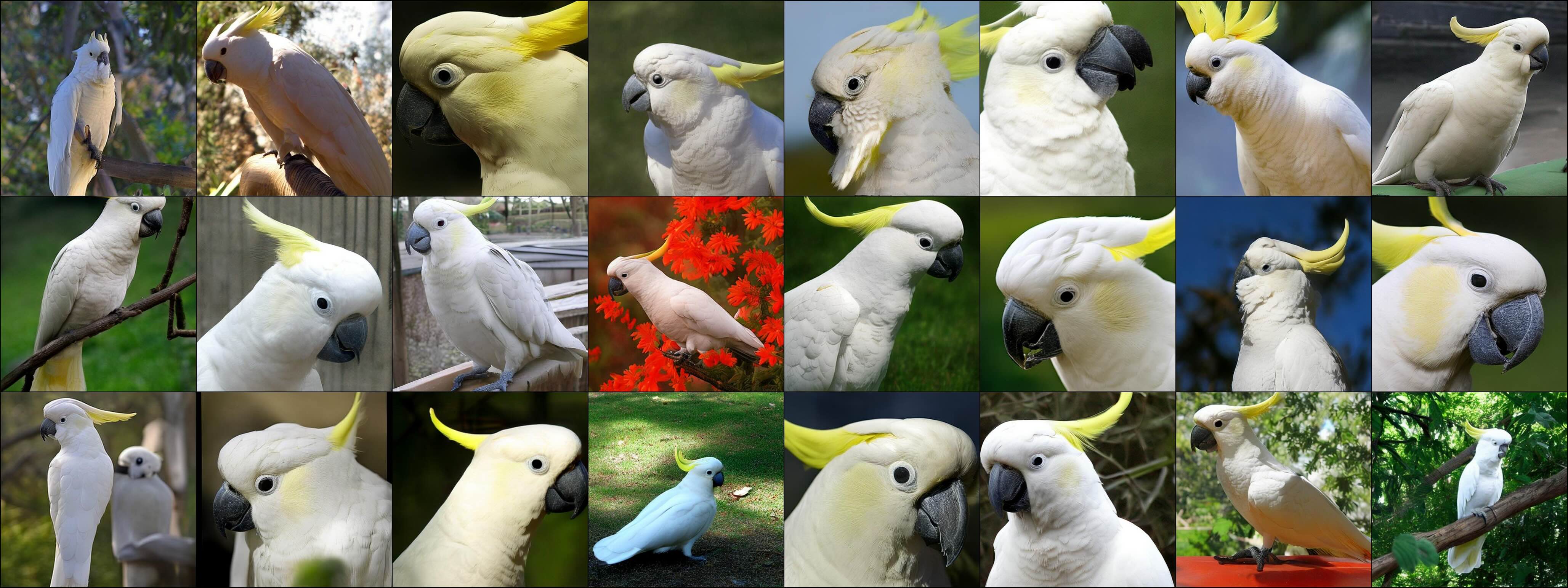}
    \end{subfigure}

    \vspace{3mm}

    \begin{subfigure}{0.9\linewidth}
        \centering
        \includegraphics[width=\linewidth]{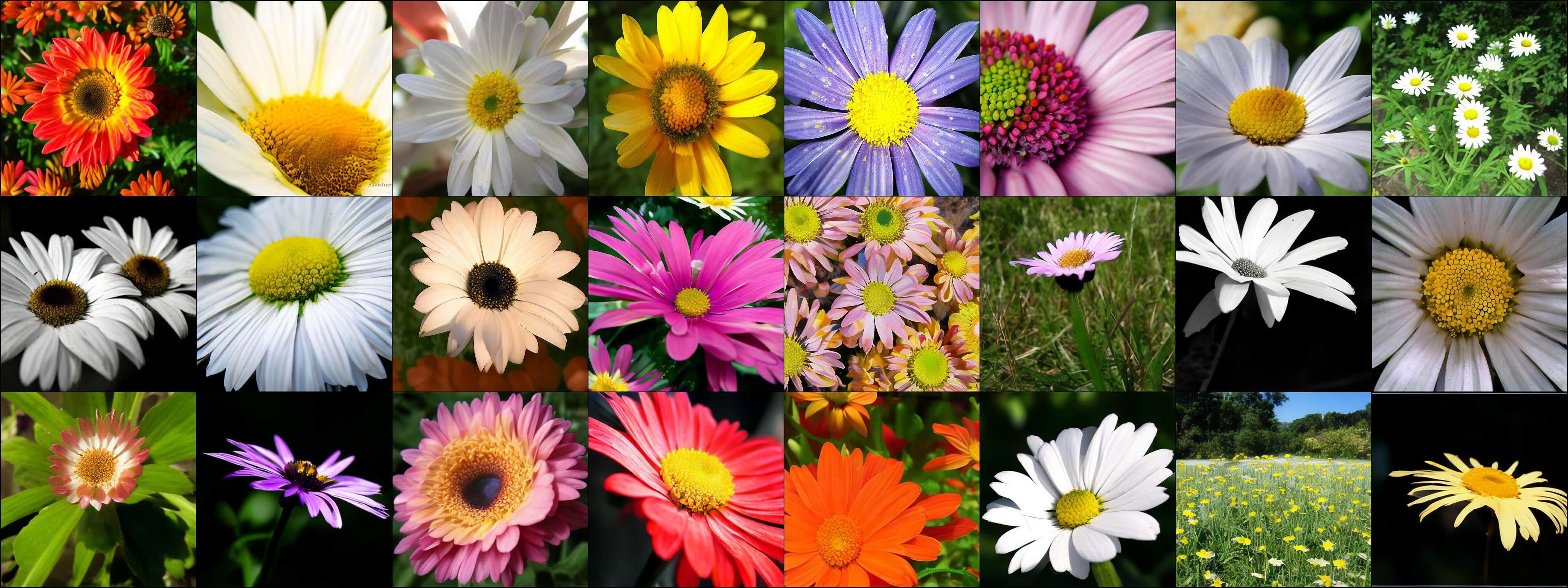}
    \end{subfigure}
    
    \caption{Selected two-step samples generated by our ImageNet512 AYF-S model, shown for classes 89 (sulphur-crested cockatoo) and 985 (daisy).}
    \label{fig:img512_extra_samples_10}
\end{figure}

\subsection{ImageNet-64}
In \Cref{fig:img64_sample_dump_1step,fig:img64_sample_dump_2step}, we show additional one- and two-step samples generated by our ImageNet-64 AYF model. 

\begin{figure}[h]
    \centering
    \includegraphics[width=\linewidth]{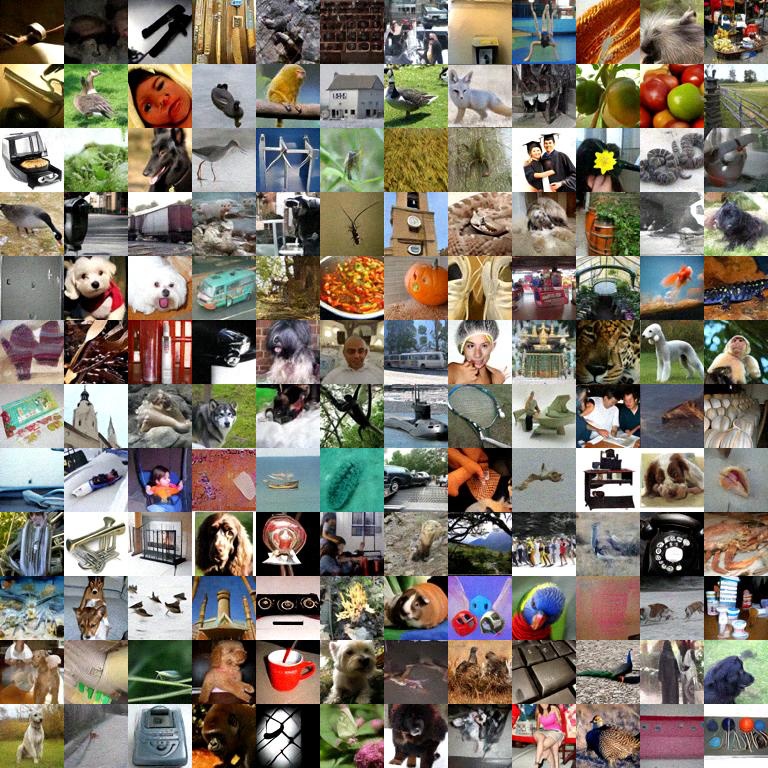}
    \caption{Uncurated one-step samples generated by our ImageNet64 AYF-S model, with randomly chosen class labels.}
    \label{fig:img64_sample_dump_1step}
\end{figure}

\begin{figure}[h]
    \centering
    \includegraphics[width=\linewidth]{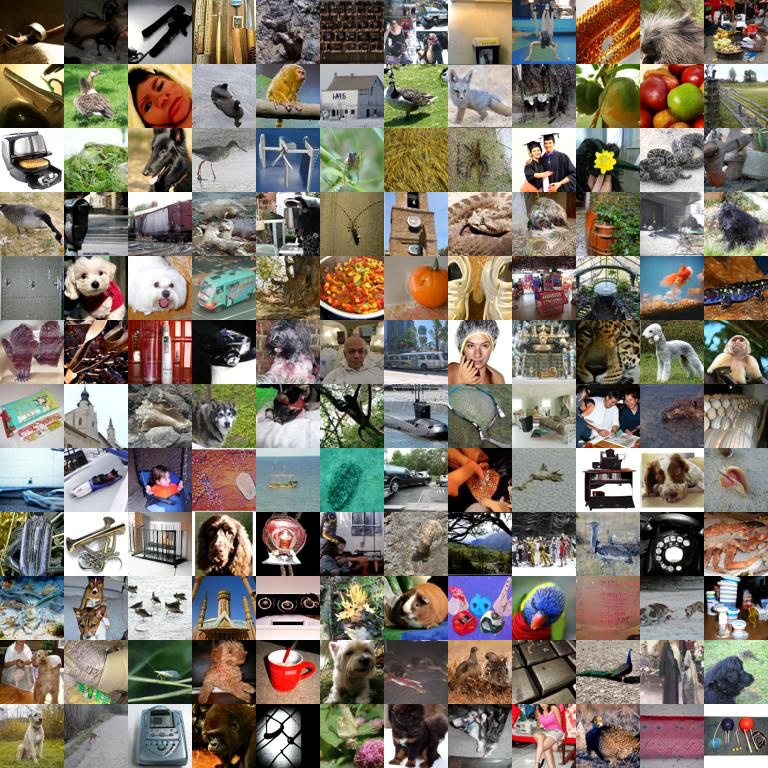}
    \caption{Uncurated two-step samples generated by our ImageNet64 AYF-S model, with randomly chosen class labels.}
    \label{fig:img64_sample_dump_2step}
\end{figure}

\clearpage

\section{Text Prompts for Generated Images}\label{app:prompts}
Here we will list all the text prompts used to generate the images in \Cref{fig:teaser,fig:flux_qualitative,fig:flux_additional_samples}. 
\begin{itemize}
    \item 
    "A surreal and dreamlike scene featuring a small island with a lush green mountain encased in a giant, translucent sphere. The sphere reflects warm golden and green tones, blending harmoniously with the soft orange hues of a serene sunset. The scene is set on a calm, reflective body of water, with gentle ripples creating perfect reflections of the sphere and the sky. A lone wooden boat floats peacefully in the foreground, adding a sense of scale and solitude. Distant mountains frame the horizon, completing the ethereal and otherworldly atmosphere."

    \item "A dark, atmospheric forest shrouded in mist, with towering, shadowy trees. At the center stands a hooded warrior clad in black armor and a flowing cloak, holding a massive, glowing crimson sword. The blade emits an intense, fiery red light that contrasts sharply with the cool blue tones of the misty forest. Scattered red flowers grow on the forest floor, their vivid color echoing the sword's glow. Fiery red embers float through the air, adding to the ominous and mystical mood. The scene is cinematic and otherworldly, with a strong sense of power and mystery."

    \item 
    "A mystical forest bathed in moonlight, with glowing blue and green bioluminescent plants. A gentle mist rolls through the trees, and faint magical runes glow on ancient stone pillars scattered throughout the scene. In the background, a shimmering waterfall cascades into a crystal-clear pool. The atmosphere is serene, with soft light beams piercing through the canopy above."

    \item 
    "A hyper-realistic photograph of a delicate yet powerful creature, a rabbit with the striped fur of a tiger, combined with the soft, powdery wings and antennae of a moth. The rabbit’s body is small and fluffy, with bold orange and black stripes covering its fur, and its back is adorned with large, soft moth wings that shimmer in muted tones. The scene is set in a moonlit garden, with the creature nestled among flowers, its wings gently fluttering as it sniffs at a bloom."

    \item 
    "A luminous koi fish with translucent fins and shimmering galaxy-like patterns on its scales, gracefully swimming in a mystical pond under the soft light of a glowing full moon. The setting features delicate lotus flowers, glowing orbs, and vibrant foliage with intricate golden details weaving through the scene. The water reflects a celestial ambiance, with tiny stars and glowing accents creating a dreamlike atmosphere. The composition is highly detailed, with rich, deep colors of navy and gold contrasted by the vivid reds and oranges of the koi and flowers. The overall style combines fantasy realism with a touch of ornate elegance."

    \item "A serene bald monk in vibrant orange robes meditating and levitating above a pristine swimming pool, with a modern minimalist house and lush trees in the background. The atmosphere is calm and sunny, with bright daylight casting clean shadows. The scene captures a surreal and harmonious balance of spirituality and luxury, with vibrant colors, sharp details, and a reflective pool surface."

    \item "A cinematic and hyper-detailed and hyper-realistic portrait photograph captured in the rugged beauty of the scottish moor, a gorgeous young woman with wild, fiery red hair sits gracefully under a large, gnarled, ancient tree, leaning against the rough bark, her vibrant locks catching the golden hour light and flowing freely around her shoulders, her piercing green eyes sparkle with life and joy as she smiles warmly, her expression radiates youthful energy."

    \item 
    "A weathered, yellow robotic cat kneels gently in a muddy alley, its mechanical paws tenderly holding a small, pure white rabbit. The robot’s design is a mix of sleek and worn, with scratches and rust marks telling a story of resilience. Its glowing blue eyes exude warmth and curiosity as it carefully examines the rabbit, which has a vibrant yellow flower sprouting from its back. Around them, abandoned concrete buildings loom in the background, shrouded in a soft mist, while faint droplets of rain fall, creating tiny ripples in the puddles. The scene is quiet and melancholic, with a subtle touch of hope."

    \item 
    "A lone samurai stands at the edge of a crimson maple forest, his silhouette dark against the golden light of dusk. Fallen leaves swirl around him as a gentle breeze carries the scent of autumn through the air. A narrow wooden bridge stretches across a tranquil koi pond, its surface reflecting the fiery hues of the trees above. In the distance, the curved rooftop of a hidden temple peeks through the dense foliage, bathed in the last light of the setting sun."

    \item "A carnival floats above the clouds, its colorful tents and twisting roller coasters suspended in midair. The Ferris wheel glows with radiant neon lights, each cabin holding a different surreal sight—one with a floating goldfish, another with a miniature city inside. Hot air balloons drift lazily between the attractions, carrying visitors who gaze down at the soft, cotton-like clouds below. The air is filled with the sounds of distant laughter and the smell of caramel popcorn carried by the wind."

    \item "A warm and inviting cottage interior, lit by a crackling fireplace. The room features rustic wooden furniture, a patchwork quilt on a cozy armchair, and shelves lined with books and potted plants. Sunlight streams through a window, casting soft golden light across the space. A cat naps on a woven rug in front of the fire."

    \item "A charming, bright-eyed octopus sports a miniature, pointed witch's hat, its brim adorned with a delicate, sparkling broom. The octopus's eyes, shining like two bright, black jewels, sparkle with excitement as it gazes out from beneath its witchy disguise. One of its tentacles grasps a tiny, wooden broom, its bristles perfectly proportioned to the octopus's miniature size. Another tentacle holds a cauldron-shaped candy pail, its metal surface adorned with a miniature, glowing jack-o'-lantern face. The atmosphere is playful and lighthearted, with a hint of spooky, Halloween fun. The octopus's very presence seems to radiate a sense of joyful, mischievous energy, as if it's ready to cast a spell of delight on all who encounter it. highest-Quality, intricate details, visually stunning, Masterpiece"

    \item "A baby dragon with vibrant, golden-yellow scales sits on a forest floor, its enormous, glossy black eyes sparkling with curiosity. Its tiny horns curve gently upward, and a soft tuft of orange fur runs along its head, adding to its charm. The dragon’s delicate claws rest on its chest as it tilts its head slightly, radiating an innocent, playful energy. The dappled sunlight filters through the trees, casting soft shadows around the creature while emphasizing the intricate textures of its scales and the gentle details of the forest floor."

    \item  "A towering, moss-covered golem with glowing blue eyes trudges gently through an enchanted forest. Tiny birds nest in the cracks of its ancient stone body, and wildflowers bloom along its shoulders. Each step it takes leaves behind shimmering footprints, as if the earth itself is waking beneath it. A small, mischievous fairy perches on its shoulder, whispering secrets into its ear."

    \item "A celestial bard with flowing, star-speckled robes strums a crystalline harp that hums with the music of the cosmos. Their silver hair drifts as if caught in an eternal breeze, and their eyes shine like twin galaxies. As they play, glowing constellations dance around them, weaving stories of forgotten legends. The air vibrates with an ethereal melody, bending reality itself to their song."

    \item "A tiny mushroom spirit with a cap like a spotted toadstool scurries through a moonlit meadow, carrying a lantern made from a firefly trapped in a crystal jar. Their tiny hands clutch a satchel filled with enchanted spores, which they scatter as they run, causing luminescent fungi to sprout in their wake. The night air is filled with a soft, magical glow as they embark on their secret midnight errand."

    \item "A mischievous clockwork cat made of brass and polished wood prowls through an ancient library, its glowing emerald eyes scanning the towering bookshelves. Gears whir softly as it moves, its tail ticking like a pocket watch. Whenever it pounces, time seems to stutter for just a fraction of a second, as if reality itself is playing along with its game."

    \item "A wandering candy alchemist, dressed in a coat of shimmering, sugar-spun fabric, mixes glowing elixirs in crystal vials. Their hair is a cascade of molten caramel, and their eyes sparkle like rock candy. Each step they take leaves behind a trail of edible blossoms, and their belt is lined with tiny jars of potions that fizz, swirl, and pop with magical flavors."

    \item "A towering jellyfish queen glides gracefully through an underwater kingdom, her translucent tendrils trailing behind her like an elegant gown. Bioluminescent patterns ripple across her ethereal body, pulsing in sync with the deep ocean currents. Tiny fish swim in mesmerizing formations around her, drawn to the soft, hypnotic glow that follows her every movement."

    \item "A giant, four-armed baker made entirely of gingerbread hums a deep, rumbling tune as he kneads dough in a cozy, fire-lit kitchen. His icing-swirled eyebrows lift in delight as he pulls a tray of enchanted pastries from the oven—each one shaped like a tiny, dancing creature. The warm scent of cinnamon and sugar fills the air as his candy-button eyes twinkle with pride."

    \item "In the heart of an ancient cathedral, Excalibur rests upon an altar of marble, encased in shimmering, ethereal light. The stained-glass windows cast multicolored beams across the blade, illuminating the intricate runes carved into its steel. A quiet reverence fills the chamber—no one dares to approach, for legends say that only the true king may grasp its hilt without being turned to dust."

    \item "A mischievous minion transformed into a dark side warrior, inspired by Darth Vader, stands menacingly in a dimly lit chamber. Its yellow, cylindrical body is painted matte black, with glossy red accents glowing faintly. It wears a flowing black cape, a custom helmet with sharp edges and a single menacing goggle-eye glowing red. In its hand, a tiny yet powerful red lightsaber hums with energy. The minion’s expression is a mix of determination and its usual playful mischief, as if ready to wreak havoc while still being adorably chaotic. The dark background is illuminated by faint red and blue lights, evoking the ominous atmosphere of a Sith lair."

    \item "A cunning anthropomorphic wolf wearing a coat made of sheep's wool stands amidst a flock of sheep. The wolf's face is sharp and expressive, with piercing golden eyes and a sly, toothy grin. The wool coat is textured and fluffy, blending seamlessly into the surrounding flock, while curved ram-like horns add an unusual twist to the disguise. The sheep in the background look curious but unaware of the predator among them. The scene is set in a misty, overcast countryside, with soft lighting emphasizing the eerie yet whimsical mood."

    \item "A bustling cyberpunk metropolis at night, filled with towering neon-lit skyscrapers, hovering vehicles, and busy streets lined with holographic advertisements. The city is alive with vibrant pink, purple, and blue lights reflecting off wet pavement. People in futuristic attire walk below, while drones fly overhead. A giant screen displays an AI figure speaking to the crowd."

    \item "A vibrant, cherry-red 1970s muscle car, gleaming under a warm afternoon sun. Chrome accents sparkle, reflecting the azure sky. The car is parked on a winding asphalt road, surrounded by lush green countryside. Show the car's powerful engine and classic design details. Capture a sense of nostalgia and freedom."
\end{itemize}

\section{Licenses}
Our models and code are built upon the following codebases and datasets:
\begin{itemize}
    \item \textbf{EDM2} (\url{https://github.com/NVlabs/edm2}): Used for our ImageNet experiments. Licensed under CC BY-NC-SA 4.0.
    \item \textbf{StyleGAN3} (\url{https://github.com/NVlabs/stylegan3}): We use its metric calculation logic to compute recall scores. Licensed under the NVIDIA Source Code License.
    \item \textbf{StyleGAN2} (\url{https://github.com/NVlabs/stylegan2}): We use its discriminator implementation for our adversarial finetuning experiments. Licensed under the NVIDIA Source Code License. 
    \item \textbf{Diffusers} (\url{https://github.com/huggingface/diffusers}): Used for our text-to-image experiments. Licensed under the Apache License 2.0. We fine-tune a LoRA on top of \texttt{FLUX.1-dev}, which is under a proprietary non-commercial license (\url{https://huggingface.co/black-forest-labs/FLUX.1-dev/blob/main/LICENSE.md}).
    \item \textbf{Text-to-Image-2M Dataset} (\url{https://huggingface.co/datasets/jackyhate/text-to-image-2M}): Used to train our distilled text-to-image LoRAs. Licensed under the MIT License.
    \item \textbf{ImageNet Dataset}: Used for our main experiments. Distributed under a non-commercial research license.
\end{itemize}

\end{document}